\documentclass[preprint, 12pt]{elsarticle}
\journal{elsevier}
\usepackage{xr}
\usepackage{array,bm,amsmath,amssymb,mathrsfs,epsfig,epstopdf}
\usepackage{amsthm}
\usepackage{setspace} 
\usepackage{CJK}
\usepackage{color}
\usepackage{lscape}
\usepackage{verbatim}
\usepackage{enumerate}
\usepackage{booktabs}
\usepackage{threeparttable}
\usepackage{multicol}
\usepackage{longtable}
\usepackage{multirow}
\usepackage{bm}
\usepackage{bbm}
\usepackage{cases}
\usepackage{caption}
\usepackage{natbib}
\usepackage{algorithm}
\usepackage{algpseudocode}
\usepackage{subcaption}
\usepackage{makecell}

\usepackage[table]{xcolor}
\newcommand{\best}[1]{\textbf{#1}}
\newcommand{\competitive}[1]{#1$^\dagger$} 

\usepackage{geometry}
\usepackage{framed}
\usepackage[pdftex,bookmarksnumbered,bookmarksopen,colorlinks, citecolor=blue,linkcolor=blue,urlcolor=blue]{hyperref}
\usepackage{graphicx}
\usepackage{amsmath}
\usepackage{amsfonts}
\usepackage{float}
\usepackage{ragged2e}
\usepackage[figuresright]{rotating}
\usepackage{mathtools}
\usepackage{pifont}
\usepackage{indentfirst}  
\usepackage{listings}
\definecolor{codegreen}{rgb}{0,0.6,0}
\definecolor{codegray}{rgb}{0.5,0.5,0.5}
\definecolor{codepurple}{rgb}{0.58,0,0.82}
\definecolor{backcolour}{rgb}{0.95,0.95,0.92}
\lstdefinestyle{mystyle}{
    backgroundcolor=\color{backcolour},   
    commentstyle=\color{codegreen},
    keywordstyle=\color{magenta},
    numberstyle=\tiny\color{codegray},
    stringstyle=\color{codepurple},
    basicstyle=\ttfamily\footnotesize,
    breakatwhitespace=false,         
    breaklines=true,                 
    captionpos=b,                    
    keepspaces=true,                 
    numbers=left,                    
    numbersep=5pt,                  
    showspaces=false,                
    showstringspaces=false,
    showtabs=false,                  
    tabsize=2
}

\newcommand\Vector[1]{\boldsymbol{#1}}

\newcommand\vs{{\Vector{s}}}

\newtheorem{theorem}{Theorem}[section]

\newtheorem{definition}{Definition}[section]

\begin{document}
\title{\bf A Unified Detection Framework for AI-Related Content and Artifacts}

\author[1]{Xifeng Zhang}
\ead{cnxifeng9819@163.com}

\author[1]{Tao Hu}
\ead{hutao@cnu.edu.cn}

\author[2]{Yijie Peng}
\ead{pengyijie@pku.edu.cn}

\author[3,4]{Wan Tian\corref{cor1}}
\ead{wantian61@foxmail.com}

\cortext[cor1]{Corresponding author.}

\address[1]{School of Mathematical Sciences, Capital Normal University, Beijing 100048, China}
\address[2]{School of Management and Engineering, Nanjing University, Nanjing 210008, China}
\address[3]{Advanced Institute of Information Technology, Peking University}
\address[4]{Wangxuan Institute of Computer Technology, Peking University, Beijing 100871, China}

\begin{abstract}
    Artificial intelligence (AI) is a double-edged sword: while it has achieved remarkable success across a wide range of domains, its deployment also calls for effective oversight and regulation, for which the detection of AI-related content and artifacts is perhaps the most direct and cost-effective approach. To this end, we propose a unified detection framework based on Mahalanobis distance scores (MDS), applicable to several important settings, including the detection of large language model (LLM) generated text, hallucination, watermark, and adversarial examples. A key component of the proposed method is to accurately characterize the positive class—such as human-generated text, factual statements, unwatermarked text, or non-adversarial samples—which requires an efficient and robust estimator of the covariance matrix of deep representations of positive samples before computing the MDS. Since the positive samples typically consist of multiple classes, and these classes may exhibit both homogeneity and heterogeneity, we develop joint estimation methods for both the casewise and cellwise minimum covariance determinant (MCD) estimators. We provide efficient optimization algorithms for both estimators and prove their convergence. We provide a reasonable definition of the breakdown point for the joint estimators and prove their corresponding high breakdown point properties. Empirical evaluations confirm the effectiveness of the proposed detection framework.

    \noindent
    \emph{Keywords:} AI oversight; LLM; MDS; Robust covariance estimation; MCD; Joint estimation
\end{abstract}

\maketitle

\section{Introduction}

AI, particularly with the advent of foundation models, has rapidly become a general-purpose technology with broad scientific, industrial, and societal impact \citep{bommasani2021opportunities}. Recent advances in LLMs and generative models have enabled the automated production, transformation, and interaction with diverse digital content, including natural language, code, and multimodal data \citep{bubeck2023sparks}. While these capabilities create substantial opportunities in education, healthcare, scientific discovery, and software development, they also introduce AI-related artifacts that are difficult to monitor at scale, such as synthetic text, hallucinated or factually inconsistent outputs, watermark traces, and adversarial perturbations \citep{goodfellow2015explaining,carlini2017towards,kirchenbauer2023watermarking}. These artifacts often manifest as observable statistical irregularities in model outputs or learned representations, for which detection provides a scalable and cost-effective mechanism for AI oversight, especially when the underlying model is inaccessible or continually changing. Existing detection methods can be broadly divided into white-box and black-box paradigms. White-box methods assume access to internal model information, such as parameters, gradients, logits, or decoding probabilities, and can exploit model-specific signals or embedded mechanisms \citep{kirchenbauer2023watermarking}. In contrast, black-box methods operate only on observable outputs or representations obtained from accessible feature extractors, making them more suitable for proprietary, evolving, or API-only systems \citep{manakul2023selfcheckgpt}. From a practical perspective, we therefore focus on black-box detection and discuss this class of methods below.

Despite being studied separately, LLM-generated text detection, hallucination detection, adversarial example detection, and watermark detection can all be viewed as determining whether a sample deviates from a reference distribution in a suitable representation space.
LLM-generated text detection can be viewed as identifying distributional discrepancies between human-written and model-generated text \citep{gehrmann2019gltr,mitchell2023detectgpt}; hallucination detection seeks to find outputs inconsistent with factual, contextual, or self-consistency-based evidence 
\citep{ji2023survey,manakul2023selfcheckgpt}; adversarial example detection aims to recognize abnormal inputs or representations \citep{goodfellow2015explaining,carlini2017towards,metzen2017detecting, tian2024abnormal}; and watermark detection uncovers deliberately embedded statistical signals in generated content \citep{kirchenbauer2023watermarking}. 

A wide range of black-box detection methods have been developed under this perspective. Likelihood-based methods exploit token- or sequence-level probability patterns assigned by language models \citep{gehrmann2019gltr,mitchell2023detectgpt}. Classifier-based methods train supervised or contrastive models on learned representations, with applications to LLM-generated text detection, hallucination detection, and adversarial detection \citep{solaiman2019release}. 
Watermark-specific methods embed and subsequently detect statistical signatures in generated outputs \citep{kirchenbauer2023watermarking}. However, these methods are often task-specific, rely on tailored features or supervised training data, and require retraining or careful data curation when transferred to new domains or models. They also rarely provide a unified statistical treatment of heterogeneous reference samples arising from multiple classes, sources, domains, or generation mechanisms.

To address these limitations, we propose a general black-box detection framework based on MDS. The key idea is to characterize positive samples in a learned representation space and detect negative samples that deviate from this characterization. Specifically, for positive samples, we use an appropriate pretrained model to extract deep representations, estimate their location and covariance structure, and compute MDS for detection. For MDS estimation, we adopt both casewise MCD \citep{hubert2010minimum} and cellwise MCD \citep{raymaekers2024cellwise} as the basic principles. Since positive samples may consist of multiple classes, we further propose joint estimators for both the casewise and cellwise settings using a reparameterization technique, together with efficient optimization algorithms. We establish the convergence of the proposed algorithms and the high breakdown point properties of the resulting estimators. Extensive experiments on LLM-generated text detection, hallucination detection, watermark detection, and adversarial example detection demonstrate the effectiveness and generality of the proposed framework.

It is worth noting that our approach is inspired by \citet{lee2018simple}, who applied class-conditional Gaussian modeling to adversarial and out-of-distribution detection.
However, their method assumes a common covariance matrix across classes and relies on empirical covariance estimation, limiting its robustness in heterogeneous or contaminated settings.
In summary, our contributions are as follows:
\begin{enumerate}
    \item We propose a unified black-box detection framework for a broad class of AI-related content and artifacts, including LLM-generated text, hallucinations, watermarked text, and adversarial examples. The framework is flexible and broadly applicable across different detection tasks.

    \item We develop joint estimation methods for multiple covariance matrices in low-dimensional settings. The proposed methods cover both casewise and cellwise contamination mechanisms and allow positive samples to contain multiple classes with shared and class-specific covariance structures.

    \item We introduce a suitable definition of the breakdown point for joint estimators and establish the high-breakdown-point properties of the proposed joint estimators.
\end{enumerate}

The remainder of this paper is organized as follows. Section~\ref{sec2} introduces the unified detection framework. Sections~\ref{sec3} and~\ref{sec4} present the proposed MCD estimators and their multi-class extensions. Section~\ref{sec5Optimization} develops optimization algorithms, and Section~\ref{sec6Theoretical} establishes convergence guarantees and high-breakdown-point properties. Section~\ref{sec6} reports experimental results, and Section~\ref{sec7} concludes the paper. Implementation details and proofs are provided in the appendices.

\section{Unified detection framework via Mahalanobis Distance Scores} \label{sec2}


In this section, we introduce the proposed unified framework for detecting AI-related content and artifacts. For ease of exposition, we first consider the single-class positive-sample setting in Section~\ref{sec2.1}, and then extend the framework to the multi-class setting in Section~\ref{sec2.2}. Figure~\ref{fig:framework} provides a schematic illustration of the overall framework.

\begin{figure}[H]
\centering
\includegraphics[width=1\linewidth]{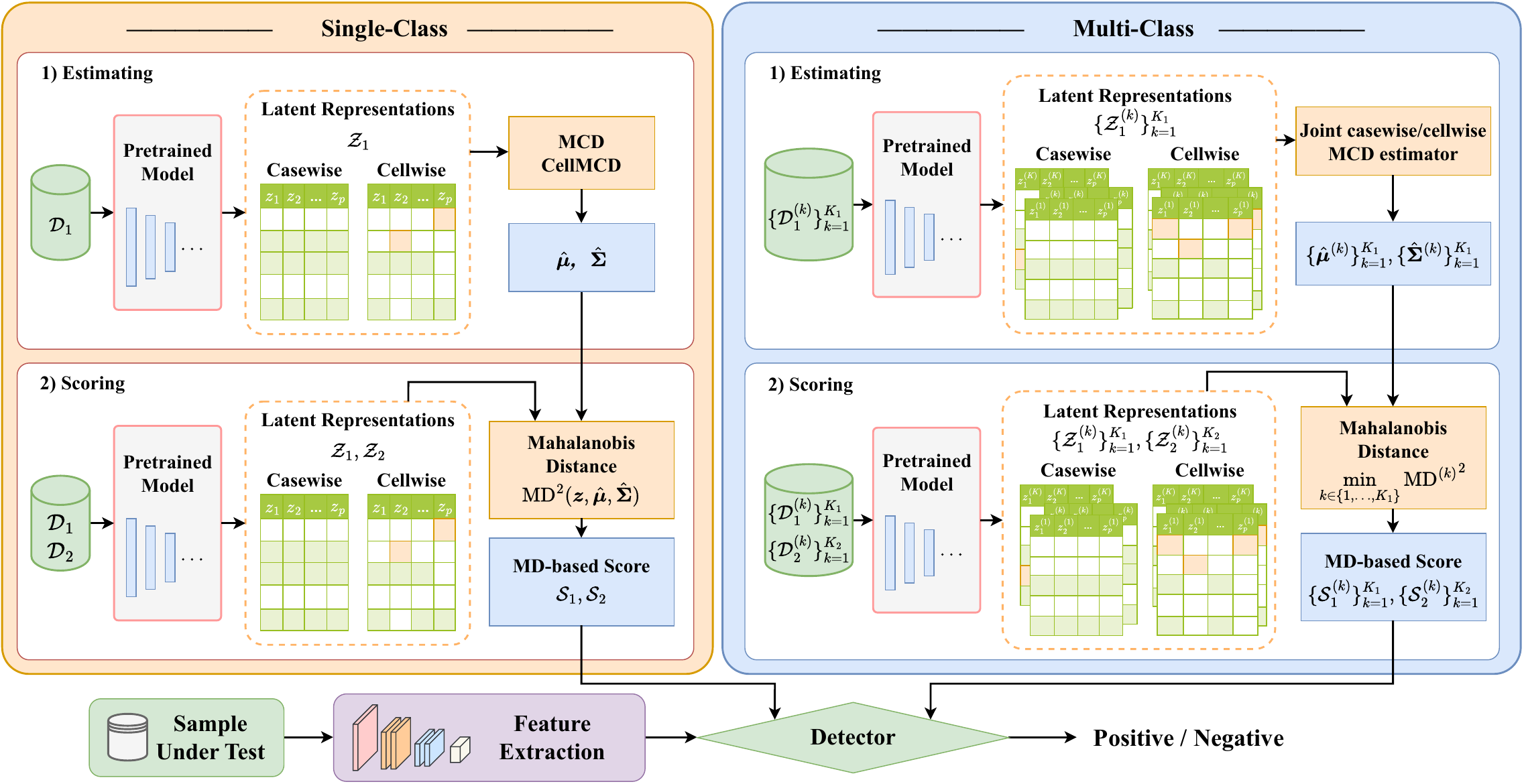} 
\caption{Overview of the proposed framework for detecting AI-related content and artifacts under single-class and multi-class settings.} 
\label{fig:framework}
\end{figure}

\subsection{Single-class MDS detector} \label{sec2.1}
We first consider the setting where positive samples are assumed to come from a single class. For example, in image adversarial example detection, the positive samples may be clean images from one class, while the negative samples are their adversarially perturbed counterparts. Let the set of positive samples be
\(
\mathcal{D}_{1} = \{(\boldsymbol{x}_{i}, Y_{i}=1)\}^n_{i=1},
\)
and let the set of negative samples be
\(
\mathcal{D}_{2} = \bigcup_{k=1}^{K_{2}} \mathcal{D}_{2}^{(k)},
\)
where
\(
\mathcal{D}_{2}^{(k)}
=
\{(\tilde{\boldsymbol{x}}_{i}^{(k)},Y_{i}^{(k)}=-1)\}_{i=1}^{m_k},
k=1,2,\ldots,K_{2}.
\)
The complete dataset is denoted by $\mathcal{D} = \mathcal{D}_{1} \cup \mathcal{D}_{2}$. Let $\mathcal{M}$ be a pretrained model trained on positive samples and used to extract latent representations. For each input $\boldsymbol{x} \in \mathbb{R}^{p}$, we obtain a representation $\boldsymbol{z} \in \mathbb{R}^{q}$ from a chosen layer of $\mathcal{M}$. The corresponding representation sets are denoted by
\[
    \mathcal{Z}_{1} = \{(\boldsymbol{z}_{i}, Y_{i}=1)\}_{i=1}^{n},
    \quad
    \mathcal{Z}_{2}
    =
    \bigcup_{k=1}^{K_{2}}\mathcal{Z}_{2}^{(k)},
\]
where $\mathcal{Z}_{2}^{(k)}=\{(\tilde{\boldsymbol{z}}_{i}^{(k)},Y_{i}^{(k)}=-1)\}_{i=1}^{m_k}$, $k=1,2,\ldots,K_2$. To characterize the positive class, we estimate its mean vector $\boldsymbol{\mu}$ and covariance matrix $\boldsymbol{\Sigma}$ based on the deep representations of positive samples in $\mathcal{Z}_{1}$. This choice is natural because positive samples are typically abundant and easier to obtain, and pretrained models are usually trained primarily on positive samples. Since the positive samples used for estimation may still contain contamination, direct empirical estimation of $\boldsymbol{\mu}$ and $\boldsymbol{\Sigma}$ can be unstable. Therefore, we use robust estimators, denoted by $\widehat{\boldsymbol{\mu}}$ and $\widehat{\boldsymbol{\Sigma}}$, whose construction will be detailed in Section~\ref{sec3}. Using these estimates, the Mahalanobis distance scores for positive and negative samples are defined as
\[
    d_{i}  = (\boldsymbol{z}_{i} - \widehat{\boldsymbol{\mu}})^{\top}
    \widehat{\boldsymbol{\Sigma}}^{-1}
    (\boldsymbol{z}_{i} - \widehat{\boldsymbol{\mu}}),
    \quad
    \tilde{d}_{i}^{(k)} = (\tilde{\boldsymbol{z}}_{i}^{(k)} - \widehat{\boldsymbol{\mu}})^{\top}
    \widehat{\boldsymbol{\Sigma}}^{-1}
    (\tilde{\boldsymbol{z}}_{i}^{(k)} - \widehat{\boldsymbol{\mu}}), \quad k=1,2,\ldots,K_2.
\]
The resulting scores measure how far each sample is from the positive class in the learned representation space. Intuitively, a larger score indicates that the sample is less consistent with the positive class and therefore more likely to be negative. Classification can then be performed on the one-dimensional score sets
\[
    \mathcal{S}_{1} = \{(d_i,Y_i=1)\}_{i=1}^{n},
    \quad
    \mathcal{S}_{2}^{(k)}
    =
    \{(\tilde d_i^{(k)},Y_i^{(k)}=-1)\}_{i=1}^{m_k},
    \quad
    k=1,2,\ldots,K_{2},
\]
by training a score-based binary classifier.

\subsection{Multi-class MDS detector}\label{sec2.2}
We next consider the multi-class setting, where positive samples may come from multiple classes or subpopulations. For example, in image adversarial example detection, the positive samples may consist of clean images from different digit classes, while the negative samples are their adversarially perturbed counterparts. Let the positive and negative sample sets be
\(
    \mathcal{D}_{1} = \bigcup_{k=1}^{K_{1}} \mathcal{D}_{1}^{(k)}
\)
and 
\(
\mathcal{D}_{2} = \bigcup_{k=1}^{K_{2}} \mathcal{D}_{2}^{(k)},
\)
respectively, where
\(
\mathcal{D}_{1}^{(k)}
=
\{(\boldsymbol{x}_{i}^{(k)},Y_{i}^{(k)}=1)\}^{n_k}_{i=1}, k=1,2,\ldots,K_{1}
\)
and 
\(
\mathcal{D}_{2}^{(k)}
=
\{(\tilde{\boldsymbol{x}}_{i}^{(k)},Y_{i}^{(k)}=-1)\}_{i=1}^{m_k},
k=1,2,\ldots,K_{2}.
\)
As in the single-class setting, a pretrained model $\mathcal{M}$ is used to extract latent representations. The corresponding representation sets are denoted by
\[
    \mathcal{Z}_{1}
    =
    \bigcup_{k=1}^{K_{1}}\mathcal{Z}_{1}^{(k)},
    \quad
    \mathcal{Z}_{2}
    =
    \bigcup_{k=1}^{K_{2}}\mathcal{Z}_{2}^{(k)},
\]
where $\mathcal{Z}_{1}^{(k)}=\{(\boldsymbol{z}_{i}^{(k)},Y_{i}^{(k)}=1)\}_{i=1}^{n_k}$ and
$\mathcal{Z}_{2}^{(k)}=\{(\tilde{\boldsymbol{z}}_{i}^{(k)},Y_{i}^{(k)}=-1)\}_{i=1}^{m_k}$. To characterize the positive classes, we estimate class-specific mean vectors
$\boldsymbol{\mu}^{(k)}$ and covariance matrices $\boldsymbol{\Sigma}^{(k)}$ based on the deep representations of positive samples in $\mathcal{Z}_{1}^{(k)}$, $k=1,2,\ldots,K_{1}$. This multi-class formulation allows different positive classes to have distinct geometric structures in the representation space. Since the positive samples used for estimation may still contain contamination, direct empirical estimation of $\boldsymbol{\mu}^{(k)}$ and $\boldsymbol{\Sigma}^{(k)}$ can be unstable. Therefore, we use joint robust estimators, denoted by
\(
\{\widehat{\boldsymbol{\mu}}^{(k)}\}_{k=1}^{K_{1}}
\text{ and }
\{\widehat{\boldsymbol{\Sigma}}^{(k)}\}_{k=1}^{K_{1}},
\)
whose construction will be detailed in Section~\ref{sec4}.

Using these estimates, the Mahalanobis distance score for each sample is defined as its minimum distance to the positive classes:
\[
    \begin{aligned}
        d_{i}^{(k)}
&=
\min_{1\leq j\leq K_{1}}
\left\{
    (\boldsymbol{z}_{i}^{(k)}-\widehat{\boldsymbol{\mu}}^{(j)})^{\top}
    (\widehat{\boldsymbol{\Sigma}}^{(j)})^{-1}
    (\boldsymbol{z}_{i}^{(k)}-\widehat{\boldsymbol{\mu}}^{(j)})
\right\},
\quad k=1,2,\ldots,K_{1}, \\
\tilde d_{i}^{(k)}
&=
\min_{1\leq j\leq K_{1}}
\left\{
    (\tilde{\boldsymbol{z}}_{i}^{(k)}-\widehat{\boldsymbol{\mu}}^{(j)})^{\top}
    (\widehat{\boldsymbol{\Sigma}}^{(j)})^{-1}
    (\tilde{\boldsymbol{z}}_{i}^{(k)}-\widehat{\boldsymbol{\mu}}^{(j)})
\right\},
\quad k=1,2,\ldots,K_{2}.
    \end{aligned}
\]
To ensure that the Mahalanobis distance is well-defined in high-dimensional settings, a positive-definiteness constraint is imposed during covariance estimation. Details are provided in Sections \ref{sec3}-\ref{sec4}. In addition, to mitigate the computational burden and numerical instability associated with high-dimensional covariance estimation, Gaussian random projection is applied to reduce the dimensionality of the representations before covariance estimation. Details are provided in \ref{app:gaussian_reduction}. This minimum-distance rule reflects the intuition that a sample should be regarded as normal if it is close to at least one positive class, whereas a sample far from all positive classes is more likely to be negative. Classification can then be performed on the one-dimensional score sets
\(
\mathcal{S}_{1}^{(k)}
=
\{(d_i^{(k)},Y_i^{(k)}=1)\}_{i=1}^{n_k},
k=1,2,\ldots,K_{1},
\)
and
\(
\mathcal{S}_{2}^{(k)}
=
\{(\tilde d_i^{(k)},Y_i^{(k)}=-1)\}_{i=1}^{m_k},
k=1,2,\ldots,K_{2},
\)
by training a score-based binary classifier.

\section{Single-class robust estimation}\label{sec3}

In the single-class MDS detector introduced in Section~\ref{sec2.1}, the key step is to estimate the mean vector and covariance matrix of the deep representations of positive samples. Let
\(
\boldsymbol{Z}=(\boldsymbol{z}_1,\boldsymbol{z}_2,\ldots,\boldsymbol{z}_n)^\top \in \mathbb{R}^{n\times q}
\)
denote the representation matrix of positive samples. Since these samples may still contain contaminated observations or corrupted feature entries, empirical estimates of the mean and covariance matrix can be unstable. We therefore introduce two robust estimation methods: the casewise MCD, which is designed for contaminated observations, and the cellwise MCD, which is designed for contamination at the feature-entry level.

\subsection{Casewise minimum covariance determinant}

We first consider casewise contamination, where entire observations may be unreliable. To motivate the estimator, consider a Gaussian working model for the positive representations,
\(
\boldsymbol{z}_i \sim \mathcal{N}_{q}(\boldsymbol{\mu},\boldsymbol{\Sigma}), i=1, 2, \ldots, n.
\)
The density is
\(
f(\boldsymbol{z}_i;\boldsymbol{\mu},\boldsymbol{\Sigma})
=
\frac{1}{(2\pi)^{q/2}|\boldsymbol{\Sigma}|^{1/2}}
\exp\left\{
    -\frac{1}{2}\mathrm{MD}^2(\boldsymbol{z}_i,\boldsymbol{\mu},\boldsymbol{\Sigma})
\right\},
\)
where
\(
\mathrm{MD}^2(\boldsymbol{z}_i,\boldsymbol{\mu},\boldsymbol{\Sigma})
=
(\boldsymbol{z}_i-\boldsymbol{\mu})^\top
\boldsymbol{\Sigma}^{-1}
(\boldsymbol{z}_i-\boldsymbol{\mu}).
\)
The negative log-likelihood, up to a constant factor, is
\[
    \ell (\boldsymbol{\mu},\boldsymbol{\Sigma};\boldsymbol{z}_i)
    =
    \ln|\boldsymbol{\Sigma}|
    +
    \mathrm{MD}^2(\boldsymbol{z}_i,\boldsymbol{\mu},\boldsymbol{\Sigma})
    +
    q\ln(2\pi).
\]
If all observations are reliable, minimizing
\(
\sum_{i=1}^{n}\ell (\boldsymbol{\mu},\boldsymbol{\Sigma};\boldsymbol{z}_i) 
\)
leads to the usual empirical mean and covariance matrix. However, this estimator is sensitive to contaminated observations.

The casewise MCD estimator addresses this issue by selecting a subset of \(h\) observations with the smallest covariance determinant. Let \(\boldsymbol{H}=(w_1,w_2,\ldots,w_n)^\top\), where \( w_i \in \{0,1\},  i\in\{1,2,\ldots,n\} \), indicates whether the \(i\)-th observation is retained, with \(\sum_{i=1}^{n} w_i = h\). The casewise MCD estimator solves
\begin{equation}\label{eq:casewise_mcd_obj}
    \min_{\boldsymbol{\mu},\boldsymbol{\Sigma},\boldsymbol{H}}
    \sum_{i=1}^{n}
    w_i
    \ell (\boldsymbol{\mu},\boldsymbol{\Sigma};\boldsymbol{z}_i),
    \quad
    \text{s.t. }
    w_i\in\{0,1\},\ 
    \sum_{i=1}^{n}w_i=h.
\end{equation}
For fixed weights \(\boldsymbol{H}\), the estimators are
\begin{equation*}
    \widehat{\boldsymbol{\mu}}
    =
    \frac{1}{h}
    \sum_{i=1}^{n}
    w_i \boldsymbol{z}_i,\quad \widehat{\boldsymbol{\Sigma}}
    =
    \frac{1}{h}
    \sum_{i=1}^{n}
    w_i
    (\boldsymbol{z}_i-\widehat{\boldsymbol{\mu}})
    (\boldsymbol{z}_i-\widehat{\boldsymbol{\mu}})^\top .
\end{equation*}
Substituting these estimates into \eqref{eq:casewise_mcd_obj} shows that the casewise MCD is equivalent to finding the subset of size \(h\) whose covariance matrix has the minimum determinant.

It is worth noting that directly solving \eqref{eq:casewise_mcd_obj} is computationally challenging, since it involves a combinatorial search over all subsets of size \(h\). In practice, the casewise MCD is commonly computed using the Fast-MCD algorithm proposed by \citet{rousseeuw1999fast}, which efficiently approximates the MCD solution through concentration steps and multiple initial subsets. The tuning parameter \(h\) controls the trade-off between robustness and efficiency: a smaller \(h\) yields higher robustness against contamination, whereas a larger \(h\) uses more observations and can improve statistical efficiency when contamination is mild. A common choice is \(h=\lfloor (n+q+1)/2 \rfloor\), which attains the highest breakdown point, while larger values such as \(h=\lfloor 0.75n \rfloor\) are often used when higher efficiency is desired \citep{hubert2010minimum, rousseeuw1999fast}. The resulting
\(\widehat{\boldsymbol{\mu}}\) and \(\widehat{\boldsymbol{\Sigma}}\) are then used in the single-class MDS detector. 

\subsection{Cellwise minimum covariance determinant}

The casewise MCD removes entire observations and is therefore suitable when contamination affects whole samples. In deep representations, however, contamination may occur only in a subset of feature entries. Removing an entire observation in this case may discard useful information. The cellwise MCD addresses this problem by assigning binary weights at the cell level.
Let $\boldsymbol{W}=(w_{ij})_{1\leq i \leq n, 1\leq j \leq q}\in\{0,1\}^{n\times q}$ be a cellwise indicator matrix, where \(w_{ij}=1\) means that the \(j\)-th entry of \(\boldsymbol{z}_i\) is retained, and \(w_{ij}=0\) means that the entry is flagged as contaminated. Denote by \(\boldsymbol{w}_i\) the \(i\)-th row of \(\boldsymbol{W}\). For each observation, let
\(
\boldsymbol{z}_i^{(\boldsymbol{w}_i)}
\)
be the subvector of \(\boldsymbol{z}_i\) containing only retained entries. Similarly, \(\boldsymbol{\mu}^{(\boldsymbol{w}_i)}\) is the corresponding subvector of \(\boldsymbol{\mu}\), and
\(\boldsymbol{\Sigma}^{(\boldsymbol{w}_i)}\) is the corresponding principal submatrix of \(\boldsymbol{\Sigma}\). Let
\(
q^{(\boldsymbol{w}_i)}
=
\sum_{j=1}^{q} w_{ij}
\)
denote the number of retained entries in the \(i\)-th observation.

Under the same Gaussian working model, the observed likelihood for the retained entries of \(\boldsymbol{z}_i\) is
\[
    f\left(\boldsymbol{z}_{i}^{(\boldsymbol{w}_{i})};
    \boldsymbol{\mu}^{(\boldsymbol{w}_{i})},\boldsymbol{\Sigma}^{(\boldsymbol{w}_{i})}\right)
    =
    \frac{
        1
    }{
        (2\pi)^{q^{(\boldsymbol{w}_i)}/2}
        |\boldsymbol{\Sigma}^{(\boldsymbol{w}_i)}|^{1/2}
    }
    \exp\left\{
        -\frac{1}{2}
        \mathrm{MD}^{2} \left( \boldsymbol{z}_{i}^{(\boldsymbol{w}_{i})},
        \boldsymbol{\mu}^{(\boldsymbol{w}_{i})},\boldsymbol{\Sigma}^{(\boldsymbol{w}_{i})} \right)
    \right\},
\]
where the partial Mahalanobis distance is
\[
    \mathrm{MD}^{2} \left( \boldsymbol{z}_{i}^{(\boldsymbol{w}_{i})},
    \boldsymbol{\mu}^{(\boldsymbol{w}_{i})},\boldsymbol{\Sigma}^{(\boldsymbol{w}_{i})} \right)
    =
    \left(
        \boldsymbol{z}_{i}^{(\boldsymbol{w}_{i})}
        -
        \boldsymbol{\mu}^{(\boldsymbol{w}_{i})}
    \right)^{\top}
    \left(
        \boldsymbol{\Sigma}^{(\boldsymbol{w}_{i})}
    \right)^{-1}
    \left(
        \boldsymbol{z}_{i}^{(\boldsymbol{w}_{i})}
        -
        \boldsymbol{\mu}^{(\boldsymbol{w}_{i})}
    \right).
\]
The corresponding negative log-likelihood is
\[
    \ell \left(\boldsymbol{\mu}^{(\boldsymbol{w}_{i})},\boldsymbol{\Sigma}^{(\boldsymbol{w}_{i})}; \boldsymbol{z}_{i}^{(\boldsymbol{w}_{i})}\right)
    =
    \ln
    |\boldsymbol{\Sigma}^{(\boldsymbol{w}_{i})}|
    +
    q^{(\boldsymbol{w}_{i})}\ln(2\pi)
    +
    \mathrm{MD}^{2} \left( \boldsymbol{z}_{i}^{(\boldsymbol{w}_{i})},
    \boldsymbol{\mu}^{(\boldsymbol{w}_{i})},\boldsymbol{\Sigma}^{(\boldsymbol{w}_{i})} \right).
\]

Unlike ordinary missing-data problems, the cellwise indicator matrix \(\boldsymbol{W}\) is not observed in advance but is estimated from the data. To avoid flagging too many entries, the cellwise MCD imposes a lower bound on the number of retained cells in each column and adds a penalty for flagged cells. Specifically, for a given \(h\), we require
\(
\|\boldsymbol{W}_{\cdot j}\|_0 \geq h, j=1,2,\ldots,q,
\)
where \(\|\boldsymbol{W}_{\cdot j}\|_0\) is the number of retained entries in the \(j\)-th column. We also impose
\(
\lambda_{\min}(\boldsymbol{\Sigma}) \geq a,
\)
where \(\lambda_{\min}(\boldsymbol{\Sigma})\) denotes the smallest eigenvalue of \(\boldsymbol{\Sigma}\), for some \(a>0\), thereby ensuring that the covariance matrix is nonsingular. The cellwise MCD estimator is then defined as the solution to
\begin{equation}\label{eq:cellwise_mcd_obj}
    \begin{aligned}
        \min_{\boldsymbol{\mu},\boldsymbol{\Sigma},\boldsymbol{W}}
        \quad &
        \sum_{i=1}^{n}
        \ell \left(\boldsymbol{\mu}^{(\boldsymbol{w}_{i})},\boldsymbol{\Sigma}^{(\boldsymbol{w}_{i})};\boldsymbol{z}_{i}^{(\boldsymbol{w}_{i})}\right) 
        +
        \sum_{j=1}^{q}
        b_j
        \|\mathbf{1}_{n}-\boldsymbol{W}_{\cdot j}\|_0
        \\
        \mathrm{s.t.}
        \quad &
        \boldsymbol{W}\in\{0,1\}^{n\times q},
        \quad
        \|\boldsymbol{W}_{\cdot j}\|_0 \geq h,\quad j=1,2,\ldots,q,\quad
        \lambda_{\min}(\boldsymbol{\Sigma})\geq a .
    \end{aligned}
\end{equation}
Here \(\mathbf{1}_{n}\) denotes the \(n\)-dimensional all-ones vector, \(\|\mathbf{1}_{n}-\boldsymbol{W}_{\cdot j}\|_0\) counts the number of flagged cells in the \(j\)-th feature dimension, and \(b_j>0\) controls the penalty for flagging cells in that dimension. If an observation has no retained entries, i.e., $q^{(\boldsymbol w_i)}=0$, we set $\ln |\boldsymbol{\Sigma}^{(\boldsymbol w_i)}| = 0$ and $\mathrm{MD}^2( \boldsymbol z_i^{(\boldsymbol w_i)}, \boldsymbol\mu^{(\boldsymbol w_i)}, \boldsymbol\Sigma^{(\boldsymbol w_i)} )=0$.

It is worth noting that directly solving \eqref{eq:cellwise_mcd_obj} is computationally challenging, because the objective involves both continuous parameters \((\boldsymbol{\mu},\boldsymbol{\Sigma})\) and a high-dimensional binary matrix \(\boldsymbol{W}\). Following \citet{raymaekers2024cellwise}, the cellwise MCD can be computed through an iterative procedure that alternates between updating the cellwise indicator matrix and estimating the mean vector and covariance matrix from the retained cells. The penalty parameters \(b_j\) regulate the degree of cellwise flagging and are typically chosen so that, in the absence of contamination, only a small expected fraction of cells is flagged. The parameter \(h\) specifies the minimum number of retained entries in each feature dimension and prevents degenerate solutions that remove too many cells. Thus, \(h\) and \(b_j\) jointly control the robustness--efficiency trade-off: more aggressive flagging improves robustness to cellwise contamination, whereas retaining more cells improves efficiency when contamination is mild. The resulting \(\widehat{\boldsymbol{\mu}}\) and \(\widehat{\boldsymbol{\Sigma}}\) are then used in the single-class MDS detector.

\section{Multi-class robust estimation}\label{sec4}

In the multi-class MDS detector introduced in Section~\ref{sec2.2}, positive samples may come from multiple classes. The covariance structures of different classes may exhibit both homogeneity and heterogeneity. Estimating the covariance matrix for each class separately may fail to borrow information across classes and thus lead to insufficient statistical efficiency. Therefore, building on the casewise and cellwise MCD principles, we extend these two robust estimators to the multi-class setting and propose corresponding joint estimation methods. For notational simplicity, we write the total number of classes as $K$ in this section. 

\subsection{Joint casewise MCD estimator}

We first consider casewise contamination, where entire observations may be unreliable within each class. Let \(\boldsymbol{Z}=\{\boldsymbol{Z}^{(1)},\ldots,\boldsymbol{Z}^{(K)}\}\), where \( \boldsymbol{Z}^{(k)}=(\boldsymbol{z}_{1}^{(k)},\ldots,\boldsymbol{z}_{n_k}^{(k)})^\top \in \mathbb{R}^{n_k\times q} \) denotes the representation matrix of the \(k\)-th positive class, \(k=1,2,\ldots,K\). As a working model, we assume
\(
\boldsymbol{z}_{i}^{(k)}
\sim
\mathcal{N}_{q}
(\boldsymbol{\mu}^{(k)}, \boldsymbol{\Sigma}^{(k)}),
i=1,2,\ldots,n_k, k=1,2,\ldots,K,
\)
where \(\boldsymbol{\mu}^{(k)}\in\mathbb{R}^{q}\) and
\(\boldsymbol{\Sigma}^{(k)}\in\mathbb{R}^{q\times q}\) denote the class-specific mean vector and covariance matrix. Following the reparameterization idea in the joint estimation of multiple graphical models \citep{guo2011joint}, we introduce the following decomposition to explicitly model both shared and class-specific dependence structures:
\begin{equation}\label{eq:sigma-decomposition}
\boldsymbol{\Sigma}^{(k)}
=
\boldsymbol{\Theta} \odot \boldsymbol{\Gamma}^{(k)},
\end{equation}
where \(\odot\) denotes the Hadamard product. Equivalently, for
\(\boldsymbol{\Sigma}^{(k)}=(\sigma_{ij}^{(k)})_{1\leq i,j \leq q}\),
\[
    \sigma_{ij}^{(k)}
    =
    \begin{cases}
        \theta_{ij}\gamma_{ij}^{(k)}, & i\neq j,\\
        \gamma_{ii}^{(k)}, & i=j,
    \end{cases}
\]
with \(\theta_{ii}=1\) for \(i=1,2,\ldots,q\). Here
\(\boldsymbol{\Theta}=(\theta_{ij})_{1\leq i,j \leq q}\) represents the shared covariance pattern across classes, whereas
\(\boldsymbol{\Gamma}^{(k)}=(\gamma_{ij}^{(k)})_{1\leq i,j \leq q}\) captures class-specific variation. This reparameterization separates shared dependence ($\boldsymbol{\Theta}$) from class-specific deviations ($\boldsymbol{\Gamma}^{(k)}$).
We use the Root Mean Square (RMS) correlation normalization rule to ensure the identifiability of the decomposition, and details are given in Appendix~\ref{app:decomp_indent}.

For $k=1,2,\ldots,K$, let \(w_i^{(k)}\in\{0,1\}\) indicate whether the \(i\)-th observation in class \(k\) is retained. We write
\(
\boldsymbol{H}^{(k)}
=
(w_1^{(k)},\ldots,w_{n_k}^{(k)})^\top
\)
for the vector of casewise weights in class \(k\) with
\(
\sum_{i=1}^{n_k}w_i^{(k)}=h_k.
\)
The joint casewise MCD estimator is defined as the solution to
\begin{equation}\label{eq:joint_casewise_mcd}
    \begin{aligned}
        \min_{\{\boldsymbol{\mu}^{(k)}, \boldsymbol{H}^{(k)}, \boldsymbol{\Sigma}^{(k)}\}_{k=1}^{K}}
        \quad & 
        \sum_{k=1}^{K} \sum_{i=1}^{n_k} w_i^{(k)} \tilde{\ell}(\boldsymbol{\mu}^{(k)}, \boldsymbol{\Sigma}^{(k)};\boldsymbol{z}_i^{(k)})
        +
        \mathbf{P}(\boldsymbol{\Theta}, \{\boldsymbol{\Gamma}^{(k)}\}_{k=1}^{K})
        \\
        \mathrm{s.t.}
        \quad &
        w_i^{(k)}\in\{0,1\},
        \quad
        \sum_{i=1}^{n_k}w_i^{(k)}=h_k,
        \quad
        \boldsymbol{\Sigma}^{(k)}=\boldsymbol{\Theta}\odot\boldsymbol{\Gamma}^{(k)},
        \quad
        \lambda_{\min}(\boldsymbol{\Sigma}^{(k)})\geq a_k,
    \end{aligned}
\end{equation}
where
\[
    \tilde{\ell}({\boldsymbol{\mu}}^{(k)}, \boldsymbol{\Sigma}^{(k)};\boldsymbol{z}_i^{(k)}) =  \ln|\boldsymbol{\Sigma}^{(k)}| + q\ln(2\pi) + \mathrm{MD}^2 \left( \boldsymbol{z}_{i}^{(k)}, \boldsymbol{\mu}^{(k)}, \boldsymbol{\Sigma}^{(k)} \right)
\]
with
\(
\mathrm{MD}^2
\left(
    \boldsymbol{z}_{i}^{(k)},
    \boldsymbol{\mu}^{(k)},
    \boldsymbol{\Sigma}^{(k)}
\right)
=
(\boldsymbol{z}_{i}^{(k)}-\boldsymbol{\mu}^{(k)})^\top
(\boldsymbol{\Sigma}^{(k)})^{-1}
(\boldsymbol{z}_{i}^{(k)}-\boldsymbol{\mu}^{(k)})
\), $k=1,2,\ldots,K$. Following \citet{guo2011joint}, we impose the penalty
\begin{equation}\label{eq:joint_casewise_penalty}
    \mathbf{P}(\boldsymbol{\Theta}, \{\boldsymbol{\Gamma}^{(k)}\}_{k=1}^{K})
    =
    \lambda_1 \|\boldsymbol{\Theta}\|_{1,\mathrm{off}} + \lambda_2 \sum_{k=1}^{K} \|\boldsymbol{\Gamma}^{(k)}\|_{1,\mathrm{off}}
    :=
    \lambda_1
    \sum_{i\neq j}
    |\theta_{ij}|
    +
    \lambda_2
    \sum_{k=1}^{K}
    \sum_{i\neq j}
    |\gamma_{ij}^{(k)}|,
\end{equation}
where \(\lambda_1,\lambda_2>0\) control the global and class-specific sparsity levels, respectively. Here, $\|\cdot\|_{1, \mathrm{off}}$ denotes the sum of the absolute values of the off-diagonal elements of a matrix. The first term promotes a sparse shared structure through \(\boldsymbol{\Theta}\): when \(\theta_{ij}\) is set to zero, the corresponding covariance entry is removed from all classes simultaneously, encouraging homogeneity across classes. The second term regularizes the class-specific components \(\boldsymbol{\Gamma}^{(k)}\), allowing different classes to retain distinct covariance magnitudes and signs when the shared component is nonzero, thereby preserving heterogeneity. Therefore, the penalty jointly encourages common structure across classes and flexible class-specific deviations. The constraint \(\lambda_{\min}(\boldsymbol{\Sigma}^{(k)})\geq a_k > 0\) ensures that each covariance matrix is nonsingular. 


\subsection{Joint cellwise MCD estimator}
We next consider cellwise contamination, where only a subset of feature entries may be unreliable. In this setting, removing entire observations can be inefficient, since many uncontaminated entries may still contain useful information. The joint cellwise MCD estimator therefore assigns binary weights at the cell level within each class. For class \(k\), let
\(
\boldsymbol{W}^{(k)}=(w_{ij}^{(k)})_{1\leq i \leq n_k, 1\leq j \leq q}\in\{0,1\}^{n_k\times q}
\)
be the cellwise indicator matrix, where \(w_{ij}^{(k)}=1\) means that the \(j\)-th entry of \(\boldsymbol{z}_{i}^{(k)}\) is retained, and \(w_{ij}^{(k)}=0\) means that the entry is flagged as contaminated. Let \(\boldsymbol{w}_{i}^{(k)}\) denote the \(i\)-th row of \(\boldsymbol{W}^{(k)}\), and let \( q^{(\boldsymbol{w}_{i}^{(k)})} = \sum_{j=1}^{q} w_{ij}^{(k)} \) be the number of retained entries in observation \(i\) of class \(k\). For any vector or matrix, the superscript
\((\boldsymbol{w}_{i}^{(k)})\) denotes the subvector or principal submatrix corresponding to the retained entries. Thus,
\(
\boldsymbol{z}_{i}^{(\boldsymbol{w}_{i}^{(k)}, k)},
\boldsymbol{\mu}^{(\boldsymbol{w}_{i}^{(k)}, k)},
\boldsymbol{\Sigma}^{(\boldsymbol{w}_{i}^{(k)}, k)}
\)
denote the retained subvector of \(\boldsymbol{z}_{i}^{(k)}\), the corresponding subvector of
\(\boldsymbol{\mu}^{(k)}\), and the corresponding principal submatrix of
\(\boldsymbol{\Sigma}^{(k)}\), respectively. Using the same covariance decomposition in \eqref{eq:sigma-decomposition}, the joint cellwise MCD estimator is defined by minimizing
\begin{equation}\label{eq:joint_cellwise_mcd}
    \begin{aligned}
        \min_{\{\boldsymbol{\mu}^{(k)}, \boldsymbol{W}^{(k)}, \boldsymbol{\Sigma}^{(k)}\}_{k=1}^{K}}
        \quad &
        \sum_{k=1}^{K}
        \sum_{i=1}^{n_k}
        \tilde{\ell} \left(
            \boldsymbol{\mu}^{(\boldsymbol{w}_{i}^{(k)}, k)},
            \boldsymbol{\Sigma}^{(\boldsymbol{w}_{i}^{(k)}, k)};
            \boldsymbol{z}_{i}^{(\boldsymbol{w}_{i}^{(k)}, k)}
        \right)
        \\
              &\quad
              +
              \sum_{k=1}^{K}
              \sum_{j=1}^{q}
              b_j^{(k)}
              \left\|
              \mathbf{1}_{n_k}
              -
              \boldsymbol{W}_{\cdot j}^{(k)}
              \right\|_0
              +
              \mathbf{P}(\boldsymbol{\Theta}, \{\boldsymbol{\Gamma}^{(k)}\}_{k=1}^{K})
              \\
              \mathrm{s.t.}
        \quad &
        \boldsymbol{W}^{(k)}\in\{0,1\}^{n_k\times q},
        \quad
        \left\|\boldsymbol{W}_{\cdot j}^{(k)}\right\|_0\geq h_k, \quad j=1,2,\ldots,q,
        \\
              &
              \boldsymbol{\Sigma}^{(k)}=\boldsymbol{\Theta}\odot\boldsymbol{\Gamma}^{(k)},
              \quad
              \lambda_{\min}(\boldsymbol{\Sigma}^{(k)})\geq a_k > 0,
              \quad k=1,2,\ldots,K.
    \end{aligned}
\end{equation}
Let $\mathrm{MD}^2 (\boldsymbol{z}_{i}^{(\boldsymbol{w}_{i}^{(k)}, k)},\boldsymbol{\mu}^{(\boldsymbol{w}_{i}^{(k)}, k)},\boldsymbol{\Sigma}^{(\boldsymbol{w}_{i}^{(k)}, k)}) = (\boldsymbol{z}_{i}^{(\boldsymbol{w}_{i}^{(k)},k)}-\boldsymbol{\mu}^{(\boldsymbol{w}_{i}^{(k)},k)})^{\top} (\boldsymbol{\Sigma}^{(\boldsymbol{w}_{i}^{(k)},k)})^{-1} (\boldsymbol{z}_{i}^{(\boldsymbol{w}_{i}^{(k)},k)}-\boldsymbol{\mu}^{(\boldsymbol{w}_{i}^{(k)},k)})$ be the partial Mahalanobis distance, we have $\tilde{\ell}(\boldsymbol{\mu}^{(\boldsymbol{w}_{i}^{(k)}, k)},\boldsymbol{\Sigma}^{(\boldsymbol{w}_{i}^{(k)}, k)};\boldsymbol{z}_{i}^{(\boldsymbol{w}_{i}^{(k)}, k)}) = \ln |\boldsymbol{\Sigma}^{(\boldsymbol{w}_{i}^{(k)},k)}| + q^{(\boldsymbol{w}_{i}^{(k)})}\ln(2\pi) + \mathrm{MD}^2 (\boldsymbol{z}_{i}^{(\boldsymbol{w}_{i}^{(k)}, k)},\boldsymbol{\mu}^{(\boldsymbol{w}_{i}^{(k)}, k)},\boldsymbol{\Sigma}^{(\boldsymbol{w}_{i}^{(k)}, k)})$.
Here, \(\mathbf{1}_{n_k}\) is the \(n_k\)-dimensional all-ones vector, and
\(
\|
\mathbf{1}_{n_k}
-
\boldsymbol{W}_{\cdot j}^{(k)}
\|_0
\)
counts the number of flagged cells in the \(j\)-th feature dimension of class \(k\). The constants \(b_j^{(k)}\) penalize cellwise flagging, while the constraints
\(
\|\boldsymbol{W}_{\cdot j}^{(k)}\|_0\geq h_k
\)
prevent too many entries from being removed in any feature dimension. If all entries of an observation are flagged, we adopt the empty-pattern convention \(q^{(\boldsymbol w_i^{(k)})}=0\), \(\ln |\boldsymbol{\Sigma}^{(\boldsymbol w_i^{(k)},k)}|=0\), and \(\mathrm{MD}^2=0\), so that the objective function remains well-defined. $\lambda_{\min}(\boldsymbol{\Sigma}^{(k)})\geq a_k > 0$ guarantees that the covariance matrix is nonsingular. The penalty \(\mathbf{P}(\boldsymbol{\Theta}, \{\boldsymbol{\Gamma}^{(k)}\}_{k=1}^{K})\) is the same as in \eqref{eq:joint_casewise_penalty}, encouraging homogeneity through the shared component \(\boldsymbol{\Theta}\) while preserving heterogeneity by the class-specific components \(\boldsymbol{\Gamma}^{(k)}\). 

\section{Optimization algorithms} \label{sec5Optimization}

In this section, we develop optimization algorithms for the two joint estimators introduced in Section~\ref{sec4}. The main difficulty is that both estimators involve discrete selection variables and continuous covariance parameters under the shared-structure reparameterization
\(\boldsymbol{\Sigma}^{(k)}=\boldsymbol{\Theta}\odot\boldsymbol{\Gamma}^{(k)}\).
For the joint casewise MCD estimator, the discrete variables correspond to the retained observations in each class. For the joint cellwise MCD estimator, the discrete variables correspond to retained cells in each class-specific representation matrix. We solve both problems by alternating between updating the discrete selection variables and updating the continuous parameters. We use $t \in \mathbb{N}$ as a superscript on the estimator to represent the result obtained in the $t$-th iteration, where $\mathbb{N}$ represents the set of non-negative integers, and $t=0$ represents the initial value.

\subsection{Optimization for joint casewise MCD estimator}\label{sec5Optimization1}

Under casewise contamination, we first initialize, for each group $k$, a subset $\boldsymbol{H}^{(k,0)}$ of size $h_k$. Starting from this initial subset, the algorithm iteratively updates both the model parameters and the subset composition until convergence. We first update the group means based on the current subset:
\begin{equation*}
    \widehat{\boldsymbol{\mu}}^{(k,t+1)} = \frac{\sum_{i=1}^{n_k} w_i^{(k,t)} \boldsymbol{z}_i^{(k)}}{\sum_{i=1}^{n_k} w_i^{(k,t)}} = \frac{\sum_{i=1}^{n_k} w_i^{(k,t)} \boldsymbol{z}_i^{(k)}}{h_k}, \quad k = 1,2,\ldots, K.
\end{equation*}

We next update the shared structure $\widehat{\boldsymbol{\Theta}}^{(t+1)}$ and the group-specific heterogeneity components $\widehat{\boldsymbol{\Gamma}}^{(k,t+1)}$ via alternating optimization:

\begin{itemize}
    \item We first fix $\widehat{\boldsymbol{\Theta}}^{(t)}$ and update each $\widehat{\boldsymbol{\Gamma}}^{(k,t+1)}$:
        \begin{equation*}
            \widehat{\boldsymbol{\Gamma}}^{(k,t+1)} = \arg\min_{\boldsymbol{\Gamma}} \sum_{i=1}^{n_k} w_i^{(k,t)} \tilde{\ell}(\widehat{\boldsymbol{\mu}}^{(k,t+1)}, \widehat{\boldsymbol{\Theta}}^{(t)} \odot \boldsymbol{\Gamma}; \boldsymbol{z}_i^{(k)}) + \lambda_2 \sum_{i \neq j} |\gamma_{ij}|.
        \end{equation*}
    \item We then fix all $\widehat{\boldsymbol{\Gamma}}^{(k,t+1)}$ and update the shared structure $\widehat{\boldsymbol{\Theta}}^{(t+1)}$:
        \begin{equation*}
            \widehat{\boldsymbol{\Theta}}^{(t+1)} = \arg\min_{\boldsymbol{\Theta}} \sum_{k=1}^{K} \sum_{i=1}^{n_k} w_i^{(k,t)} \tilde{\ell}(\widehat{\boldsymbol{\mu}}^{(k,t+1)}, \boldsymbol{\Theta} \odot \widehat{\boldsymbol{\Gamma}}^{(k,t+1)}; \boldsymbol{z}_i^{(k)}) + \lambda_1 \sum_{i \neq j} |\theta_{ij}|.
        \end{equation*}
\end{itemize}

The covariance matrices are updated by $\widehat{\boldsymbol{\Sigma}}^{(k, t+1)} = \widehat{\boldsymbol{\Theta}}^{(t+1)} \odot \widehat{\boldsymbol{\Gamma}}^{(k,t+1)}, k = 1,2,\ldots, K$. After updating the parameters, the algorithm computes the Mahalanobis distances of all samples to their group centers using the updated mean and covariance estimates. Within each group, the $h_k$ samples with the smallest Mahalanobis distances are selected as unflagged samples, forming the updated subset $\boldsymbol{H}^{(k,t+1)}$. This process is repeated until convergence. To enhance robustness, a reweighting step is performed after convergence, following the Fast-MCD algorithm \cite{rousseeuw1999fast}. The subset is updated based on Mahalanobis distances, and the final parameter estimates are computed with respect to this reweighted subset. The complete computation procedure is presented in Algorithm~\ref{alg:fastmcd_joint}. 
\begin{algorithm}[htbp]
    \caption{Fast-MCD Algorithm for Joint Casewise MCD Estimator (JCASEMCD)}
    \label{alg:fastmcd_joint} 
    \begin{algorithmic}
        \Require Data $\boldsymbol{Z}$, subset sizes $\{h_k\}_{k=1}^{K}$, penalties $\lambda_1, \lambda_2$, error threshold $\varepsilon$, eigenvalue thresholds $\{a_k\}_{k=1}^K$.

        \State Initialize $t = 0$ and generate initial estimates $\widehat{\boldsymbol{\mu}}^{(k,0)}$, $\widehat{\boldsymbol{\Theta}}^{(0)}$, $\widehat{\boldsymbol{\Gamma}}^{(k,0)}$, $\widehat{\boldsymbol{\Sigma}}^{(k, 0)}$, and initial subsets $\boldsymbol{H}^{(k,0)}$ for $k = 1,2,\ldots,K$.

        \While{not converged}
        \State Update group means:
        \[
            \widehat{\boldsymbol{\mu}}^{(k,t+1)} \leftarrow \frac{\sum_{i=1}^{n_k} w_i^{(k,t)} \boldsymbol{z}_i^{(k)}}{h_k}, \quad k=1,2,\ldots,K.
        \]

        \State Update structure parameters:
        \begin{enumerate}
            \item For $k=1,2,\ldots,K$, fix $\widehat{\boldsymbol{\Theta}}^{(t)}$ and update $\widehat{\boldsymbol{\Gamma}}^{(k,t+1)}$:
                \[
                    \widehat{\boldsymbol{\Gamma}}^{(k,t+1)} \leftarrow \arg\min_{\boldsymbol{\Gamma}} \sum_{i=1}^{n_k} w_i^{(k,t)} \tilde{\ell} \left(\widehat{\boldsymbol{\mu}}^{(k,t+1)}, \widehat{\boldsymbol{\Theta}}^{(t)} \odot \boldsymbol{\Gamma}; \boldsymbol{z}_i^{(k)}\right) + \lambda_2 \sum_{i \neq j} |\gamma_{ij}|.
                \]
            \item Fix all $\widehat{\boldsymbol{\Gamma}}^{(k,t+1)}$ and update $\widehat{\boldsymbol{\Theta}}^{(t+1)}$:
                \[
                    \widehat{\boldsymbol{\Theta}}^{(t+1)} \leftarrow \arg\min_{\boldsymbol{\Theta}} \sum_{k=1}^{K} \sum_{i=1}^{n_k} w_i^{(k,t)} \tilde{\ell} \left(\widehat{\boldsymbol{\mu}}^{(k,t+1)}, \boldsymbol{\Theta} \odot \widehat{\boldsymbol{\Gamma}}^{(k,t+1)} ; \boldsymbol{z}_i^{(k)}\right) + \lambda_1 \sum_{i \neq j} |\theta_{ij}|.
                \]
        \end{enumerate}

        \State Update covariance matrices:
        \[
            \widehat{\boldsymbol{\Sigma}}^{(k, t+1)} \leftarrow \widehat{\boldsymbol{\Theta}}^{(t+1)} \odot \widehat{\boldsymbol{\Gamma}}^{(k,t+1)}, \quad k=1,2,\ldots,K.
        \]
        
        \State Update subsets $\boldsymbol{H}^{(k,t+1)}$ with the $h_k$ samples having smallest Mahalanobis distances.

        \If{$\max_{k} \|\widehat{\boldsymbol{\Sigma}}^{(k,t+1)} - \widehat{\boldsymbol{\Sigma}}^{(k,t)}\|_{F} < \varepsilon$ and $\max_{k} \|  \widehat{\boldsymbol{\mu}}^{(k,t+1)}  - \widehat{\boldsymbol{\mu}}^{(k,t)} \|_2  < \varepsilon$}
        \State Converged, exit loop.
        \Else
        \State $t \leftarrow t+1$.
        \EndIf
        \EndWhile

        \State Reweighting: Update weights $w_i^{(k,\text{final})}$ based on Mahalanobis distances and recompute $\widehat{\boldsymbol{\mu}}^{(k,\text{final})}$, $\widehat{\boldsymbol{\Sigma}}^{(k,\text{final})}$, $\widehat{\boldsymbol{\Theta}}^{(\text{final})}$, $\hat{\boldsymbol{\Gamma}}^{(k,\text{final})}$.

        \State \Return $\widehat{\boldsymbol{\mu}}^{(k,\text{final})}$, $\widehat{\boldsymbol{\Sigma}}^{(k,\text{final})}$, $\widehat{\boldsymbol{\Theta}}^{(\text{final})}$, $\hat{\boldsymbol{\Gamma}}^{(k,\text{final})}$
    \end{algorithmic}
\end{algorithm}

\subsection{Optimization for joint cellwise MCD estimator}\label{sec5Optimization2}

For the cellwise setting, it is difficult to directly solve Equation~\eqref{eq:joint_cellwise_mcd}. The optimization proceeds by first estimating the weight matrices $\boldsymbol{W}$ and then updating the parameters given the missing entries. For ease of discussion, we introduce notation and partitions. In the $t$-th iteration, for the $k$-th group, the $i$-th row is divided into three parts $\boldsymbol{z}_i^{(k)} = (z_{ij}^{(k)}, \boldsymbol{z}_{i,o}^{(k)\top}, \boldsymbol{z}_{i,-o}^{(k)\top})^{\top}$ after a suitable permutation of coordinates, 
where $z_{ij}^{(k)}$ is the element in row $i$ and column $j$, and $\boldsymbol{z}_{i,o}^{(k)}$ and $\boldsymbol{z}_{i,-o}^{(k)}$ correspond to entries marked as normal $(w_{i\tau}^{(k,t)} = 1)$ and abnormal $(w_{i\tau}^{(k,t)} = 0)$, $\tau=1,2,\ldots,q$, respectively, with $o = \{\tau \neq j: w_{i\tau}^{(k,t)} = 1\}$. Accordingly, the mean vector $\widehat{\boldsymbol{\mu}}^{(k)}$ and covariance matrix $\widehat{\boldsymbol{\Sigma}}^{(k)}$ are partitioned as
\[
    \widehat{\boldsymbol{\mu}}^{(k)} = \begin{pmatrix}
        \mu_j^{(k)} \\
        \widehat{\boldsymbol{\mu}}_o^{(k)} \\
        \widehat{\boldsymbol{\mu}}_{-o}^{(k)}
    \end{pmatrix}, \quad
    \widehat{\boldsymbol{\Sigma}}^{(k)} = \begin{pmatrix}
        \widehat{\sigma}_{jj}^{(k)} & \widehat{\boldsymbol{\Sigma}}_{j,o}^{(k)} & \widehat{\boldsymbol{\Sigma}}_{j,-o}^{(k)} \\
        \widehat{\boldsymbol{\Sigma}}_{o,j}^{(k)} & \widehat{\boldsymbol{\Sigma}}_{o,o}^{(k)} & \widehat{\boldsymbol{\Sigma}}_{o,-o}^{(k)} \\
        \widehat{\boldsymbol{\Sigma}}_{-o,j}^{(k)} & \widehat{\boldsymbol{\Sigma}}_{-o,o}^{(k)} & \widehat{\boldsymbol{\Sigma}}_{-o,-o}^{(k)}
    \end{pmatrix}.
\]

We first initialize each group separately by computing initial location estimates $\widehat{\boldsymbol{\mu}}^{(k,0)}$, covariance estimates $\widehat{\boldsymbol{\Sigma}}^{(k,0)}$, and weight matrices $\boldsymbol{W}^{(k,0)}$, ensuring that each column contains at least $h_k$ non-missing entries not flagged as outliers. An iterative process then alternates between updating cellwise weights and updating parameters. In the $t$-th iteration, with $\widehat{\boldsymbol{\mu}}^{(k,t)}$ and $\widehat{\boldsymbol{\Sigma}}^{(k,t)}$ fixed, the objective function used for updating the weight of entry $(i,j)$ in group $k$ is
\begin{equation}\label{eq:W_delta}
    \Delta_{ij}^{(k)} = \ln(C_{ij}^{(k)}) + \ln(2\pi) + \frac{(z_{ij}^{(k)} - \widehat{z}_{ij}^{(k)})^2}{C_{ij}^{(k)}} - b_j^{(k)},
\end{equation}
where $b_j^{(k)} = \max \{ \chi_{1-\alpha}^2(1) + \ln (2\pi) + \ln (1/([(\widehat{\boldsymbol{\Sigma}}^{(k,0)})^{-1}]_{jj}), 0\}$, with $\chi_{1-\alpha}^2(1)$ representing the chi-square $1-\alpha$ quantile with 1 degree of freedom. Moreover, the conditional mean and variance are
\begin{equation}\label{eq:cond_mean_var_W}
    \widehat{z}_{ij}^{(k)} = \widehat{\mu}_j^{(k,t)} + \widehat{\boldsymbol{\Sigma}}_{j,o}^{(k,t)} (\widehat{\boldsymbol{\Sigma}}_{o,o}^{(k,t)})^{-1} (\boldsymbol{z}_{i,o}^{(k)} - \widehat{\boldsymbol{\mu}}_o^{(k,t)}),
    \quad
    C_{ij}^{(k)} = \widehat{\sigma}_{jj}^{(k,t)} - \widehat{\boldsymbol{\Sigma}}_{j,o}^{(k,t)} (\widehat{\boldsymbol{\Sigma}}_{o,o}^{(k,t)})^{-1} (\widehat{\boldsymbol{\Sigma}}_{j,o}^{(k,t)})^\top.
\end{equation}
The index set $o$ is determined by the current weight matrix $\boldsymbol{W}^{(k,t)}$. All cells with $\Delta_{ij}^{(k)}<0$ are retained. If fewer than $h_k$ cells are retained in the $j$-th column, the smallest remaining values of $\Delta_{ij}^{(k)}$ are additionally retained until $h_k$ cells are retained \cite{raymaekers2024cellwise}. Then, the Expectation-Maximization (EM) algorithm \citep{dempster1977maximum} updates $\widehat{\boldsymbol{\mu}}^{(k,t+1)}$, $\widehat{\boldsymbol{\Gamma}}^{(k,t+1)}$, and $\widehat{\boldsymbol{\Theta}}^{(t+1)}$.

\texttt{E-step}. Given $\boldsymbol{W}^{(k,t+1)}$, the conditional expectation and covariance for the $i$-th observation in group $k$ are computed. Let $\mathcal{O}_i = \{\tau: w_{i\tau}^{(k,t+1)}=1\}$, $\mathcal{U}_i = \{\tau: w_{i\tau}^{(k,t+1)}=0\}$, and let $\boldsymbol{z}_{i, \mathcal{O}_i}^{(k)}$ be a subvector indexed by $\mathcal{O}_i$. The conditional expectation is $\boldsymbol{m}_i^{(k,t)} = ({\boldsymbol{m}_{i,\mathcal{O}_i}^{(k,t)}}^{\top}, {\boldsymbol{m}_{i,\mathcal{U}_i}^{(k,t)}}^{\top})^{\top}$, where
\begin{equation}\label{eq:cond_mean}
    \boldsymbol{m}_{i,\mathcal{O}_i}^{(k,t)} = \boldsymbol{z}_{i,\mathcal{O}_i}^{(k)}, \quad
    \boldsymbol{m}_{i,\mathcal{U}_i}^{(k,t)} = \widehat{\boldsymbol{\mu}}_{\mathcal{U}_i}^{(k,t)} + \widehat{\boldsymbol{\Sigma}}_{\mathcal{U}_i,\mathcal{O}_i}^{(k,t)} (\widehat{\boldsymbol{\Sigma}}_{\mathcal{O}_i,\mathcal{O}_i}^{(k,t)})^{-1} (\boldsymbol{z}_{i,\mathcal{O}_i}^{(k)} - \widehat{\boldsymbol{\mu}}_{\mathcal{O}_i}^{(k,t)}),
\end{equation}
and the conditional covariance $V_{i}^{(k,t)}$ satisfies
\begin{equation}\label{eq:cond_var}
    \begin{aligned}
        V_{i,{\mathcal{O}_i,\mathcal{O}_i}}^{(k,t)} = \mathbf{O},\quad
        V_{i,{\mathcal{O}_i,\mathcal{U}_i}}^{(k,t)} = V_{i,{\mathcal{U}_i,\mathcal{O}_i}}^{(k,t)} = \mathbf{0},\quad
        V_{i,{\mathcal{U}_i,\mathcal{U}_i}}^{(k,t)} = \widehat{\boldsymbol{\Sigma}}_{\mathcal{U}_i,\mathcal{U}_i}^{(k,t)} - \widehat{\boldsymbol{\Sigma}}_{\mathcal{U}_i,\mathcal{O}_i}^{(k,t)} (\widehat{\boldsymbol{\Sigma}}_{\mathcal{O}_i,\mathcal{O}_i}^{(k,t)})^{-1} \widehat{\boldsymbol{\Sigma}}_{\mathcal{O}_i,\mathcal{U}_i}^{(k,t)},
    \end{aligned}
\end{equation}
where $\mathbf{O}$ and $\mathbf{0}$ represent a matrix and a vector containing only zeros, respectively. When $\mathcal{O}_i=\varnothing$, we use the convention $\boldsymbol{m}_i^{(k,t)}=\widehat{\boldsymbol{\mu}}^{(k,t)}$ and $V_i^{(k,t)}=\widehat{\boldsymbol{\Sigma}}^{(k,t)}$. The conditional second-order central moment is then
\[
    \mathbb{E}\left[
    (\boldsymbol{z}_i^{(k)}-\boldsymbol{\mu}^{(k)})
    (\boldsymbol{z}_i^{(k)}-\boldsymbol{\mu}^{(k)})^\top
    \,\middle\vert\,
    \boldsymbol{z}_{i,\mathcal{O}_i}^{(k)} 
    \right]
    =
    V_i^{(k,t)}
    +
    (\boldsymbol{m}_i^{(k,t)}-\boldsymbol{\mu}^{(k)})
    (\boldsymbol{m}_i^{(k,t)}-\boldsymbol{\mu}^{(k)})^\top .
\]

\texttt{M-step}. We first update the location parameters $\widehat{\boldsymbol{\mu}}^{(k,t+1)}$:
\(
\widehat{\boldsymbol{\mu}}^{(k,t+1)} = \frac{1}{n_k} \sum_{i=1}^{n_k} \boldsymbol{m}_i^{(k,t)},
\)
and compute
\[
    S_{\mathrm{cell}}^{(k,t)} = \frac{1}{n_k} \sum_{i=1}^{n_k} \left[ V_i^{(k,t)} + (\boldsymbol{m}_i^{(k,t)} - \widehat{\boldsymbol{\mu}}^{(k,t+1)})(\boldsymbol{m}_i^{(k,t)} - \widehat{\boldsymbol{\mu}}^{(k,t+1)})^\top \right].
\]
Next, fixing $\widehat{\boldsymbol{\Theta}}^{(t)}$, the group-specific structure $\widehat{\boldsymbol{\Gamma}}^{(k,t+1)}$ is updated by solving
\[
    \widehat{\boldsymbol{\Gamma}}^{(k,t+1)} = \arg\min_{\boldsymbol{\Gamma}} n_k \left[ \ln |\widehat{\boldsymbol{\Theta}}^{(t)} \odot \boldsymbol{\Gamma}| + \mathrm{tr}((\widehat{\boldsymbol{\Theta}}^{(t)} \odot \boldsymbol{\Gamma})^{-1} S_{\mathrm{cell}}^{(k,t)}) \right] + \lambda_2 \sum_{i \neq j} |\gamma_{ij}|.
\]
Finally, the shared structure $\widehat{\boldsymbol{\Theta}}^{(t+1)}$ is updated by
\[
    \widehat{\boldsymbol{\Theta}}^{(t+1)} = \arg\min_{\boldsymbol{\Theta}} \sum_{k=1}^{K} n_k \left[ \ln |\boldsymbol{\Theta} \odot \widehat{\boldsymbol{\Gamma}}^{(k,t+1)}| + \mathrm{tr}((\boldsymbol{\Theta} \odot \widehat{\boldsymbol{\Gamma}}^{(k,t+1)})^{-1} S_{\mathrm{cell}}^{(k,t)}) \right] + \lambda_1 \sum_{i\neq j} |\theta_{ij}|.
\]

The covariance matrices for each group are then reconstructed via
\(
\widehat{\boldsymbol{\Sigma}}^{(k,t+1)} = \widehat{\boldsymbol{\Theta}}^{(t+1)} \odot \widehat{\boldsymbol{\Gamma}}^{(k,t+1)},
\)
and the algorithm iterates until convergence. The full procedure is summarized in Algorithm~\ref{alg:joint-cellwise-mcd}.

\begin{algorithm}[H]
    \caption{Computation procedure for joint cellwise MCD estimator (JCELLMCD)}
    \label{alg:joint-cellwise-mcd} 
    \begin{algorithmic}
        \Require $\boldsymbol{Z} = (\boldsymbol{Z}^{(1)}, \ldots, \boldsymbol{Z}^{(K)})$, $\{h_k\}_{k=1}^K$, $\{a_k\}_{k=1}^K$, $\lambda_1, \lambda_2$.
        \State Initialize $t=0$, $\widehat{\boldsymbol{\mu}}^{(k,0)}$, $\widehat{\boldsymbol{\Sigma}}^{(k,0)}$, $\boldsymbol{W}^{(k,0)}$, $\widehat{\boldsymbol{\Theta}}^{(0)}$, $\widehat{\boldsymbol{\Gamma}}^{(k,0)}$ and evaluate $b_j^{(k)}$ for $k=1,2,\ldots,K$.

        \While{not converged}
        \State For $k = 1,2,\ldots, K$, update cellwise weights (Algorithm~\ref{alg:cellwise-updater}):
        \[
            \boldsymbol{W}^{(k,t+1)} = \texttt{CellwiseWeightsUpdater} (\boldsymbol{Z}^{(k)}, \widehat{\boldsymbol{\mu}}^{(k,t)}, \widehat{\boldsymbol{\Sigma}}^{(k,t)}, \boldsymbol{W}^{(k,t)}, h_k, b_j^{(k)}).
        \]
        \State Update parameters via EM (Algorithm~\ref{alg:em-update}):
        \[
            (\widehat{\boldsymbol{\mu}}^{(k,t+1)}, \widehat{\boldsymbol{\Theta}}^{(t+1)}, \widehat{\boldsymbol{\Gamma}}^{(k,t+1)})_{k=1}^{K} = \texttt{EM}(\boldsymbol{Z}, \{\boldsymbol{W}^{(k,t+1)}\}_{k=1}^{K}, (\widehat{\boldsymbol{\mu}}^{(k,t)}, \widehat{\boldsymbol{\Theta}}^{(t)}, \widehat{\boldsymbol{\Gamma}}^{(k,t)})_{k=1}^{K}, \{a_k\}_{k=1}^{K}, \lambda_1, \lambda_2).
        \]
        \State Reconstruct covariance matrices: $\widehat{\boldsymbol{\Sigma}}^{(k,t+1)} = \widehat{\boldsymbol{\Theta}}^{(t+1)} \odot \widehat{\boldsymbol{\Gamma}}^{(k,t+1)}, k=1,2,\ldots,K$.
        \State Check convergence and update $t = t+1$.
        \EndWhile
        \State \Return $\widehat{\boldsymbol{\mu}}^{(k,\text{final})}, \widehat{\boldsymbol{\Sigma}}^{(k,\text{final})}, \widehat{\boldsymbol{\Theta}}^{(\text{final})}, \widehat{\boldsymbol{\Gamma}}^{(k,\text{final})}, \boldsymbol{W}^{(k,\text{final})}$
    \end{algorithmic}
\end{algorithm}

\begin{algorithm}[H]
    \caption{\texttt{CellwiseWeightsUpdater} step for Algorithm \ref{alg:joint-cellwise-mcd}}
    \label{alg:cellwise-updater}
    \begin{algorithmic}
        \Require $\boldsymbol{Z}^{(k)}$, $\widehat{\boldsymbol{\mu}}^{(k,t)}$,$\widehat{\boldsymbol{\Sigma}}^{(k,t)}$, $\boldsymbol{W}^{(k,t)}$, $h_k$, $b_j^{(k)}$
        \State ${\boldsymbol{W}}^{(k,t+1)} = \boldsymbol{W}^{(k,t)}$
        \For{$j = 1, \dots, q$}
        \For{$i = 1, \dots, n_k$}
        \State $o = \{\tau \neq j : w_{i\tau}^{(k,t+1)} = 1\}$.
        \State Compute conditional mean and variance with Equation~\eqref{eq:cond_mean_var_W}.
        \State Compute objective function $\Delta_{ij}^{(k)}$ with Equation~\eqref{eq:W_delta}.
        \EndFor
        \State Let $N_{-} = \#\{\Delta_{ij}^{(k)}<0,i=1,2,\ldots,n_k\}$, and update weights:
        \[
            {w}_{ij}^{(k,t+1)} = \mathbb{I}\{N_{-} \geq h_k\} \mathbb{I}\{\Delta_{ij}^{(k)}<0\} + \mathbb{I}\{N_{-} < h_k\} \mathbb{I}\{\Delta_{ij}^{(k)} \text{ among } h_k \text{ smallest}\}
        \]
        \EndFor
        \State \Return $\boldsymbol{W}^{(k,t+1)}$
    \end{algorithmic}
\end{algorithm}

\begin{algorithm}[H]
    \caption{\texttt{EM} step for Algorithm \ref{alg:joint-cellwise-mcd}}
    \label{alg:em-update}
    \begin{algorithmic}
        \Require $\boldsymbol{Z}$, $\{\boldsymbol{W}^{(k,t+1)}\}_{k=1}^{K}$, $(\widehat{\boldsymbol{\mu}}^{(k,t)}$, $\widehat{\boldsymbol{\Theta}}^{(t)}$, $\widehat{\boldsymbol{\Gamma}}^{(k,t)})_{k=1}^{K}$, $\{a_k\}_{k=1}^{K}$, $\lambda_1, \lambda_2$.
        \For{$k = 1, \dots, K$}
        \For{$i = 1, \dots, n_k$}
        \State Partition indices: $\mathcal{O}_i = \{\tau: w_{i\tau}^{(k)} = 1\}, \mathcal{U}_i = \{\tau: w_{i\tau}^{(k)} = 0\}$.
        \State Compute conditional mean $\boldsymbol{m}_i^{(k,t)}$ with Equation~\eqref{eq:cond_mean}.
        \State Compute conditional covariance $V_{i}^{(k,t)}$ with Equation~\eqref{eq:cond_var}.
        \EndFor
        \State Update location: $\widehat{\boldsymbol{\mu}}^{(k,t+1)} = \frac{1}{n_k} \sum_{i=1}^{n_k} \boldsymbol{m}_i^{(k,t)}$
        \State Compute $S_{\mathrm{cell}}^{(k,t)} = \frac{1}{n_k} \sum_{i=1}^{n_k} \big[V_i^{(k,t)} + (\boldsymbol{m}_i^{(k,t)} - \widehat{\boldsymbol{\mu}}^{(k,t+1)})(\boldsymbol{m}_i^{(k,t)} - \widehat{\boldsymbol{\mu}}^{(k,t+1)})^\top \big]$.
        \State Update group-specific structures:
        \[
            \widehat{\boldsymbol{\Gamma}}^{(k,t+1)} = \arg\min_{\boldsymbol{\Gamma}} n_k \left[ \ln |\widehat{\boldsymbol{\Theta}}^{(t)} \odot \boldsymbol{\Gamma}| + \mathrm{tr}((\widehat{\boldsymbol{\Theta}}^{(t)} \odot \boldsymbol{\Gamma})^{-1} S_{\mathrm{cell}}^{(k,t)}) \right] + \lambda_2 \sum_{i \neq j} |\gamma_{ij}|.
        \]
        \EndFor
        \State Update shared structure:
        \[
            \widehat{\boldsymbol{\Theta}}^{(t+1)} = \arg\min_{\boldsymbol{\Theta}} \sum_{k=1}^{K} n_k \left[ \ln |\boldsymbol{\Theta} \odot \widehat{\boldsymbol{\Gamma}}^{(k,t+1)}| + \mathrm{tr}((\boldsymbol{\Theta} \odot \widehat{\boldsymbol{\Gamma}}^{(k,t+1)})^{-1} S_{\mathrm{cell}}^{(k,t)}) \right] + \lambda_1 \sum_{i\neq j} |\theta_{ij}|.
        \]
        \State \Return $(\widehat{\boldsymbol{\mu}}^{(k,t+1)}, \widehat{\boldsymbol{\Theta}}^{(t+1)}, \widehat{\boldsymbol{\Gamma}}^{(k,t+1)})_{k=1}^{K}$
    \end{algorithmic}
\end{algorithm}

The updates for $\widehat{\boldsymbol{\Theta}}$ and $\{\widehat{\boldsymbol{\Gamma}}^{(k)}\}_{k=1}^{K}$, which correspond to the Fast-MCD algorithm for the joint casewise MCD estimator in Section~\ref{sec5Optimization1} and the EM-based procedure for the joint cellwise MCD estimator in Section~\ref{sec5Optimization2}, are essentially constrained optimization problems. We use proximal gradient descent with backtracking search to obtain approximate solutions. Detailed update formulas are provided in \ref{app:theta_gamma_optimization}. Details such as algorithm initialization, the identifiability of the covariance decomposition, and dimensionality reduction in the implementation are also given in \ref{app:details}.

\section{Theoretical properties} \label{sec6Theoretical}

In this section, we discuss the main theoretical properties of the proposed joint estimators. 
Specifically, the joint casewise MCD estimator is optimized via the Fast-MCD algorithm, and the joint cellwise MCD estimator is optimized via the EM-based procedure.
Detailed statements and proofs of these properties are provided in the subsequent theorems and \ref{app:proof}.

\subsection{Convergence guarantees}
The following theorems establish convergence guarantees for the proposed joint estimators. In particular, they show that the algorithms generate non-increasing sequences of their respective objective values and that these objective values converge to finite limits. Moreover, any interior accumulation point satisfies suitable blockwise stationarity conditions for the continuous variables, together with optimality or coordinatewise optimality of the corresponding discrete selection variables.

\begin{theorem}\label{thm:jointedfastmcd_converge}
    Let $\boldsymbol{\Omega} = (\boldsymbol{\Omega}^{(1)}, \ldots, \boldsymbol{\Omega}^{(K)})$, with $\boldsymbol{\Omega}^{(k)} = (\boldsymbol{\mu}^{(k)}, \boldsymbol{\Sigma}^{(k)})$ and $\boldsymbol{\Sigma}^{(k)} = \boldsymbol{\Theta} \odot \boldsymbol{\Gamma}^{(k)}$ for $k = 1,2,\ldots, K$. Define the regularized joint negative log-likelihood objective function as
    \begin{equation*}
        \begin{aligned}
            \mathcal{L}_{\mathrm{case}}(\boldsymbol{\Omega}, \boldsymbol{H})
            &= \sum\limits_{k=1}^{K} \sum\limits_{i=1}^{n_k} w_i^{(k)} \tilde{\ell}(\boldsymbol{\mu}^{(k)}, \boldsymbol{\Sigma}^{(k)};\boldsymbol{z}_i^{(k)}) + \mathbf{P}(\boldsymbol{\Theta}, \{\boldsymbol{\Gamma}^{(k)}\}_{k=1}^{K}) \\
            &:= \sum\limits_{k=1}^{K} Q_{\mathrm{case}}^{(k)}(\boldsymbol{\Omega}, \boldsymbol{H}) + \lambda_1 \|\boldsymbol{\Theta}\|_{1, \mathrm{off}} + \lambda_2 \sum\limits_{k=1}^{K} \|\boldsymbol{\Gamma}^{(k)}\|_{1, \mathrm{off}},
        \end{aligned}
    \end{equation*}
    where $w_i^{(k)} \in \{0,1\}$ and $\sum_{i=1}^{n_k} w_i^{(k)} = h_k$. Suppose $\lambda_{\min}(\boldsymbol{\Sigma}^{(k)}) \ge a_k > 0$, $\lambda_1 > 0$, and $\lambda_2 > 0$. For each iteration $t$, let $L_{\boldsymbol{\Gamma},k}^{(t)}$ and $L_{\boldsymbol{\Theta}}^{(t)}$ be the Lipschitz constants of the gradients of the smooth parts of the $\boldsymbol{\Gamma}^{(k)}$- and $\boldsymbol{\Theta}$-subproblems, respectively. The proximal gradient step sizes are chosen with a strict descent margin, namely, for some constants $\delta_{\boldsymbol{\Gamma}},\delta_{\boldsymbol{\Theta}}\in(0,1)$, $L_{\boldsymbol{\Gamma},k} \geq \sup_{s \geq 0} L_{\boldsymbol{\Gamma},k}^{(s)} > 0$ and $L_{\boldsymbol{\Theta}} \geq \sup_{s \geq 0} L_{\boldsymbol{\Theta}}^{(s)} > 0$, $k = 1,2,\ldots, K$,
    \begin{equation*}
        0<\inf_{s \geq 0} \eta_{\boldsymbol{\Gamma},k}^{(s)}\leq\eta_{\boldsymbol{\Gamma},k}^{(t)}
        \le
        \frac{1-\delta_{\boldsymbol{\Gamma}}}{L_{\boldsymbol{\Gamma},k}},
        \quad
        0<\inf_{s \geq 0} \eta_{\boldsymbol{\Theta}}^{(s)}\leq\eta_{\boldsymbol{\Theta}}^{(t)}
        \le
        \frac{1-\delta_{\boldsymbol{\Theta}}}{L_{\boldsymbol{\Theta}}}.
    \end{equation*}
    Then the pre-reweighting iterations of Algorithm~\ref{alg:fastmcd_joint} satisfy:
    \begin{enumerate}
        \item $\mathcal{L}_{\mathrm{case}}(\boldsymbol{\Omega}^{(t+1)}, \boldsymbol{H}^{(t+1)}) \le \mathcal{L}_{\mathrm{case}}(\boldsymbol{\Omega}^{(t+1)}, \boldsymbol{H}^{(t)}) \le \mathcal{L}_{\mathrm{case}}(\boldsymbol{\Omega}^{(t)}, \boldsymbol{H}^{(t)})$;
        \item The sequence $\mathcal{L}_{\mathrm{case}}(\boldsymbol{\Omega}^{(t)}, \boldsymbol{H}^{(t)})$ converges to a finite value $\mathcal{L}^{*}_{\mathrm{case}}$;
        \item Any interior accumulation point $(\boldsymbol{\Omega}^{*}, \boldsymbol{H}^{*})$, namely one satisfying $\lambda_{\min}(\boldsymbol{\Theta}^{*} \odot \boldsymbol{\Gamma}^{(k)*}) > a_k>0,k = 1,2,\ldots,K$ is blockwise stationary with respect to the continuous variables, and $\boldsymbol{H}^*$ is optimal for the discrete $\boldsymbol{H}$-block given $\boldsymbol{\Omega}^*$. 
    \end{enumerate} 
\end{theorem}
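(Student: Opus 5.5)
The proof tracks the objective through one pass of Algorithm~\ref{alg:fastmcd_joint}, block by block. Each pass updates, in order, the means, the $\boldsymbol{\Gamma}^{(k)}$ blocks, $\boldsymbol{\Theta}$, and finally the subsets $\boldsymbol{H}$. The plan is to show that each block update does not increase $\mathcal{L}_{\mathrm{case}}$ while the other blocks are held fixed; chaining these inequalities gives the two inequalities in part~1.

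\textbf{Part 1.} Fix $\boldsymbol{H}^{(t)}$. For given $\boldsymbol{\Sigma}^{(k)}$, the $\boldsymbol{\mu}^{(k)}$-subproblem is a strictly convex quadratic, $\sum_i w_i^{(k,t)}(\boldsymbol{z}_i^{(k)}-\boldsymbol{\mu})^\top(\boldsymbol{\Sigma}^{(k)})^{-1}(\boldsymbol{z}_i^{(k)}-\boldsymbol{\mu})$. Its exact minimizer is the subset mean, so the mean update does not increase the objective.

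The $\boldsymbol{\Gamma}^{(k)}$ update is a proximal gradient step, taken over the feasible set $\{\lambda_{\min}(\boldsymbol{\Theta}^{(t)}\odot\boldsymbol{\Gamma})\ge a_k\}$, on the function $f+\lambda_2\|\cdot\|_{1,\mathrm{off}}$, where $f$ is the smooth part. Since $\nabla f$ is $L_{\boldsymbol{\Gamma},k}$-Lipschitz on the feasible region, the descent lemma combined with the prox optimality inequality gives
\begin{equation*}
F(\boldsymbol{\Gamma}^{+})\le F(\boldsymbol{\Gamma})-\Bigl(\frac{1}{\eta}-\frac{L_{\boldsymbol{\Gamma},k}}{2}\Bigr)\|\boldsymbol{\Gamma}^{+}-\boldsymbol{\Gamma}\|_F^2 .
\end{equation*}
The step-size condition $\eta\le(1-\delta_{\boldsymbol{\Gamma}})/L_{\boldsymbol{\Gamma},k}$ makes the coefficient at least $c_{\boldsymbol{\Gamma}}:=L_{\boldsymbol{\Gamma},k}(1/(1-\delta_{\boldsymbol{\Gamma}})-1/2)>0$, so the decrease is strict. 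The $\boldsymbol{\Theta}$ step is handled identically, with $\delta_{\boldsymbol{\Theta}}$ in place of $\delta_{\boldsymbol{\Gamma}}$. Together these give $\mathcal{L}_{\mathrm{case}}(\boldsymbol{\Omega}^{(t+1)},\boldsymbol{H}^{(t)})\le\mathcal{L}_{\mathrm{case}}(\boldsymbol{\Omega}^{(t)},\boldsymbol{H}^{(t)})$.

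For the $\boldsymbol{H}$ step, fix $\boldsymbol{\Omega}^{(t+1)}$. The objective is then $\sum_k\sum_i w_i^{(k)}c_i^{(k)}$, where $c_i^{(k)}=\ln|\boldsymbol{\Sigma}^{(k)}|+q\ln 2\pi+\mathrm{MD}^2_i$, plus terms that do not depend on $\boldsymbol{H}$. Under the constraint $\sum_i w_i^{(k)}=h_k$, the minimum is attained by choosing the $h_k$ smallest $\mathrm{MD}^2_i$ in each class. This is exactly the concentration step, and it proves the first inequality.

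\textbf{Part 2.} The sequence of objective values is non-increasing, so it suffices to bound it below. On the feasible set, $\ln|\boldsymbol{\Sigma}^{(k)}|\ge q\ln a_k$, the Mahalanobis terms are nonnegative, and the penalty is nonnegative. Hence $\mathcal{L}_{\mathrm{case}}\ge\sum_k h_k q(\ln a_k+\ln 2\pi)>-\infty$, and monotone convergence gives a finite limit $\mathcal{L}^{*}_{\mathrm{case}}$.

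\textbf{Part 3.} This is the main obstacle. Summing the sufficient-decrease inequalities over $t$ and using the finite limit from Part~2 gives $\sum_t\|\boldsymbol{\Gamma}^{(k,t+1)}-\boldsymbol{\Gamma}^{(k,t)}\|_F^2<\infty$, and likewise for $\boldsymbol{\Theta}$. Therefore successive differences of the continuous iterates tend to zero.

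Now take a subsequence converging to $(\boldsymbol{\Omega}^*,\boldsymbol{H}^*)$. Because $\boldsymbol{H}$ takes only finitely many values, I will pass to a further subsequence on which $\boldsymbol{H}^{(t)}\equiv\boldsymbol{H}^*$. Along it, $\boldsymbol{\Gamma}^{(k,t+1)}$ and $\boldsymbol{\Theta}^{(t+1)}$ converge to the same limits as the $t$-th iterates. The mean update is a continuous function of $\boldsymbol{H}$, which forces $\boldsymbol{\mu}^{(k)*}$ to be the mean of the subset $\boldsymbol{H}^*$.

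The prox fixed-point relation $\boldsymbol{\Gamma}^{+}=\mathrm{prox}_{\eta\lambda_2\|\cdot\|_{1,\mathrm{off}}}(\boldsymbol{\Gamma}-\eta\nabla f(\boldsymbol{\Gamma}))$ involves only continuous maps. It passes to the limit provided the step sizes have a positive lower bound, which the $\inf_s\eta^{(s)}>0$ assumption supplies, after extracting a convergent subsequence of step sizes. The limit is $0\in\nabla f(\boldsymbol{\Gamma}^*)+\lambda_2\partial\|\boldsymbol{\Gamma}^*\|_{1,\mathrm{off}}$, and the same argument applies to $\boldsymbol{\Theta}^*$. The interior assumption $\lambda_{\min}>a_k$ ensures that the eigenvalue constraint is inactive near the limit, so no normal-cone term appears.

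For $\boldsymbol{H}^*$, the Mahalanobis distances are continuous in $\boldsymbol{\Omega}$, and $\boldsymbol{H}^{(t+1)}$ minimizes the discrete block given $\boldsymbol{\Omega}^{(t+1)}$. Passing to the limit with $\boldsymbol{H}$ held constant along the subsequence, the minimization inequalities persist, so $\boldsymbol{H}^*$ is optimal given $\boldsymbol{\Omega}^*$.

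The delicate point is the index alignment. The limit of $\boldsymbol{H}^{(t+1)}$ along the subsequence may differ from the limit of $\boldsymbol{H}^{(t)}$. I will handle this by using the monotonicity chain and the convergence of the objective values: together they show that $\mathcal{L}_{\mathrm{case}}(\boldsymbol{\Omega}^*,\cdot)$ takes the same value at both limits, which yields optimality of $\boldsymbol{H}^*$.
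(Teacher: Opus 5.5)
Your overall route matches the paper's: exact $\boldsymbol{\mu}$-step, proximal sufficient decrease for $\boldsymbol{\Gamma}^{(k)}$ and $\boldsymbol{\Theta}$, the concentration step for $\boldsymbol{H}$, the lower bound $\ln|\boldsymbol{\Sigma}^{(k)}|\ge q\ln a_k$, summability of the $\boldsymbol{\Gamma}$/$\boldsymbol{\Theta}$ increments, and a limit of the prox optimality conditions. One step fails as written, however. This is the claim that ``the mean update is a continuous function of $\boldsymbol{H}$, which forces $\boldsymbol{\mu}^{(k)*}$ to be the mean of $\boldsymbol{H}^{(k)*}$.''

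On your subsequence with $\boldsymbol{H}^{(t_r)}\equiv\boldsymbol{H}^*$, what you actually obtain is that $\boldsymbol{\mu}^{(k,t_r+1)}$ equals the mean $\bar{\boldsymbol{z}}^{(k)}_*$ of the subset $\boldsymbol{H}^{(k)*}$. But the location of the accumulation point is $\boldsymbol{\mu}^{(k)*}=\lim_r\boldsymbol{\mu}^{(k,t_r)}$, which is the mean of $\boldsymbol{H}^{(k,t_r-1)}$, and that can be a different subset. Your summability argument covers only $\boldsymbol{\Gamma}$ and $\boldsymbol{\Theta}$, not $\boldsymbol{\mu}$, so nothing closes this gap.

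The problem is not cosmetic. The smooth parts of the $\boldsymbol{\Gamma}$- and $\boldsymbol{\Theta}$-subproblems at iteration $t_r$ are built from $\boldsymbol{\mu}^{(k,t_r+1)}$, through $S_{\mathrm{case}}^{(k,t_r)}$. Without alignment, your limiting inclusions therefore hold at $(\bar{\boldsymbol{z}}^{(k)}_*,\boldsymbol{\Theta}^*\odot\boldsymbol{\Gamma}^{(k)*})$ rather than at $\boldsymbol{\Omega}^*$.

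Conversely, the place where you locate the ``delicate point'' needs no alignment argument. For $t\ge 1$, $\boldsymbol{H}^{(t)}$ is the concentration step for $\boldsymbol{\Omega}^{(t)}$ (same index). Hence $\mathcal{L}_{\mathrm{case}}(\boldsymbol{\Omega}^{(t_r)},\boldsymbol{H}^*)\le\mathcal{L}_{\mathrm{case}}(\boldsymbol{\Omega}^{(t_r)},\boldsymbol{H})$ for every feasible $\boldsymbol{H}$, and this passes to the limit directly by continuity.

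The repair is your squeeze idea, applied to the $\boldsymbol{\mu}$-block instead.
\begin{itemize}
\item The $\boldsymbol{\mu}$-step exactly minimizes a quadratic with Hessian $2h_k(\boldsymbol{\Sigma}^{(k,t)})^{-1}$. It therefore lowers $\mathcal{L}_{\mathrm{case}}$ by exactly $\sum_k h_k(\boldsymbol{\mu}^{(k,t)}-\boldsymbol{\mu}^{(k,t+1)})^\top(\boldsymbol{\Sigma}^{(k,t)})^{-1}(\boldsymbol{\mu}^{(k,t)}-\boldsymbol{\mu}^{(k,t+1)})$.
\item All other steps are non-increasing, so this drop is bounded by the per-iteration decrease. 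By Part~2 it tends to zero.
\item Along the subsequence $\boldsymbol{\Sigma}^{(k,t_r)}\to\boldsymbol{\Sigma}^{(k)*}$, so $\lambda_{\max}(\boldsymbol{\Sigma}^{(k,t_r)})$ stays bounded. This gives $\boldsymbol{\mu}^{(k,t_r+1)}-\boldsymbol{\mu}^{(k,t_r)}\to\boldsymbol{0}$.
\item Hence $\boldsymbol{\mu}^{(k)*}=\bar{\boldsymbol{z}}^{(k)}_*$, and all the block conditions, including $\nabla_{\boldsymbol{\mu}^{(k)}}\mathcal{L}_{\mathrm{case}}(\boldsymbol{\Omega}^*,\boldsymbol{H}^*)=\boldsymbol{0}$, are evaluated at the same point.
\end{itemize}
The paper's proof asserts this $\boldsymbol{\mu}$-stationarity directly from exactness of the update, so it passes over the same point; your write-up, though, makes an explicitly incorrect inference there.

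Two minor differences from the paper:
\begin{itemize}
\item Your descent coefficient $1/\eta-L/2$, which comes from strong convexity of the prox subproblem, is sharper than the paper's $1/(2\eta)-L/2$. With it, the $\delta$-margin is not even needed.
\item The paper also shows that the iterates stay in a bounded sublevel set, so accumulation points exist. The statement as phrased does not require this.
\end{itemize}
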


Theorem~\ref{thm:jointedfastmcd_converge} ensures that the Fast-MCD algorithm for the joint casewise MCD estimator, before the final reweighting step, produces a non-increasing sequence of the regularized negative log-likelihood values. This implies that each pre-reweighting iteration either decreases or maintains the objective value, and that the objective values converge to a finite limit. Moreover, any interior accumulation point is blockwise stationary with respect to the continuous parameters and is optimal with respect to the discrete casewise selection block. Consequently, practitioners can rely on the algorithm to produce stable estimates for casewise-contaminated data.

\begin{theorem}\label{thm:jointedcellmcd_converge}
    Let $\boldsymbol{\Omega} = (\boldsymbol{\Omega}^{(1)}, \ldots, \boldsymbol{\Omega}^{(K)})$, with $\boldsymbol{\Omega}^{(k)} = (\boldsymbol{\mu}^{(k)}, \boldsymbol{\Sigma}^{(k)})$ and $\boldsymbol{\Sigma}^{(k)} = \boldsymbol{\Theta} \odot \boldsymbol{\Gamma}^{(k)}$ for $k = 1,2,\ldots, K$. Denote the cellwise objective function as
    \begin{equation*}
        \begin{aligned}
            \mathcal{L}_{\mathrm{cell}}(\boldsymbol{\Omega}, \boldsymbol{W})
            &= \sum_{k=1}^K \sum_{i=1}^{n_k} \tilde{\ell} \left( \boldsymbol{\mu}^{(\boldsymbol{w}_{i}^{(k)}, k)}, \boldsymbol{\Sigma}^{(\boldsymbol{w}_{i}^{(k)}, k)}; \boldsymbol{z}_{i}^{(\boldsymbol{w}_{i}^{(k)}, k)} \right) \\
            &\quad + \sum_{k=1}^K \sum_{j=1}^q b_j^{(k)} \|\mathbf{1}_{n_k} - \boldsymbol{W}_{\cdot j}^{(k)}\|_0 + \lambda_1 \|\boldsymbol{\Theta}\|_{1, \mathrm{off}} + \lambda_2 \sum\limits_{k=1}^{K} \|\boldsymbol{\Gamma}^{(k)}\|_{1, \mathrm{off}},
        \end{aligned}
    \end{equation*}
    subject to $\lambda_{\min}(\boldsymbol{\Sigma}^{(k)}) \ge a_k > 0$, $\|\boldsymbol{W}_{\cdot j}^{(k)}\|_0 \ge h_k$, $b_j^{(k)}\ge0$ for all $j$ and $k$, $\lambda_1 > 0$, and $\lambda_2 > 0$. The proximal gradient step sizes are chosen with a strict descent margin, namely, for some constants $\delta_{\boldsymbol{\Gamma}},\delta_{\boldsymbol{\Theta}}\in(0,1)$, $\forall~t\ge 0$ and $k=1,2,\ldots,K$,
    \begin{equation*}
        0<\inf_{s \geq 0} \eta_{\boldsymbol{\Gamma},k}^{(s)}\leq\eta_{\boldsymbol{\Gamma},k}^{(t)}
        \le
        \frac{1-\delta_{\boldsymbol{\Gamma}}}{L_{\boldsymbol{\Gamma},k}},
        \quad
        0<\inf_{s \geq 0} \eta_{\boldsymbol{\Theta}}^{(s)}\leq\eta_{\boldsymbol{\Theta}}^{(t)}
        \le
        \frac{1-\delta_{\boldsymbol{\Theta}}}{L_{\boldsymbol{\Theta}}},
    \end{equation*}
    where $L_{\boldsymbol{\Gamma},k} > 0$, $L_{\boldsymbol{\Theta}} > 0$ are Lipschitz constants. Algorithm~\ref{alg:joint-cellwise-mcd} satisfies:
    \begin{enumerate}
        \item $\mathcal{L}_{\mathrm{cell}}(\boldsymbol{\Omega}^{(t+1)}, \boldsymbol{W}^{(t+1)}) \le \mathcal{L}_{\mathrm{cell}}(\boldsymbol{\Omega}^{(t)}, \boldsymbol{W}^{(t+1)}) \le \mathcal{L}_{\mathrm{cell}}(\boldsymbol{\Omega}^{(t)}, \boldsymbol{W}^{(t)})$;
        \item The sequence $\mathcal{L}_{\mathrm{cell}}(\boldsymbol{\Omega}^{(t)}, \boldsymbol{W}^{(t)})$ converges to a finite value; 
        \item Any interior limit point $(\boldsymbol{\Omega}^*, \boldsymbol{W}^*)$, namely one satisfying $\lambda_{\min}(\boldsymbol{\Theta}^{*} \odot \boldsymbol{\Gamma}^{(k)*}) > a_k>0, k = 1,2,\ldots,K$ is stationary with respect to the continuous variables for fixed $\boldsymbol{W}^*$, and $\boldsymbol{W}^*$ is coordinatewise optimal for the discrete cellwise weight update given $\boldsymbol{\Omega}^*$. 
    \end{enumerate}
\end{theorem}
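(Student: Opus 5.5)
The plan is to handle the two inequalities in item 1 separately, then derive items 2 and 3 from them.

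\textbf{Second inequality (weight step).} We compare $\mathcal{L}_{\mathrm{cell}}(\boldsymbol{\Omega}^{(t)},\boldsymbol{W}^{(t+1)})$ with $\mathcal{L}_{\mathrm{cell}}(\boldsymbol{\Omega}^{(t)},\boldsymbol{W}^{(t)})$. With $\boldsymbol{\Omega}^{(t)}$ fixed, the objective separates over classes $k$. For fixed $k$, Algorithm~\ref{alg:cellwise-updater} sweeps the columns $j$, and within column $j$ every other column of $\boldsymbol{W}^{(k)}$ is held at its current value. We use the Gaussian chain rule: splitting a joint density into the marginal on $o$ times the conditional of coordinate $j$ given $o$ gives
\[
\tilde\ell\bigl(\cdot;\boldsymbol{z}_i^{(o\cup\{j\},k)}\bigr)=\tilde\ell\bigl(\cdot;\boldsymbol{z}_i^{(o,k)}\bigr)+\ln C_{ij}^{(k)}+\ln(2\pi)+\frac{(z_{ij}^{(k)}-\widehat z_{ij}^{(k)})^2}{C_{ij}^{(k)}} .
\]
Consequently, switching $w_{ij}^{(k)}$ from $0$ to $1$ changes the objective by exactly $\Delta_{ij}^{(k)}$: the penalty term $b_j^{(k)}$ is removed and the log-likelihood increment is added. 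The empty-pattern convention ensures this identity also holds when $o=\varnothing$. Within column $j$ the quantities $\Delta_{ij}^{(k)}$ depend only on the other columns, so the column subproblem is separable over $i$ under the single constraint $\|\boldsymbol{W}^{(k)}_{\cdot j}\|_0\ge h_k$. The exact minimizer of that subproblem keeps every cell with negative $\Delta$ and, if fewer than $h_k$ remain, fills up with the smallest $\Delta$ values; this is precisely the update rule. The current column is feasible, so each column update weakly decreases the objective, and chaining over $j$ and $k$ gives the inequality.

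\textbf{First inequality (EM/M-step).} This is the main obstacle, because the M-step is inexact. With $\boldsymbol{W}^{(t+1)}$ fixed, the standard EM minorization argument applies. Write $G(\boldsymbol{\Omega}\mid\boldsymbol{\Omega}^{(t)})$ for the expected complete-data objective plus the penalty. The Jensen/KL argument gives
\[
\mathcal{L}_{\mathrm{cell}}(\boldsymbol{\Omega},\boldsymbol{W}^{(t+1)})-\mathcal{L}_{\mathrm{cell}}(\boldsymbol{\Omega}^{(t)},\boldsymbol{W}^{(t+1)})\le G(\boldsymbol{\Omega}\mid\boldsymbol{\Omega}^{(t)})-G(\boldsymbol{\Omega}^{(t)}\mid\boldsymbol{\Omega}^{(t)}).
\]
The point is that the $b_j$ terms do not depend on $\boldsymbol{\Omega}$, and the observed-data term differs from $G$ by a conditional entropy term that is minimized at $\boldsymbol{\Omega}^{(t)}$. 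It then suffices to show that $G$ decreases along the three sub-updates:
\begin{itemize}
\item[(i)] Setting $\boldsymbol{\mu}$ to the average of the $\boldsymbol{m}_i$ minimizes $\mathrm{tr}(\boldsymbol{\Sigma}^{-1}S)$ exactly in $\boldsymbol{\mu}$.
\item[(ii)] For the $\boldsymbol{\Gamma}^{(k)}$ step we use the proximal-gradient descent lemma. Because $\lambda_{\min}\ge a_k$ keeps the smooth part $\ln|\cdot|+\mathrm{tr}(\cdot^{-1}S)$ gradient-Lipschitz on the feasible set, and the step size satisfies $\eta\le(1-\delta)/L$, we get $F(\text{new})\le F(\text{old})-\frac{\delta}{2\eta}\|\text{new}-\text{old}\|_F^2$. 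Feasibility is preserved by the projection/backtracking described in the appendix.
\item[(iii)] The $\boldsymbol{\Theta}$ step is treated the same way.
\end{itemize}

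\textbf{Items 2 and 3.} For item 2, the objective sequence is monotone, and it is bounded below: $\lambda_{\min}\ge a_k$ gives $\ln|\boldsymbol{\Sigma}^{(w,k)}|\ge q^{(w)}\ln a_k$, and the Mahalanobis, penalty and $\ell_1$ terms are all nonnegative. Hence the sequence converges.

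For item 3, fix an interior limit point and take the subsequence converging to it. Because $\boldsymbol{W}$ ranges over a finite set, we may pass to a further subsequence on which $\boldsymbol{W}^{(t)}=\boldsymbol{W}^*$ is constant. Summing the sufficient-decrease inequalities and using the convergence from item 2 gives $\|\boldsymbol{\Gamma}^{(t+1)}-\boldsymbol{\Gamma}^{(t)}\|\to0$ and $\|\boldsymbol{\Theta}^{(t+1)}-\boldsymbol{\Theta}^{(t)}\|\to0$, using $\inf\eta>0$. From this we conclude that the prox fixed-point equations hold in the limit. Interiority makes the $a_k$ constraint inactive, so these fixed-point equations reduce to the subgradient stationarity conditions. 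The EM fixed point, combined with the tangency of $G$ and $\mathcal{L}$ at $\boldsymbol{\Omega}^*$, then transfers stationarity to $\mathcal{L}_{\mathrm{cell}}(\cdot,\boldsymbol{W}^*)$.

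Coordinatewise optimality of $\boldsymbol{W}^*$ follows from three facts: $\Delta_{ij}$ is continuous in $\boldsymbol{\Omega}$ on the interior region, each column update is an exact minimizer, and the weights stabilize on the subsequence. Ties $\Delta_{ij}=0$ are handled by noting that either choice of $w_{ij}$ gives the same objective value.
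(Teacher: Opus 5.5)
Your proposal follows the paper's proof essentially step for step. The shared ingredients are: the Schur-complement (Gaussian chain-rule) identity showing that retaining cell $(i,j)$ changes the objective by exactly $\Delta_{ij}^{(k)}$, so each column update solves its $0$--$1$ subproblem exactly; the penalized generalized-EM majorization with an exact $\boldsymbol{\mu}$-step and proximal-gradient sufficient decrease for $\boldsymbol{\Gamma}^{(k)}$ and $\boldsymbol{\Theta}$; the interlacing lower bound; and vanishing prox residuals plus EM tangency at an interior limit point.

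One step that you and the paper both leave implicit is $\boldsymbol{\mu}^{(k,t+1)}-\boldsymbol{\mu}^{(k,t)}\to\mathbf{0}$. You need it to pass the E-step moments and the $\boldsymbol{\mu}^{(k,t+1)}$-dependent gradients to their values at $(\boldsymbol{\Omega}^*,\boldsymbol{W}^*)$; the cleanest way is to apply the prox optimality conditions at iteration $t_r-1$, whose E-step uses $\boldsymbol{W}^{(t_r)}=\boldsymbol{W}^*$. It follows because the exact $\boldsymbol{\mu}$-step decreases the surrogate by $n_k(\boldsymbol{\mu}^{(k,t)}-\boldsymbol{\mu}^{(k,t+1)})^{\top}(\boldsymbol{\Sigma}^{(k,t)})^{-1}(\boldsymbol{\mu}^{(k,t)}-\boldsymbol{\mu}^{(k,t+1)})$, and $\boldsymbol{\Sigma}^{(k,t)}$ is bounded along the subsequence.
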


Theorem~\ref{thm:jointedcellmcd_converge} establishes analogous convergence guarantees for the joint cellwise MCD estimator using the EM-based generalized procedure. The cellwise objective values decrease monotonically and converge to a finite limit. Moreover, any interior accumulation point satisfies the stationarity conditions for the continuous parameters with the cellwise weights fixed, while the limiting cellwise weight matrix is coordinatewise optimal for the discrete weight update. These results provide a theoretical justification for the stability of the proposed procedure under cellwise contamination and missing-entry patterns.

\subsection{Finite-sample breakdown property}
To provide context, the following theorems characterize the robustness of the proposed joint estimators under casewise and cellwise contamination. We establish finite-sample breakdown points, showing how much contamination the estimators can tolerate while maintaining bounded estimates. In order to discuss the breakdown point properties of the joint estimators, we introduce some definitions under the joint estimation framework. Let \(\boldsymbol{Z}^{\star}=\{\boldsymbol{Z}^{\star(1)},\ldots,\boldsymbol{Z}^{\star(K)}\}\), where \( \boldsymbol{Z}^{\star(k)}=(\boldsymbol{z}_{1}^{\star(k)},\ldots,\boldsymbol{z}_{n_k}^{\star(k)})^\top \in \mathbb{R}^{n_k\times q} \) denotes the clean representation matrix of the \(k\)-th positive class, \(k=1,2,\ldots,K\). Let \( \widehat{\boldsymbol{\Omega}} = (\widehat{\boldsymbol{\mu}}, \widehat{\boldsymbol{\Sigma}}) = (\{\widehat{\boldsymbol{\mu}}^{(k)}\}_{k=1}^K,\{\widehat{\boldsymbol{\Sigma}}^{(k)}\}_{k=1}^K) \) denote the joint estimator. For any vector $\mathbf{r} = (r_1,r_2,\ldots, r_K)$, define
\[
    \beta(\mathbf{r}) := \max_{1 \le k \le K} \frac{r_k}{n_k}.
\]

\begin{definition}[Joint Casewise Contamination]\label{def:jointfastmcd_contamination}
    Let $\mathcal{C}_{\mathrm{case}}(\mathbf{r})$ be the set of all contaminated samples $\tilde{\boldsymbol{Z}} = \{\tilde{\boldsymbol{Z}}^{(1)}, \ldots, \tilde{\boldsymbol{Z}}^{(K)}\}$, where for each group $k \in \{1,2,\ldots, K\}$, $\tilde{\boldsymbol{Z}}^{(k)}$ is obtained by replacing at most $r_k$ rows of $\boldsymbol{Z}^{\star(k)}$.
\end{definition}

\begin{definition}[Joint Cellwise Contamination]\label{def:jointcellmcd_contamination}
    Let $\mathcal{C}_{\mathrm{cell}}(\mathbf{r})$ be the set of all contaminated samples $\tilde{\boldsymbol{Z}}$, where for each group $k \in \{1,2,\ldots, K\}$ and each column $j \in \{1,2,\ldots, q\}$, $\tilde{\boldsymbol{Z}}^{(k)}$ is obtained by replacing at most $r_k$ cells in the $j$-th column of $\boldsymbol{Z}^{\star(k)}$.
\end{definition}

\begin{definition}[Finite-Sample Joint Breakdown Points]\label{def:jointmcd_breakdown}
    The joint location breakdown points are
    \begin{align*}
        \delta_{\mathrm{case},\boldsymbol{\mu}}^*(\widehat{\boldsymbol{\Omega}}, \boldsymbol{Z}^{\star}) &:= \inf \left\{ \beta(\mathbf{r}) : \sup_{\tilde{\boldsymbol{Z}} \in \mathcal{C}_{\mathrm{case}}(\mathbf{r})} \max_{1 \le k \le K} \|\widehat{\boldsymbol{\mu}}^{(k)}(\tilde{\boldsymbol{Z}}) - \widehat{\boldsymbol{\mu}}^{(k)}(\boldsymbol{Z}^{\star})\|_2 = \infty \right\}, \\
        \varepsilon_{\mathrm{cell},\boldsymbol{\mu}}^*(\widehat{\boldsymbol{\Omega}}, \boldsymbol{Z}^{\star}) &:= \inf \left\{ \beta(\mathbf{r}) : \sup_{\tilde{\boldsymbol{Z}} \in \mathcal{C}_{\mathrm{cell}}(\mathbf{r})} \max_{1 \le k \le K} \|\widehat{\boldsymbol{\mu}}^{(k)}(\tilde{\boldsymbol{Z}}) - \widehat{\boldsymbol{\mu}}^{(k)}(\boldsymbol{Z}^{\star})\|_2 = \infty \right\}.
    \end{align*}
    The joint scatter explosion and implosion breakdown points are
    \begin{align*}
        \delta_{\mathrm{case},+}(\widehat{\boldsymbol{\Omega}}, \boldsymbol{Z}^{\star}) &:= \inf \left\{ \beta(\mathbf{r}) : \sup_{\tilde{\boldsymbol{Z}} \in \mathcal{C}_{\mathrm{case}}(\mathbf{r})} \max_{1 \le k \le K} \lambda_{\max}(\widehat{\boldsymbol{\Sigma}}^{(k)}(\tilde{\boldsymbol{Z}})) = \infty \right\}, \\
        \varepsilon_{\mathrm{cell},+}(\widehat{\boldsymbol{\Omega}}, \boldsymbol{Z}^{\star}) &:= \inf \left\{ \beta(\mathbf{r}) : \sup_{\tilde{\boldsymbol{Z}} \in \mathcal{C}_{\mathrm{cell}}(\mathbf{r})} \max_{1 \le k \le K} \lambda_{\max}(\widehat{\boldsymbol{\Sigma}}^{(k)}(\tilde{\boldsymbol{Z}})) = \infty \right\}, \\
        \delta_{\mathrm{case},-}(\widehat{\boldsymbol{\Omega}}, \boldsymbol{Z}^{\star}) &:= \inf \left\{ \beta(\mathbf{r}) : \inf_{\tilde{\boldsymbol{Z}} \in \mathcal{C}_{\mathrm{case}}(\mathbf{r})} \min_{1 \le k \le K} \lambda_{\min}(\widehat{\boldsymbol{\Sigma}}^{(k)}(\tilde{\boldsymbol{Z}})) = 0 \right\}, \\
        \varepsilon_{\mathrm{cell},-}(\widehat{\boldsymbol{\Omega}}, \boldsymbol{Z}^{\star}) &:= \inf \left\{ \beta(\mathbf{r}) : \inf_{\tilde{\boldsymbol{Z}} \in \mathcal{C}_{\mathrm{cell}}(\mathbf{r})} \min_{1 \le k \le K} \lambda_{\min}(\widehat{\boldsymbol{\Sigma}}^{(k)}(\tilde{\boldsymbol{Z}})) = 0 \right\}.
    \end{align*}
    If no admissible contamination fraction in $[0,1]$ causes breakdown, the breakdown fraction is reported as 1.
\end{definition}

The above definition is appropriate because, within the joint estimation framework, the estimator comprises the location and scatter parameters of multiple positive classes. Unbounded shift, covariance explosion, or covariance degradation in the estimation of any single class will lead to the failure of the overall joint estimation. Therefore, we measure the joint contamination intensity by the maximum contamination ratio across all classes and define breakdown based on the worst-case scenario. When $K=1$, these definitions reduce to the classic casewise and cellwise breakdown definitions, maintaining consistency with the existing theoretical frameworks of MCD and cellMCD. Based on these definitions, we establish the breakdown point properties of the joint estimators. We assume the dataset is in general position: any $(q-1)$-dimensional affine subspace contains at most $q$ points.

\begin{theorem}[Joint Cellwise Breakdown Point]\label{thm:jointcellmcd_breakdown_point}
    Consider the joint cellwise estimator minimizing the joint objective subject to $\lambda_{\min}(\boldsymbol{\Sigma}^{(k)}) \ge a_k > 0$, $\|\boldsymbol{W}_{\cdot j}^{(k)}\|_0 \ge h_k$, and a non-negative penalty
    \[
        \mathbf{P}(\boldsymbol{\Theta}, \{\boldsymbol{\Gamma}^{(k)}\}_{k=1}^{K}) = \lambda_1 \sum_{i \ne j} |\theta_{ij}| + \lambda_2 \sum_{k=1}^K \sum_{i \ne j} |\gamma_{ij}^{(k)}|.
    \]
    Assume each clean group $\boldsymbol{Z}^{\star(k)}$ is in general position and
    \[
        h_k \ge \left\lfloor \frac{n_k}{2} \right\rfloor + 1, \quad k=1,2,\ldots,K.
    \]
    Let
    \[
        \beta_0 := \min_{1 \le k \le K} \frac{n_k - h_k + 1}{n_k}.
    \]
    Then,
    \[
        \varepsilon_{\mathrm{cell},-}(\widehat{\boldsymbol{\Omega}}, \boldsymbol{Z}^{\star}) = 1, \quad
        \varepsilon_{\mathrm{cell},+}(\widehat{\boldsymbol{\Omega}}, \boldsymbol{Z}^{\star}) \ge \beta_0, \quad
        \varepsilon_{\mathrm{cell},\boldsymbol{\mu}}^*(\widehat{\boldsymbol{\Omega}}, \boldsymbol{Z}^{\star}) \ge \beta_0.
    \]
    If $n_k = n$ and $h_k = h$ for all groups, these simplify to
    \[
        \varepsilon_{\mathrm{cell},-} = 1, \quad \varepsilon_{\mathrm{cell},+} \ge \frac{n - h + 1}{n}, \quad \varepsilon_{\mathrm{cell},\boldsymbol{\mu}}^* \ge \frac{n - h + 1}{n}.
    \]
\end{theorem}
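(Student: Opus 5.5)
The proof adapts the cellMCD breakdown argument of \citet{raymaekers2024cellwise} to the joint objective \eqref{eq:joint_cellwise_mcd}. The key device is an objective-value comparison between the minimizer and a fixed reference configuration. The three claims are handled in increasing order of difficulty.

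\emph{Implosion.} The claim $\varepsilon_{\mathrm{cell},-}=1$ is immediate. Every feasible $\boldsymbol{\Sigma}^{(k)}=\boldsymbol{\Theta}\odot\boldsymbol{\Gamma}^{(k)}$ satisfies $\lambda_{\min}(\boldsymbol{\Sigma}^{(k)})\ge a_k>0$ by construction, whatever the data. Hence $\inf_{\tilde{\boldsymbol{Z}}}\min_k\lambda_{\min}(\widehat{\boldsymbol{\Sigma}}^{(k)}(\tilde{\boldsymbol{Z}}))\ge\min_k a_k>0$, no contamination fraction produces implosion, and the convention in Definition~\ref{def:jointmcd_breakdown} returns $1$.

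\emph{Uniform upper bound on the optimal value.} Fix $\mathbf{r}$ with $\beta(\mathbf{r})<\beta_0$, so that $r_k\le n_k-h_k$ for every $k$. For any $\tilde{\boldsymbol{Z}}\in\mathcal{C}_{\mathrm{cell}}(\mathbf{r})$, build a feasible reference point as follows:
\begin{itemize}
\item flag exactly the replaced cells in each column (at most $r_k$), so at least $n_k-r_k\ge h_k$ cells are retained and the column constraint holds;
\item set $\boldsymbol{\Theta}$ to the all-ones matrix;
\item set $\boldsymbol{\Gamma}^{(k)}=\boldsymbol{\Sigma}_0^{(k)}$, for instance $\boldsymbol{\Sigma}_0^{(k)}=c_kI$ with $c_k\ge a_k$ large, and $\boldsymbol{\mu}^{(k)}$ equal to the clean coordinatewise mean.
\end{itemize}
All retained cells are clean, and there are finitely many possible flag patterns. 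Therefore the objective at this point is bounded by a constant $M$ that depends only on $\boldsymbol{Z}^\star$, $\{b_j^{(k)}\}$, $\lambda_1,\lambda_2$, and not on $\tilde{\boldsymbol{Z}}$. The penalty is finite here because the off-diagonals of $c_kI$ vanish and $\|\boldsymbol{\Theta}\|_{1,\mathrm{off}}=q(q-1)$. Consequently the minimizer $(\widehat{\boldsymbol{\Omega}},\widehat{\boldsymbol{W}})$ has objective at most $M$. Since $\ln|\cdot|\ge\sum\ln a_k$-type lower bounds hold on principal submatrices, the penalty and cell-flag terms are nonnegative, and every summand is bounded below, each individual term of the minimizer is bounded above by a constant $M'$.

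\emph{Explosion.} Suppose $\lambda_{\max}(\widehat{\boldsymbol{\Sigma}}^{(k)})\to\infty$ along a sequence of contaminated data sets. For some coordinate $j$ the diagonal entry $\widehat{\sigma}^{(k)}_{jj}\to\infty$. Let $m$ index retained cells of column $j$. There are at least $h_k$ of them, and since $h_k>n_k/2$ and $r_k\le n_k-h_k<h_k$, at least $h_k-r_k\ge1$ are clean. For every row $i$ retaining cell $j$, the term $\ln|\widehat{\boldsymbol{\Sigma}}^{(\boldsymbol{w}_i,k)}|$ is bounded below by $\ln\widehat{\sigma}_{jj}^{(k)}$ plus the sum of logs of the Schur complements, each at least $a_k$ by eigenvalue interlacing. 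So $\ln|\widehat{\boldsymbol{\Sigma}}^{(\boldsymbol{w}_i,k)}|\ge\ln\widehat{\sigma}^{(k)}_{jj}+(q^{(\boldsymbol{w}_i)}-1)\ln a_k\to\infty$, while all other terms stay bounded below. This contradicts the bound $M$. The argument uses that the largest eigenvalue being large forces a large diagonal entry, because $\lambda_{\max}\le\operatorname{tr}$.

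\emph{Location.} This is the main obstacle. With $\widehat{\boldsymbol{\Sigma}}^{(k)}$ now bounded above, and below by $a_k$, suppose $|\widehat{\mu}^{(k)}_j|\to\infty$. Each of the at least $h_k-r_k\ge1$ clean retained cells in column $j$ contributes a partial Mahalanobis term. For such a row, $\mathrm{MD}^2\ge\|\boldsymbol{z}^{(\boldsymbol{w}_i)}-\boldsymbol{\mu}^{(\boldsymbol{w}_i)}\|^2/\lambda_{\max}(\widehat{\boldsymbol{\Sigma}}^{(k)})\ge(z^\star_{ij}-\widehat{\mu}_j^{(k)})^2/C\to\infty$, since that row's retained subvector contains coordinate $j$ with a clean bounded value. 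This again contradicts the bound $M$.

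The delicate point is that some retained cells could be contaminated with values tracking $\widehat{\mu}_j$. However, it suffices that a single clean retained cell exists. This is guaranteed by pigeonhole: at least $h_k$ retained cells and at most $r_k<h_k$ replaced cells per column. The general-position assumption is then not even needed beyond guaranteeing finiteness of the reference objective.

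Combining the three bounds with the definition of $\beta$ gives all three breakdown bounds, and hence $\varepsilon_{\mathrm{cell},+},\varepsilon^*_{\mathrm{cell},\boldsymbol{\mu}}\ge\beta_0$. The equal-size case follows by substituting $n_k=n$ and $h_k=h$.
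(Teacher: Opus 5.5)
Your proposal is correct and follows essentially the same route as the paper's proof. Both arguments proceed in the same steps: a finite benchmark $M$ from a feasible reference point that flags exactly the contaminated cells; implosion ruled out by the constraint $\lambda_{\min}\ge a_k$; explosion via the log-determinant of a retained principal submatrix that contains an exploding diagonal entry; and location via a clean retained cell in every column, guaranteed by $h_k>n_k/2$. The differences are cosmetic: you take $\boldsymbol{\Theta}$ all-ones instead of $\boldsymbol{I}_q$ and use the trace bound instead of an argmax diagonal, while the paper additionally sketches sharpness, which the statement does not require. Note also that general position is not needed even for the reference objective, since $c_k\boldsymbol{I}_q$ is nonsingular regardless of the data.
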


Theorem~\ref{thm:jointcellmcd_breakdown_point} indicates that the joint cellwise MCD estimator is highly robust: it can tolerate a large proportion of contaminated cells without the location or covariance estimates becoming unbounded. The lower bound $\beta_0$ is sharp as a common guarantee for simultaneous boundedness of the location and scatter estimates under the stated contamination construction.

\begin{theorem}[Joint Casewise Breakdown Point]\label{thm:jointfastmcd_breakdown_point}
    Suppose each clean group is in general position and $h_k\ge \lfloor n_k/2\rfloor+1$, $k=1,2,\ldots,K$. For the penalized joint casewise estimator with $\lambda_{\min}(\boldsymbol{\Sigma}^{(k)}) \ge a_k > 0$ and $\mathbf{P}(\boldsymbol{\Theta}, \{\boldsymbol{\Gamma}^{(k)}\}_{k=1}^{K}) \ge 0$, let
    \[
        \beta_0 := \min_{1 \le k \le K} \frac{n_k - h_k + 1}{n_k}.
    \]
    Then,
    \[
        \delta_{\mathrm{case},-}(\widehat{\boldsymbol{\Omega}}, \boldsymbol{Z}^{\star}) = 1, \quad
        \delta_{\mathrm{case},+}(\widehat{\boldsymbol{\Omega}}, \boldsymbol{Z}^{\star}) \ge \beta_0, \quad
        \delta_{\mathrm{case},\boldsymbol{\mu}}^*(\widehat{\boldsymbol{\Omega}}, \boldsymbol{Z}^{\star}) \ge \beta_0.
    \]
    If $n_k = n$ and $h_k = h$ for all groups,
    \[
        \delta_{\mathrm{case},-} = 1, \quad \delta_{\mathrm{case},+} \ge \frac{n - h + 1}{n}, \quad \delta_{\mathrm{case},\boldsymbol{\mu}}^* \ge \frac{n - h + 1}{n}.
    \]
    This establishes a provable high breakdown lower bound for the penalized joint casewise estimator.
\end{theorem}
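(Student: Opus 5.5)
We argue directly with the exact minimizer of the penalized joint casewise problem \eqref{eq:joint_casewise_mcd}, viewed as a function of the data, and not with the algorithm. We treat the three claims separately.

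\emph{Implosion.} The claim $\delta_{\mathrm{case},-}=1$ is immediate from the constraint. Every admissible $\widehat{\boldsymbol{\Sigma}}^{(k)}(\tilde{\boldsymbol Z})$ satisfies $\lambda_{\min}\ge a_k>0$ whatever the contamination. Hence $\inf\min_k\lambda_{\min}\ge\min_k a_k>0$ for every $\mathbf r$, no fraction causes implosion, and by the convention in Definition~\ref{def:jointmcd_breakdown} the breakdown fraction equals $1$.

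\emph{Explosion and location.} Fix $\mathbf r$ with $\beta(\mathbf r)<\beta_0$, so that $r_k\le n_k-h_k$ for every $k$. Then each contaminated group retains at least $n_k-r_k\ge h_k$ clean rows.

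We first bound the optimal value from above. Take the competitor $\boldsymbol H^{(k)}$ that selects $h_k$ clean rows in every group, and set $\boldsymbol\Theta=\mathbf 1\mathbf 1^\top$ and $\boldsymbol\Gamma^{(k)}=\boldsymbol S_k+a_k\boldsymbol I$, where $\boldsymbol S_k$ is the sample covariance of the chosen rows. This choice is feasible, the penalty is finite, and the objective is bounded by a constant $M$. The constant $M$ depends only on $\boldsymbol Z^\star$, because there are finitely many clean $h_k$-subsets to maximize over. Therefore the minimizer's value is at most $M$.

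We next bound the estimator using the lower bound on the objective. The penalty is nonnegative, and $\ln|\boldsymbol\Sigma^{(k)}|\ge q\ln a_k$. Together these give, for each $k$, the inequality
\[
h_k\ln|\widehat{\boldsymbol\Sigma}^{(k)}|+\sum_i w_i^{(k)}\mathrm{MD}^2\le M',
\]
where $M'$ is $M$ plus the bounded contributions $-\sum_{j\neq k}h_jq\ln a_j$ from the other groups.

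Because $h_k\ge\lfloor n_k/2\rfloor+1>r_k$ (this holds since $r_k\le n_k-h_k<h_k$), the selected subset contains at least one clean point. We would like it to contain $q+1$ clean points, and this needs care. Instead, we argue as in the classical MCD proof (Rousseeuw--Leroy) by contradiction. Suppose $\lambda_{\max}(\widehat{\boldsymbol\Sigma}^{(k)})\to\infty$ along a sequence of contaminations. Since every eigenvalue is at least $a_k$, we have $\ln|\widehat{\boldsymbol\Sigma}^{(k)}|\ge (q-1)\ln a_k+\ln\lambda_{\max}\to\infty$. This contradicts the bound above, since the $\mathrm{MD}^2$ terms are nonnegative. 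So the lower eigenvalue constraint does most of the work, and general position is needed only for location.

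For location, the same bound now gives bounded $\widehat{\boldsymbol\Sigma}^{(k)}$ and hence $\sum_i w_i^{(k)}\mathrm{MD}^2\le M''$. The selected subset contains a clean point $\boldsymbol z^\star$ from the fixed finite clean set, so $\|\boldsymbol z^\star-\widehat{\boldsymbol\mu}^{(k)}\|_2^2\le\lambda_{\max}M''$. This bounds $\widehat{\boldsymbol\mu}^{(k)}$ uniformly over $\mathcal C_{\mathrm{case}}(\mathbf r)$. Taking the maximum over $k$ and the infimum over breaking $\mathbf r$ gives both lower bounds $\ge\beta_0$.

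\emph{Main obstacle.} The main obstacle is making the uniform constant $M'$ rigorous. The worst case, which involves $\Theta$, the cross-group coupling and all $\Gamma^{(j)}$, must be bounded independently of the contamination. The cross-group coupling through $\boldsymbol\Theta$ is handled by the nonnegativity of the penalty together with the term-wise lower bounds. We also need a minimizer to exist. Existence follows because the objective is coercive in $\boldsymbol\Sigma^{(k)}$ on the constraint set and the selection $\boldsymbol H$ ranges over a finite set. If existence is delicate because of the Hadamard parametrization, we would state the result for any $\epsilon$-minimizer, which leaves the argument unchanged.
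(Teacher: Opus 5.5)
Your proposal is correct and follows essentially the same route as the paper: a finite benchmark $M$ from a feasible competitor built on an all-clean subset, implosion ruled out by the eigenvalue floor, explosion via $\ln|\widehat{\boldsymbol\Sigma}^{(k)}|\ge\ln\lambda_{\max}+(q-1)\ln a_k$, and location via a retained clean row together with bounded scatter. Your extra care (maximizing $M$ over the finitely many clean subsets, the explicit reduction $\beta(\mathbf r)<\beta_0\Rightarrow r_k\le n_k-h_k$, and the remark on existence) tightens points the paper leaves implicit; note also that, contrary to your aside, general position is never actually used, neither in your location step nor in the paper's.
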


Theorem~\ref{thm:jointfastmcd_breakdown_point} similarly establishes that the joint casewise MCD estimator maintains bounded estimates even under substantial casewise contamination. The derived lower bounds on $\delta_{\mathrm{case},-}$, $\delta_{\mathrm{case},+}$, and $\delta_{\mathrm{case},\boldsymbol{\mu}}^*$ provide a clear, provable measure of the estimator’s finite-sample robustness.

\section{Real data applications} \label{sec6}
We evaluate our unified MDS-based framework on four real-world AI-related tasks: LLM-generated text, watermark, hallucination, and adversarial example detection. Performance is assessed using comprehensive metrics, including area under the receiver operating characteristic curve (ROC AUC), area under the precision-recall curve (PR AUC), F1 score, specificity, accuracy, precision, and recall. Our implementation is available at \url{https://github.com/Astringency/JointMCD.git}.

\subsection{LLM-generated text detection}

LLM-generated text detection aims to distinguish human-written passages from LLM-generated ones. We use the Human ChatGPT Comparison Corpus (HC3), a benchmark dataset widely adopted for AI-generated text detection, which contains paired human and ChatGPT-3.5 responses across multiple domains \citep{guo2023close,ouyang2022training,openai2022chatgpt}. We sample 1,000 instances from each domain, except for the ``wiki\_csai'' subset, where all 842 available samples are used. To assess cross-generator generalization, we further collect responses to the same questions from five recent LLMs: ChatGPT-5.4 \citep{openai2026gpt54T,openai2026gpt54}, ChatGPT-5.4 mini \citep{openai2026gpt54mininano}, Gemini-3 Flash Preview \citep{gemini2024gemini15,google2026gemini3flashpreview}, Gemini-3.1 Flash Lite Preview \citep{google2026gemini31flashlitepreview}, and Gemini-3.1 Pro Preview \citep{google2026gemini31propreview}. This yields five HC3-based datasets, each with 4,842 instances. We apply the proposed unified detection framework using RoBERTa-base as the pretrained encoder \citep{liu2019roberta}, and compare it with likelihood-based \citep{solaiman2019release}, rank-based \citep{gehrmann2019gltr}, entropy-based \citep{ippolito2020automatic}, DetectGPT \citep{mitchell2023detectgpt}, and RoBERTa baseline methods. For DetectGPT, we report both the raw perturbation discrepancy score, DetectGPT-d, and its normalized version, DetectGPT-z.

\begin{table}[htbp]
	\centering
	\small
	\renewcommand{\arraystretch}{0.8}
	\setlength{\tabcolsep}{5pt}
	\caption{Performance comparison for LLM-generated text detection. Best results are shown in bold, and second-best results are marked with $\dagger$. All metrics are computed with respect to the LLM-generated samples.}
	\label{tab:detect_LGC}
	\begin{tabular}{@{}c|c|ccccccc|c@{}}
		\hline
		LLM & Detector & \makecell{ROC\\AUC} & \makecell{PR\\AUC} & F1 & Specificity & Accuracy & Precision & Recall & Mean \\
		\hline
		\multirow{10}{*}{\makecell[c]{GPT-5.4}} 
		& Entropy       & 0.022 & 0.309 & 0.667 & 0.000 & 0.500 & 0.500 & 1.000 & 0.428  \\
		& Likelihood    & 0.956 & 0.937 & 0.928 & 0.897 & 0.926 & 0.902 & 0.955 & 0.929 \\
		& LogRank       & 0.954 & 0.934 & 0.932 & 0.894 & 0.930 & 0.901 & 0.966 & 0.930 \\
		& Rank          & 0.881 & 0.843 & 0.840 & 0.792 & 0.833 & 0.808 & 0.874 & 0.839 \\
		& RoBERTa-base  & 0.231 & 0.351 & 0.667 & 0.000 & 0.500 & 0.500 & 1.000 & 0.464 \\
		& RoBERTa-large & 0.235 & 0.353 & 0.667 & 0.000 & 0.500 & 0.500 & 1.000 & 0.465 \\
		& DetectGPT-d   & 0.974 & 0.976 & 0.928 & 0.907 & 0.926 & 0.911 & 0.945 & \competitive{0.938} \\
		& DetectGPT-z   & 0.958 & 0.949 & 0.921 & 0.893 & 0.919 & 0.898 & 0.945 & 0.926 \\
		& JCASEMCD      & 0.969 & 0.970 & 0.915 & 0.973 & 0.920 & 0.969 & 0.867 & \best{0.941} \\
		& JCELLMCD  & 0.963 & 0.957 & 0.913 & 0.905 & 0.912 & 0.906 & 0.919 & 0.925 \\
		\hline
		\multirow{10}{*}{\makecell[c]{GPT-5.4\\mini}} 
		& Entropy       & 0.027 & 0.309 & 0.667 & 0.000 & 0.500 & 0.500 & 1.000 & 0.429 \\
		& Likelihood    & 0.958 & 0.937 & 0.929 & 0.886 & 0.926 & 0.895 & 0.967 & \competitive{0.928} \\
		& LogRank       & 0.954 & 0.934 & 0.929 & 0.893 & 0.926 & 0.900 & 0.960 & \competitive{0.928} \\
		& Rank          & 0.854 & 0.825 & 0.799 & 0.746 & 0.790 & 0.767 & 0.835 & 0.802 \\
		& RoBERTa-base  & 0.251 & 0.357 & 0.667 & 0.000 & 0.500 & 0.500 & 1.000 & 0.468 \\
		& RoBERTa-large & 0.264 & 0.363 & 0.667 & 0.000 & 0.500 & 0.500 & 1.000 & 0.471 \\
		& DetectGPT-d   & 0.967 & 0.970 & 0.919 & 0.904 & 0.918 & 0.906 & 0.932 & \best{0.931} \\
		& DetectGPT-z   & 0.952 & 0.940 & 0.916 & 0.885 & 0.914 & 0.891 & 0.943 & 0.920 \\
		& JCASEMCD      & 0.951 & 0.955 & 0.892 & 0.925 & 0.895 & 0.920 & 0.866 & 0.915 \\
		& JCELLMCD  & 0.945 & 0.939 & 0.887 & 0.873 & 0.885 & 0.876 & 0.898 & 0.900 \\
		\hline
		\multirow{10}{*}{\makecell[c]{Gemini-3.1\\flash-lite\\preview}} 
		& Entropy       & 0.177 & 0.338 & 0.667 & 0.000 & 0.500 & 0.500 & 1.000 & 0.454 \\
		& Likelihood    & 0.838 & 0.845 & 0.790 & 0.838 & 0.798 & 0.824 & 0.758 & \competitive{0.813} \\
		& LogRank       & 0.848 & 0.855 & 0.798 & 0.867 & 0.809 & 0.850 & 0.752 & \best{0.825} \\
		& Rank          & 0.860 & 0.832 & 0.804 & 0.751 & 0.795 & 0.771 & 0.840 & 0.808 \\
		& RoBERTa-base  & 0.358 & 0.410 & 0.669 & 0.012 & 0.505 & 0.503 & 0.999 & 0.494 \\
		& RoBERTa-large & 0.295 & 0.372 & 0.667 & 0.000 & 0.500 & 0.500 & 1.000 & 0.476 \\
		& DetectGPT-d   & 0.714 & 0.807 & 0.713 & 0.878 & 0.750 & 0.836 & 0.622 & 0.760 \\
		& DetectGPT-z   & 0.719 & 0.804 & 0.720 & 0.860 & 0.751 & 0.821 & 0.641 & 0.759 \\
		& JCASEMCD      & 0.798 & 0.770 & 0.737 & 0.571 & 0.702 & 0.660 & 0.834 & 0.725 \\
		& JCELLMCD  & 0.737 & 0.688 & 0.700 & 0.584 & 0.673 & 0.647 & 0.762 & 0.685 \\
		\hline
		\multirow{10}{*}{\makecell[c]{Gemini-3\\flash\\preview}} 
		& Entropy       & 0.180 & 0.338 & 0.667 & 0.000 & 0.500 & 0.500 & 1.000 & 0.455 \\
		& Likelihood    & 0.823 & 0.832 & 0.776 & 0.814 & 0.783 & 0.802 & 0.753 & \competitive{0.798} \\
		& LogRank       & 0.834 & 0.843 & 0.787 & 0.814 & 0.792 & 0.805 & 0.770 & \best{0.806} \\
		& Rank          & 0.853 & 0.828 & 0.787 & 0.751 & 0.780 & 0.765 & 0.810 & 0.796 \\
		& RoBERTa-base  & 0.344 & 0.401 & 0.668 & 0.007 & 0.504 & 0.502 & 1.000 & 0.489 \\
		& RoBERTa-large & 0.274 & 0.364 & 0.667 & 0.000 & 0.500 & 0.500 & 1.000 & 0.472 \\
		& DetectGPT-d   & 0.690 & 0.785 & 0.691 & 0.803 & 0.717 & 0.762 & 0.632 & 0.726 \\
		& DetectGPT-z   & 0.691 & 0.776 & 0.686 & 0.827 & 0.720 & 0.779 & 0.613 & 0.727 \\
		& JCASEMCD      & 0.823 & 0.796 & 0.768 & 0.628 & 0.742 & 0.697 & 0.855 & 0.758 \\
		& JCELLMCD  & 0.763 & 0.715 & 0.717 & 0.616 & 0.695 & 0.668 & 0.773 & 0.707 \\
		\hline
		\multirow{10}{*}{\makecell[c]{Gemini-3.1\\pro\\preview}} 
		& Entropy       & 0.287 & 0.377 & 0.667 & 0.000 & 0.500 & 0.500 & 1.000 & 0.476 \\
		& Likelihood    & 0.698 & 0.722 & 0.669 & 0.015 & 0.507 & 0.503 & 0.998 & 0.587 \\
		& LogRank       & 0.740 & 0.707 & 0.699 & 0.476 & 0.647 & 0.610 & 0.818 & 0.671 \\
		& Rank          & 0.710 & 0.732 & 0.682 & 0.734 & 0.694 & 0.711 & 0.654 & \competitive{0.702} \\
		& RoBERTa-base  & 0.275 & 0.365 & 0.667 & 0.007 & 0.502 & 0.501 & 0.998 & 0.474 \\
		& RoBERTa-large & 0.179 & 0.338 & 0.667 & 0.000 & 0.500 & 0.500 & 1.000 & 0.455 \\
		& DetectGPT-d   & 0.518 & 0.666 & 0.667 & 0.001 & 0.501 & 0.500 & 1.000 & 0.550 \\
		& DetectGPT-z   & 0.506 & 0.653 & 0.667 & 0.000 & 0.500 & 0.500 & 1.000 & 0.547 \\
		& JCASEMCD      & 0.813 & 0.781 & 0.754 & 0.606 & 0.725 & 0.682 & 0.844 & \best{0.744} \\
		& JCELLMCD  & 0.748 & 0.695 & 0.717 & 0.600 & 0.691 & 0.662 & 0.783 & 0.699 \\
		\hline
	\end{tabular}
\end{table}

The results in Table~\ref{tab:detect_LGC} demonstrate the effectiveness and robustness of the proposed methods across different LLM generators: JCASEMCD achieves ROC AUC values of 0.969 and 0.951 on GPT-5.4 and GPT-5.4 mini, respectively, comparable to the strongest DetectGPT variants while yielding the highest specificity and precision, and remains competitive on the more challenging Gemini-based datasets, achieving the best overall performance on Gemini-3.1 Pro Preview in terms of ROC AUC, PR AUC, F1 score, and accuracy.

\subsection{Watermark detection}

Watermark detection can be viewed as a special case of LLM-generated text detection, where the generated text may contain deliberately embedded statistical traces. When the watermarking rule of the generating model is known, such traces can be detected by hypothesis-testing procedures specifically designed for the corresponding watermarking mechanism \citep{li2025statistical}. In contrast, the proposed framework does not require explicit knowledge of the watermarking rule, but instead detects distributional deviations in the space of deep representations.

We conduct watermark detection experiments based on Facebook OPT-1.3B \citep{zhang2022opt}. Three types of generated text are considered: non-watermarked text, denoted by Null; text generated with Gumbel-Max watermarking, denoted by Gumbel; and text generated with inverse-transform watermarking, denoted by Inverse. We consider two types of detection tasks: distinguishing human-written text from watermarked text, and distinguishing non-watermarked generated text from watermarked generated text. In these tasks, we denote unwatermarked text as positive samples and watermarked text as negative samples. The proposed method is compared with the hypothesis-testing framework of \citet{li2025statistical}, which assumes access to the underlying watermarking mechanism. The results are reported in Table~\ref{tab:watermark_all_metrics}.

\begin{table}[htbp]
    \centering
    \small
    \caption{Performance comparison across different watermark detection tasks.
    Best results are shown in bold, and second-best results are marked with $\dagger$. All metrics are computed with respect to the watermarked samples.}
    \label{tab:watermark_all_metrics}
    \begin{tabular}{c|c|ccccc}
        \hline
        Task & Method & F1 & Specificity & Accuracy & Precision & Recall \\
        \hline
        \multirow{3}{*}{\makecell[c]{Human\\vs\\Gumbel}}
             & Gumbel-Test    & \best{0.630} & \best{0.947} & \best{0.716} & \best{0.901} & 0.485 \\
             & JCASEMCD       & 0.575 & \competitive{0.607} & 0.585 & \competitive{0.589} & \competitive{0.562} \\
             & JCELLMCD       & \competitive{0.591} & 0.576 & \competitive{0.586} & 0.585 & \best{0.597} \\
             \hline
        \multirow{3}{*}{\makecell[c]{Human\\vs\\Inverse}}
             & Inverse-Test   & \best{0.625} & \best{0.948} & \best{0.713} & \best{0.902} & 0.479 \\
             & JCASEMCD       & 0.575 & \competitive{0.606} & \competitive{0.584} & \competitive{0.588} & \competitive{0.562} \\
             & JCELLMCD       & \competitive{0.589} & 0.571 & \competitive{0.584} & 0.582 & \best{0.597} \\
             \hline
        \multirow{3}{*}{\makecell[c]{Null\\vs\\Gumbel}}
             & Gumbel-Test    & \best{0.629} & \best{0.943} & \best{0.714} & \best{0.895} & 0.485 \\
             & JCASEMCD       & \competitive{0.590} & 0.597 & \competitive{0.592} & \competitive{0.593} & \best{0.586} \\
             & JCELLMCD       & 0.585 & \competitive{0.601} & 0.589 & 0.592 & \competitive{0.578} \\
             \hline
        \multirow{3}{*}{\makecell[c]{Null\\vs\\Inverse}}
             & Inverse-Test   & \best{0.624} & \best{0.945} & \best{0.712} & \best{0.898} & 0.479 \\
             & JCASEMCD       & \competitive{0.590} & 0.590 & \competitive{0.590} & \competitive{0.590} & \best{0.590} \\
             & JCELLMCD       & 0.581 & \competitive{0.596} & 0.585 & 0.587 & \competitive{0.574} \\
             \hline
    \end{tabular}
\end{table}

The results in Table~\ref{tab:watermark_all_metrics} show that the hypothesis-testing baselines of \citet{li2025statistical} perform strongly when the watermarking rule is known and matched to the test procedure, achieving the best F1 score, accuracy, precision, and specificity. This is expected because these methods are tailored to the known watermarking mechanism and directly exploit its statistical signal. In contrast, the proposed method is mechanism-agnostic and does not rely on explicit watermarking rules. Although its overall performance is generally lower, it provides a more balanced classification behavior: the testing-based methods attain very high precision and specificity but relatively low recall, indicating conservative detection of the negative class, namely watermarked text, whereas the proposed method improves recall while maintaining moderate precision, leading to a more balanced precision--recall trade-off.

This distinction becomes more important when the watermarking mechanism is unknown or the generated text is watermark-free. In this setting, we treat human-written text as positive samples and watermark-free LLM-generated text as negative samples. Since the testing framework of \citet{li2025statistical} depends on knowledge of the watermarking rule, its effectiveness may degrade substantially outside the matched watermark setting, as shown in Table~\ref{tab:watermark_human_vs_null}. These results indicate that specialized testing procedures are preferable when the watermarking rule is fully known, whereas the proposed framework offers a more general, task-agnostic detection strategy.

\begin{table}[htbp]
    \centering
    \small
    \caption{Performance comparison for detecting human-generated text and OPT-generated text without watermark. Best results are shown in bold, and second-best results are marked with $\dagger$. All metrics are computed with respect to the OPT-generated samples.}
    \label{tab:watermark_human_vs_null}
    \begin{tabular}{@{}c|ccccc@{}}
        \hline
        {Method} & {F1} & {Specificity} & {Accuracy} & {Precision} & {Recall} \\
        \hline
        Gumbel-Test  & 0.196 & \competitive{0.930} & 0.523 & 0.624 & 0.116 \\
        Inverse-Test & 0.099 & \best{0.948} & 0.501 & 0.511 & 0.055 \\
        JCASEMCD     & \best{0.788} & 0.728 & \best{0.777} & \best{0.752} & \best{0.827} \\
        JCELLMCD     & \competitive{0.735} & 0.662 & \competitive{0.720} & \competitive{0.697} & \competitive{0.778} \\
        \hline
    \end{tabular}
\end{table}

\subsection{Hallucination detection}

Hallucination detection aims to determine whether generated text contains factually incorrect or internally inconsistent content. Compared with LLM-generated text detection, hallucination detection is more directly concerned with factual reliability and logical consistency. Hallucination detection can be conducted in either a reference-free or reference-based setting, depending on whether external evidence or reference answers are available. In both settings, correct facts are treated as positive samples and hallucinatory text as negative samples. We consider two benchmark datasets: HaluEval \citep{li2023halueval}, a large-scale benchmark used for reference-free hallucination detection, and the biography-domain ``wiki\_bio\_gpt3\_hallucination'' dataset \citep{manakul2023selfcheckgpt}, which contains 1,908 sentence-level samples with Wikipedia evidence and hallucination labels, enabling reference-based evaluation.

\begin{table}[H]
    \centering 
    \caption{Performance comparison for hallucination detection. Best results are shown in bold, and second-best results are marked with $\dagger$. All metrics are computed with respect to the hallucinated text.}
    \label{tab:halu_eval}
    \small
    \begin{tabular}{@{}c|ccccccc@{}}
        \hline
        {Method} & \makecell{ROC\\AUC} & \makecell{PR\\AUC} & {F1} & {Specificity} & {Accuracy} & {Precision} & {Recall}\\
        \hline
        DeBERTa-v3-large-1 & 0.623 & 0.707 & \best{0.734} & 0.325 & 0.631 & \competitive{0.665} & \best{0.818}\\
        DeBERTa-v3-large-2 & \competitive{0.644} & \competitive{0.724} & \best{0.734} & 0.338 & \competitive{0.634} & \best{0.668} & \competitive{0.814}\\
        \makecell{JCASEMCD(w/o ref)} & 0.608 & 0.489 & 0.568 & \competitive{0.518} & 0.576 & 0.501 & 0.656\\
        \makecell{JCELLMCD(w/o ref)} & \best{0.751} & \best{0.734} & 0.642 & \best{0.577} & \best{0.648} & 0.565 & 0.744\\
        \makecell{JCASEMCD(w ref)} & 0.554 & 0.640 & \competitive{0.663} & 0.441 & 0.581 & 0.661 & 0.666\\
        \makecell{JCELLMCD(w ref)} & 0.557 & 0.653 & 0.652 & 0.447 & 0.571 & 0.657 & 0.648 \\
        \hline
    \end{tabular}
\end{table}

As baselines, we use two DeBERTa-based NLI models, DeBERTa-v3-large-mnli-fever-anli-ling-wanli-binary and DeBERTa-v3-large-zeroshot-v1.1-all-33, denoted as DeBERTa-NLI and DeBERTa-ZS, respectively; both require reference information. The results in Table~\ref{tab:halu_eval} show that the reference-free JCELLMCD achieves the best overall performance, with ROC AUC and PR AUC values of 0.751 and 0.734, respectively, outperforming the DeBERTa baselines by alleviating their low-specificity issue. JCELLMCD also consistently outperforms JCASEMCD, suggesting that the cellwise strategy is more effective in capturing localized hallucination anomalies.


\subsection{Adversarial examples detection}

Adversarial examples are deliberately perturbed inputs designed to mislead a model while remaining visually or semantically similar to benign samples. In this setting, normal images are positive samples, and adversarial images are negative samples. Since such perturbations can induce abnormal patterns in deep feature representations even when the input-space changes are small, adversarial example detection provides another natural setting for evaluating the proposed MDS-based framework. Existing detection methods include Local Intrinsic Dimensionality (LID), which detects adversarial samples by characterizing the intrinsic dimensionality of their local neighborhoods \citep{ma2018characterizing}, and the Mahalanobis distance-based detector proposed by \citet{lee2018simple}.

We study adversarial example detection in image classification tasks. On CIFAR-10 and CIFAR-100 \citep{krizhevsky2009learning}, adversarial examples are generated using four representative attack methods: FGSM \citep{goodfellow2015explaining}, BIM \citep{kurakin2018adversarial}, CWL2 \citep{carlini2017towards}, and DeepFool \citep{moosavi2016deepfool}. We also consider ImageNet \citep{deng2009imagenet} adversarial examples generated by Diff-PGD \citep{xue2023diffusion}. The target classifiers used to generate these adversarial examples are mainly ResNet \citep{he2016deep} and DenseNet \citep{huang2017densely} models. We use ResNet as the feature extractor to obtain deep representations for the proposed framework, and compare its performance with LID and the Mahalanobis distance-based method of \citet{lee2018simple}. The results are reported in Figure~\ref{fig:adv_heatmap_f1_spec_acc_prec_recall}. 

The proposed framework matches or exceeds baselines in simpler settings (e.g., CIFAR-10 under FGSM/BIM) by effectively capturing adversarial deviations. However, it underperforms the standard Mahalanobis baseline in complex scenarios (e.g., CIFAR-100 under CWL2/DeepFool), likely due to the instability of joint covariance estimation in high-dimensional, multi-class spaces. Thus, while effective for moderate-complexity tasks, it requires further stabilization or dimensionality reduction for more challenging settings.

\begin{figure}[htbp]
    \centering
    \includegraphics[width=\textwidth]{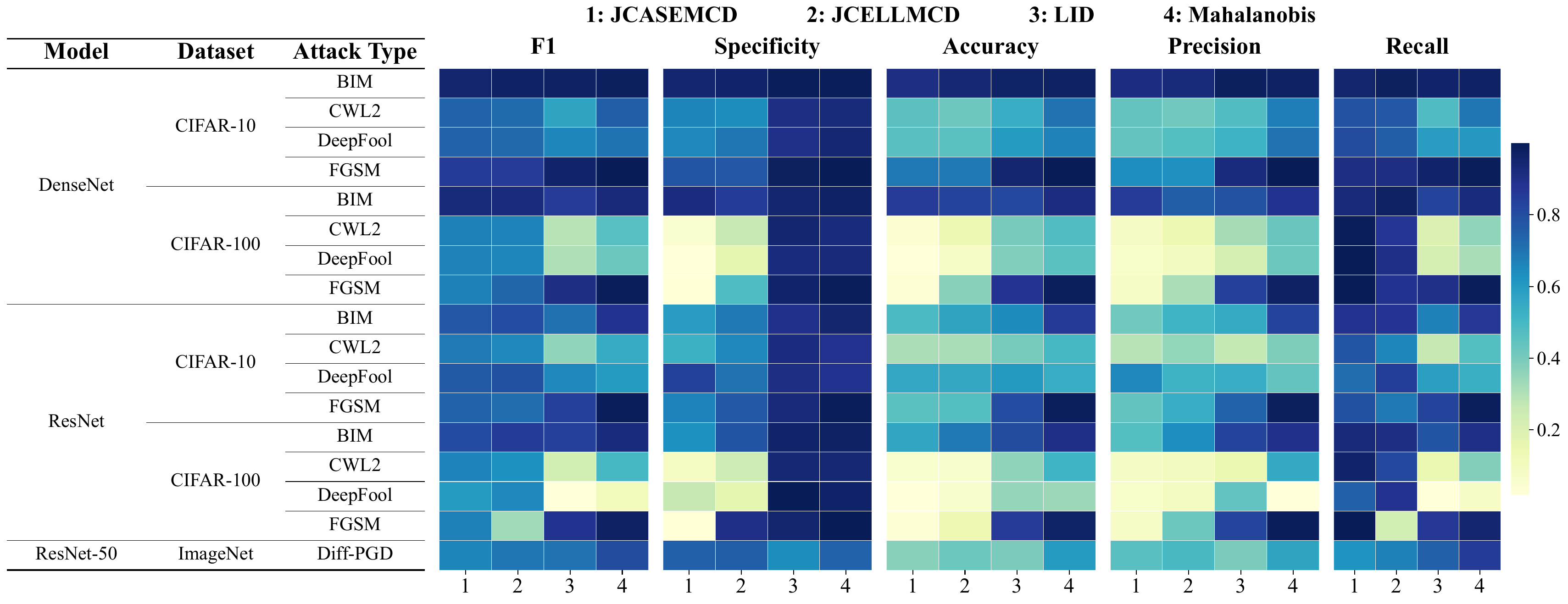}
    \caption{Heatmap of different metrics. All metrics are computed with respect to the adversarial examples.}
    \label{fig:adv_heatmap_f1_spec_acc_prec_recall}
\end{figure}


\section{Conclusion} \label{sec7}

This paper proposed a unified framework for detecting AI-related anomalous samples, including LLM-generated text, hallucinations, watermarked text, and adversarial examples. The framework combined pretrained deep representations, robust estimation of the mean and covariance structure of positive samples, and MDSs for detection. To handle multi-class positive samples with shared and class-specific structures, we developed joint casewise and cellwise MCD estimators and established the convergence of the corresponding optimization algorithms and the high-breakdown-point properties of the estimators. Experiments demonstrated the effectiveness and generality of the proposed framework. These results showed that combining deep representations, robust joint covariance estimation, and MDS provided a flexible and statistically principled approach for detecting diverse AI-related anomalous samples. Future work will consider relaxing the normality assumption, improving high-dimensional covariance estimation, and extending the framework to online and distribution-shift settings.

\newpage 
\begin{appendix}
    \section{Details of implementation and optimization}\label{app:details}

    \subsection{Initialization for JCASEMCD and JCELLMCD}\label{app:init}

    To initialize the estimation procedure, we employ a robust univariate filtering strategy to mitigate the adverse effects of severe cellwise outliers. Recall that $\boldsymbol{Z}^{(k)}$ is the $n_k \times q$ data matrix of the $k$-th group, $k=1,2,\ldots,K$. For each variable \(j \in \{1,2,\ldots,q\}\), we compute the coordinate-wise median \(\mathrm{med}_j^{(k)} = \mathrm{median} (\boldsymbol{z}_j^{(k)}) \) and the scaled median absolute deviation (MAD): $\mathrm{MAD}_j^{(k)} = 1.4826 \cdot \operatorname{median}_{1\le i\le n_k} |z_{ij}^{(k)}-\mathrm{med}_j^{(k)}|$, where the factor \(1.4826 = 1 / \Phi^{-1}(0.75)\) makes the scaled MAD equal to the standard deviation under a normal distribution. We then calculate the absolute MAD-scaled outlier score $|z_{ij}^{(k)} - \mathrm{med}_j^{(k)}| / \text{MAD}_j^{(k)}$. An observation cell $z_{ij}^{(k)}$ is flagged as a potential outlier and assigned a binary weight $w_{ij} = 0$ if its score exceeds a threshold; otherwise, $w_{ij} = 1$. Following the suggestion of \citet{iglewicz1993how}, we set the threshold to 3.5. Alternative threshold values, such as 2.5 and 3, can also be considered. To guarantee numerical stability, we enforce a minimum subset size requirement, denoted as $h_k$. If the number of unflagged cells in any given column falls below $h_k$, we retain the $h_k$ observations with the smallest robust distances and flag the rest. Subsequently, the flagged cellwise outliers are temporarily imputed using their respective column medians, yielding a robustly imputed data matrix $\boldsymbol{Z}^{\sharp(k)}$. 

    For JCELLMCD, the initial location vector $\widehat{\boldsymbol{\mu}}^{(k,0)}$ and scatter matrix $\widehat{\boldsymbol{\Sigma}}^{(k,0)}$ are computed as the empirical mean and empirical covariance matrix of $\boldsymbol{Z}^{\sharp(k)}$, respectively, followed by a regularization step to ensure $\widehat{\boldsymbol{\Sigma}}^{(k,0)}$ is strictly symmetric positive definite. For JCASEMCD, the above cellwise-cleaned estimates are used as preliminary robust estimates. Specifically, we compute the Mahalanobis distances of the original observations in $\boldsymbol{Z}^{(k)}$ using these preliminary estimates, retain the $h_k$ observations with the smallest distances, and then recompute $\widehat{\boldsymbol{\mu}}^{(k,0)}$ and $\widehat{\boldsymbol{\Sigma}}^{(k,0)}$ from the retained original observations, again followed by a regularization step to ensure symmetric positive definiteness. Finally, $\widehat{\boldsymbol{\Theta}}^{(0)}$ and $\widehat{\boldsymbol{\Gamma}}^{(k,0)}$ are generated through the Hadamard product decomposition, and we present a scheme in Appendix~\ref{app:decomp_indent} to guarantee the uniqueness of the decomposition. This procedure provides a highly resistant starting point for the subsequent iterative algorithms.  

    \subsection{Identifiability of covariance decomposition}\label{app:decomp_indent}

    Although the covariance decomposition is defined through the Hadamard product, such a decomposition is not unique. For example, multiplying the shared structure matrix by a positive constant while dividing the heterogeneous structure by the same constant leaves their Hadamard product unchanged. To remove this scaling ambiguity and ensure identifiability, we impose an RMS-correlation normalization rule on the decomposition.

    Specifically, let $\boldsymbol{\Sigma}^{(k)}=(\sigma_{ij}^{(k)})_{1\le i,j\le q}$, $k=1,2,\ldots,K$. For each off-diagonal entry $i\ne j$, we first define the group-specific correlation coefficient
    \[
    \rho_{ij}^{(k)}=\frac{\sigma_{ij}^{(k)}}{\sqrt{\sigma_{ii}^{(k)}\sigma_{jj}^{(k)}}},\quad k=1,2,\ldots,K.
    \]
    We then define the shared structure matrix $\boldsymbol{\Theta}$ by
    \[
    \theta_{ij}=\left\{\frac{1}{K}\sum_{k=1}^{K}\left(\rho_{ij}^{(k)}\right)^2\right\}^{1/2},\quad i\ne j,
    \]
    and set $\theta_{ii}=1$, $i=1,2,\ldots,q$. Under this rule, $\boldsymbol{\Theta}$ captures the common correlation strength across groups rather than the signed average correlation, and therefore avoids the cancellation of positive and negative correlations across groups.

    Given $\boldsymbol{\Theta}$, the heterogeneous structure matrices $\{\boldsymbol{\Gamma}^{(k)}\}_{k=1}^{K}$ are uniquely determined as follows. For the diagonal entries, we set
    \[
    \gamma_{ii}^{(k)}=\sigma_{ii}^{(k)},\quad i=1,2,\ldots,q,\quad k=1,2,\ldots,K.
    \]
    For the off-diagonal entries, if $\theta_{ij}>0$, we define
    \[
    \gamma_{ij}^{(k)}=\frac{\sigma_{ij}^{(k)}}{\theta_{ij}},\quad i\ne j,\quad k=1,2,\ldots,K.
    \]
    If $\theta_{ij}=0$, then by the definition of $\theta_{ij}$, we have $\rho_{ij}^{(k)}=0$ for all $k=1,2,\ldots,K$, and hence $\sigma_{ij}^{(k)}=0$ for all groups. In this case, we set $\gamma_{ij}^{(k)}=0$ for all $k$, which avoids division by zero while preserving the exact reconstruction. Therefore, the covariance matrices satisfy
    \[
    \boldsymbol{\Sigma}^{(k)}=\boldsymbol{\Theta}\odot\boldsymbol{\Gamma}^{(k)},\quad k=1,2,\ldots,K.
    \]
    This normalization fixes the scaling ambiguity of the Hadamard product decomposition and yields a unique decomposition. It also admits a natural interpretation: the shared structure $\boldsymbol{\Theta}$ captures the common dependence strength across groups, whereas the heterogeneous structures $\{\boldsymbol{\Gamma}^{(k)}\}_{k=1}^{K}$ retain group-specific scale, sign, and deviation information.

    \subsection{Gaussian random projection dimensionality reduction}\label{app:gaussian_reduction}

    Since the deep representations extracted by pretrained models are usually of high dimension, directly estimating the covariance matrix in the original representation space leads to two practical issues. First, when the feature dimension is large while the sample size is relatively limited, the sample covariance matrix and its robust alternatives may become numerically unstable or even nearly singular, which in turn affects the accuracy of Mahalanobis distance calculation as well as the stability of the decomposition into shared and heterogeneous structures in the joint estimation procedure. Second, the proposed joint casewise/cellwise MCD methods require repeated updates of the mean vectors, covariance matrices, weight matrices, shared parameters, and group-specific parameters across multiple groups. These steps involve matrix inversion, determinant evaluation, and iterative optimization. As the representation dimension increases, both the computational complexity and memory cost grow substantially, resulting in reducing efficiency during both training and detection. Therefore, in practical implementation, it is necessary to perform an appropriate dimension reduction on the deep representations, while preserving their structural information as much as possible, so as to improve the stability and computational efficiency of the subsequent robust joint estimation procedure.

    Motivated by these considerations, we employ Gaussian Random Projection as a preprocessing step for the deep representations in our algorithm implementation. Specifically, let the original deep representation be $\boldsymbol{z}_i \in \mathbb{R}^q$, where $q$ may be large. We generate a random matrix $\mathcal{R} \in \mathbb{R}^{d \times q}$ whose entries are independently drawn from the Gaussian distribution $\mathcal{N}(0,1/d)$, and project the original representation into a lower-dimensional space by $\mathcal{R} \boldsymbol{z}_i \in \mathbb{R}^d$, $d \ll q$, where $d$ is the dimension of the projected variable.

    All subsequent robust mean estimation, covariance estimation, and Mahalanobis distance computation are then carried out in the projected low-dimensional space. The use of Gaussian random projection has several advantages. First, it does not require additional training or complex modeling of the original representations, and can therefore be easily incorporated into the existing detection pipeline. Second, according to the Johnson--Lindenstrauss type results, random projection preserves the Euclidean geometric structure among samples with high probability, which helps maintain the relative distance relationships between normal and negative samples after dimension reduction. Third, by substantially reducing the dimension of the covariance matrix, it alleviates the numerical instability in matrix inversion and determinant computation, while also significantly reducing the computational burden of the joint optimization procedure. 

    \subsection{Optimization details for $\boldsymbol{\Theta}$ and $\boldsymbol{\Gamma}^{(k)}$}
    \label{app:theta_gamma_optimization}

    In this appendix, we provide the detailed numerical procedures for updating the parameters $\boldsymbol{\Theta}$ and $\boldsymbol{\Gamma}^{(k)}$ under $\ell_1$ regularization. These updates correspond to the parameter update steps described in Sections~\ref{sec5Optimization1} and~\ref{sec5Optimization2}, namely the Fast-MCD algorithm for the joint casewise MCD estimator and the EM-based procedure for the joint cellwise MCD estimator. The appendix includes the gradient derivations, proximal gradient updates, and the definition of the soft-thresholding operator used in the iterative optimization.

    When updating $\boldsymbol{\Theta}$ and $\boldsymbol{\Gamma}^{(k)}$, the algorithm requires solving optimization subproblems under an $\ell_1$ penalty. We employ the proximal gradient method \citep{beck2009fista} for these updates. Recall that the reparameterized form of the covariance matrix is
    \begin{equation*}
        \boldsymbol{\Sigma}^{(k)} = (\sigma_{ij}^{(k)})_{1\leq i,j\leq q}, \quad 
        \sigma_{ij}^{(k)} = 
        \begin{cases}
            \theta_{ij} \, \gamma_{ij}^{(k)}, & i \neq j, \\
            \gamma_{ii}^{(k)}, & i = j.
        \end{cases}
    \end{equation*}

    For the JCASEMCD, we solve the alternating optimization problem:
    \begin{itemize}
        \item Fix $\widehat{\boldsymbol{\Theta}}^{(t)}$ and update each $\widehat{\boldsymbol{\Gamma}}^{(k,t+1)}$ to decrease the objective for $k = 1,2,\ldots, K$:
            \begin{align*}
                \widehat{\boldsymbol{\Gamma}}^{(k,t+1)} &= \arg\min_{\boldsymbol{\Gamma}} \sum_{i=1}^{n_k} w_i^{(k,t)} \tilde{\ell} \left(\widehat{\boldsymbol{\mu}}^{(k,t+1)}, \widehat{\boldsymbol{\Theta}}^{(t)} \odot \boldsymbol{\Gamma};\boldsymbol{z}_i^{(k)}\right) + \lambda_2 \sum_{i \neq j} |\gamma_{ij}|.
            \end{align*}
        \item Fix all $\widehat{\boldsymbol{\Gamma}}^{(k,t+1)}$ and update the shared structure $\widehat{\boldsymbol{\Theta}}^{(t+1)}$:
            \begin{align*}
                \widehat{\boldsymbol{\Theta}}^{(t+1)} &= \arg\min_{\boldsymbol{\Theta}} \sum_{k=1}^{K} \sum_{i=1}^{n_k} w_i^{(k,t)} \tilde{\ell} \left(\widehat{\boldsymbol{\mu}}^{(k,t+1)}, \boldsymbol{\Theta} \odot \widehat{\boldsymbol{\Gamma}}^{(k,t+1)};\boldsymbol{z}_i^{(k)} \right) + \lambda_1 \sum_{i \neq j} |\theta_{ij}|.
            \end{align*}
    \end{itemize}

    Define
    \begin{equation*}
        \begin{aligned}
            S_{\mathrm{case}}^{(k,t)}
            &= \frac{1}{\sum_{i=1}^{n_k} w_i^{(k,t)}} \sum_{i=1}^{n_k} w_i^{(k,t)} (\boldsymbol{z}_i^{(k)} - \widehat{\boldsymbol{\mu}}^{(k,t+1)}) (\boldsymbol{z}_i^{(k)} - \widehat{\boldsymbol{\mu}}^{(k,t+1)})^\top \\
            &= \frac{1}{h_k} \sum_{i=1}^{n_k} w_i^{(k,t)} (\boldsymbol{z}_i^{(k)} - \widehat{\boldsymbol{\mu}}^{(k,t+1)}) (\boldsymbol{z}_i^{(k)} - \widehat{\boldsymbol{\mu}}^{(k,t+1)})^\top.
        \end{aligned}
    \end{equation*}

    For notational clarity, define
    \begin{equation*}
        \widehat{\boldsymbol{\Sigma}}_{\boldsymbol{\Gamma}}^{(k,t)}
        = \widehat{\boldsymbol{\Theta}}^{(t)} \odot \widehat{\boldsymbol{\Gamma}}^{(k,t)},
        \quad
        \widehat{\boldsymbol{\Sigma}}_{\boldsymbol{\Theta}}^{(k,t)}
        = \widehat{\boldsymbol{\Theta}}^{(t)} \odot \widehat{\boldsymbol{\Gamma}}^{(k,t+1)}.
    \end{equation*}

    The smooth parts of the subproblems are $Q_{\mathrm{case},\boldsymbol{\Gamma}}^{(t)} = \sum_{k=1}^K Q_{\mathrm{case},\boldsymbol{\Gamma}}^{(k,t)}$ and $Q_{\mathrm{case},\boldsymbol{\Theta}}^{(t)} = \sum_{k=1}^K Q_{\mathrm{case},\boldsymbol{\Theta}}^{(k,t)}$ with
    \[
        Q_{\mathrm{case},\boldsymbol{\Gamma}}^{(k,t)} = h_k \left[ \ln |\widehat{\boldsymbol{\Sigma}}_{\boldsymbol{\Gamma}}^{(k,t)}| + \mathrm{tr}((\widehat{\boldsymbol{\Sigma}}_{\boldsymbol{\Gamma}}^{(k,t)})^{-1} S_{\mathrm{case}}^{(k,t)}) \right],
    \]
    and
    \[
        Q_{\mathrm{case},\boldsymbol{\Theta}}^{(k,t)} = h_k \left[ \ln |\widehat{\boldsymbol{\Sigma}}_{\boldsymbol{\Theta}}^{(k,t)}| + \mathrm{tr}((\widehat{\boldsymbol{\Sigma}}_{\boldsymbol{\Theta}}^{(k,t)})^{-1} S_{\mathrm{case}}^{(k,t)}) \right].
    \]

    The non-smooth part, corresponding to $\ell_1$ regularization, is handled by proximal operators. From matrix calculus:
    \begin{equation*}
        \nabla_{\widehat{\boldsymbol{\Sigma}}_{\boldsymbol{\Gamma}}^{(k,t)}} Q_{\mathrm{case},\boldsymbol{\Gamma}}^{(k,t)} = h_k \left[ (\widehat{\boldsymbol{\Sigma}}_{\boldsymbol{\Gamma}}^{(k,t)})^{-1} - (\widehat{\boldsymbol{\Sigma}}_{\boldsymbol{\Gamma}}^{(k,t)})^{-1} S_{\mathrm{case}}^{(k,t)} (\widehat{\boldsymbol{\Sigma}}_{\boldsymbol{\Gamma}}^{(k,t)})^{-1} \right],
    \end{equation*}
    and
    \begin{equation*}
        \nabla_{\widehat{\boldsymbol{\Sigma}}_{\boldsymbol{\Theta}}^{(k,t)}} Q_{\mathrm{case},\boldsymbol{\Theta}}^{(k,t)} = h_k \left[ (\widehat{\boldsymbol{\Sigma}}_{\boldsymbol{\Theta}}^{(k,t)})^{-1} - (\widehat{\boldsymbol{\Sigma}}_{\boldsymbol{\Theta}}^{(k,t)})^{-1} S_{\mathrm{case}}^{(k,t)} (\widehat{\boldsymbol{\Sigma}}_{\boldsymbol{\Theta}}^{(k,t)})^{-1} \right].
    \end{equation*}

    Applying the chain rule, for $\widehat{\boldsymbol{\Gamma}}^{(k,t)}$:
    \begin{equation*}
        \left[\nabla_{\widehat{\boldsymbol{\Gamma}}^{(k,t)}} Q_{\mathrm{case},\boldsymbol{\Gamma}}^{(k,t)}\right]_{ij} =
        \begin{cases}
            \hat{\theta}_{ij}^{(t)} [\nabla_{\widehat{\boldsymbol{\Sigma}}_{\boldsymbol{\Gamma}}^{(k,t)}} Q_{\mathrm{case},\boldsymbol{\Gamma}}^{(k,t)} ]_{ij}, & i \neq j,\\
            [\nabla_{\widehat{\boldsymbol{\Sigma}}_{\boldsymbol{\Gamma}}^{(k,t)}} Q_{\mathrm{case},\boldsymbol{\Gamma}}^{(k,t)} ]_{ii}, & i=j.
        \end{cases}
    \end{equation*}

    At iteration $t$, we perform gradient descent:
    \begin{equation*}
        \tilde{\gamma}_{ij}^{(k, t+1)} = \hat{\gamma}_{ij}^{(k,t)} - \eta_{\boldsymbol{\Gamma},k}^{(t)} [\nabla_{\widehat{\boldsymbol{\Gamma}}^{(k,t)}} Q_{\mathrm{case},\boldsymbol{\Gamma}}^{(k,t)}]_{ij},
    \end{equation*}
    followed by the proximal update:
    \begin{equation*}
        \hat{\gamma}_{ij}^{(k,t+1)} = 
        \begin{cases}
            \operatorname{Soft}(\tilde{\gamma}_{ij}^{(k, t+1)}, \eta_{\boldsymbol{\Gamma},k}^{(t)} \lambda_2), & i \neq j,\\
            \tilde{\gamma}_{ii}^{(k, t+1)}, & i = j,
        \end{cases}
    \end{equation*}
    where \(\operatorname{Soft}(x, \rho) = \mathrm{sign}(x) \cdot \max\{|x| - \rho, 0\}.\) Similarly, for $\widehat{\boldsymbol{\Theta}}^{(t+1)}$:
    \begin{equation*}
        \hat{\theta}_{ij}^{(t+1)} = 
        \begin{cases}
            \operatorname{Soft}(\tilde{\theta}_{ij}^{(t+1)}, \eta_{\boldsymbol{\Theta}}^{(t)} \lambda_1), & i \neq j,\\
            1, & i=j,
        \end{cases}
    \end{equation*}
    with
    \begin{equation*}
        \begin{aligned}
            \tilde{\theta}_{ij}^{(t+1)} =& \hat{\theta}_{ij}^{(t)} - \eta_{\boldsymbol{\Theta}}^{(t)} [\nabla_{\widehat{\boldsymbol{\Theta}}^{(t)}} Q_{\mathrm{case},\boldsymbol{\Theta}}^{(t)}]_{ij},\\
            [\nabla_{\widehat{\boldsymbol{\Theta}}^{(t)}} Q_{\mathrm{case},\boldsymbol{\Theta}}^{(t)}]_{ij} =& 
        \begin{cases}
            \sum_{k=1}^K \hat{\gamma}_{ij}^{(k,t+1)} [\nabla_{\widehat{\boldsymbol{\Sigma}}_{\boldsymbol{\Theta}}^{(k,t)}} Q_{\mathrm{case},\boldsymbol{\Theta}}^{(k,t)}]_{ij}, & i \neq j,\\
            0, & i=j.
        \end{cases}
        \end{aligned}
    \end{equation*}

    For the JCELLMCD, the two subproblems are:
    \begin{itemize}
        \item Fix $\widehat{\boldsymbol{\Theta}}^{(t)}$ and $\widehat{\boldsymbol{\mu}}^{(k, t+1)}$, update $\widehat{\boldsymbol{\Gamma}}^{(k,t+1)}$:
            \begin{equation*}
                \widehat{\boldsymbol{\Gamma}}^{(k,t+1)} = \arg\min_{\boldsymbol{\Gamma}} n_k \left[ \ln |\widehat{\boldsymbol{\Theta}}^{(t)} \odot \boldsymbol{\Gamma}| + \mathrm{tr}((\widehat{\boldsymbol{\Theta}}^{(t)} \odot \boldsymbol{\Gamma})^{-1} S_{\mathrm{cell}}^{(k,t)}) \right] + \lambda_2 \sum_{i \neq j} |\gamma_{ij}|.
            \end{equation*}
        \item Fix $\{\widehat{\boldsymbol{\Gamma}}^{(k,t+1)}\}_{k=1}^{K}$ and $\widehat{\boldsymbol{\mu}}^{(k,t+1)}$, update $\widehat{\boldsymbol{\Theta}}^{(t+1)}$:
            \begin{equation*}
                \widehat{\boldsymbol{\Theta}}^{(t+1)} = \arg\min_{\boldsymbol{\Theta}} \sum_{k=1}^K n_k \left[ \ln |\boldsymbol{\Theta} \odot \widehat{\boldsymbol{\Gamma}}^{(k,t+1)}| + \mathrm{tr}((\boldsymbol{\Theta} \odot \widehat{\boldsymbol{\Gamma}}^{(k,t+1)})^{-1} S_{\mathrm{cell}}^{(k,t)}) \right] + \lambda_1 \sum_{i \neq j} |\theta_{ij}|.
            \end{equation*}
    \end{itemize}

    For notational clarity, define
    \begin{equation*}
        \widehat{\boldsymbol{\Sigma}}_{\boldsymbol{\Gamma}}^{(k,t)}
        = \widehat{\boldsymbol{\Theta}}^{(t)} \odot \widehat{\boldsymbol{\Gamma}}^{(k,t)},
        \quad
        \widehat{\boldsymbol{\Sigma}}_{\boldsymbol{\Theta}}^{(k,t)}
        = \widehat{\boldsymbol{\Theta}}^{(t)} \odot \widehat{\boldsymbol{\Gamma}}^{(k,t+1)}.
    \end{equation*}
    The $\ell_1$ regularization is handled via the proximal operator. The smooth gradient satisfies:
    \begin{equation*}
        \nabla_{\widehat{\boldsymbol{\Sigma}}_{\boldsymbol{\Gamma}}^{(k,t)}} Q_{\mathrm{smooth},\boldsymbol{\Gamma}}^{(k,t)} = n_k \left[ (\widehat{\boldsymbol{\Sigma}}_{\boldsymbol{\Gamma}}^{(k,t)})^{-1} - (\widehat{\boldsymbol{\Sigma}}_{\boldsymbol{\Gamma}}^{(k,t)})^{-1} S_{\mathrm{cell}}^{(k,t)} (\widehat{\boldsymbol{\Sigma}}_{\boldsymbol{\Gamma}}^{(k,t)})^{-1} \right],
    \end{equation*}
    and
    \begin{equation*}
        \nabla_{\widehat{\boldsymbol{\Sigma}}_{\boldsymbol{\Theta}}^{(k,t)}} Q_{\mathrm{smooth},\boldsymbol{\Theta}}^{(k,t)} = n_k \left[ (\widehat{\boldsymbol{\Sigma}}_{\boldsymbol{\Theta}}^{(k,t)})^{-1} - (\widehat{\boldsymbol{\Sigma}}_{\boldsymbol{\Theta}}^{(k,t)})^{-1} S_{\mathrm{cell}}^{(k,t)} (\widehat{\boldsymbol{\Sigma}}_{\boldsymbol{\Theta}}^{(k,t)})^{-1} \right],
    \end{equation*}
    with
    \begin{equation*}
        Q_{\mathrm{smooth},\boldsymbol{\Gamma}}^{(k,t)} = n_k \left[ \ln |\widehat{\boldsymbol{\Sigma}}_{\boldsymbol{\Gamma}}^{(k,t)}| + \mathrm{tr}((\widehat{\boldsymbol{\Sigma}}_{\boldsymbol{\Gamma}}^{(k,t)})^{-1} S_{\mathrm{cell}}^{(k,t)}) \right]
    \end{equation*}
    and
    \begin{equation*}
        Q_{\mathrm{smooth},\boldsymbol{\Theta}}^{(k,t)} = n_k \left[ \ln |\widehat{\boldsymbol{\Sigma}}_{\boldsymbol{\Theta}}^{(k,t)}| + \mathrm{tr}((\widehat{\boldsymbol{\Sigma}}_{\boldsymbol{\Theta}}^{(k,t)})^{-1} S_{\mathrm{cell}}^{(k,t)}) \right].
    \end{equation*}
    The chain rule gives
    \begin{equation*}
        [\nabla_{\widehat{\boldsymbol{\Gamma}}^{(k,t)}} Q_{\mathrm{smooth},\boldsymbol{\Gamma}}^{(k,t)}]_{ij} =
        \begin{cases}
            \hat{\theta}_{ij}^{(t)} [\nabla_{\widehat{\boldsymbol{\Sigma}}_{\boldsymbol{\Gamma}}^{(k,t)}} Q_{\mathrm{smooth},\boldsymbol{\Gamma}}^{(k,t)}]_{ij}, & i \neq j,\\
            [\nabla_{\widehat{\boldsymbol{\Sigma}}_{\boldsymbol{\Gamma}}^{(k,t)}} Q_{\mathrm{smooth},\boldsymbol{\Gamma}}^{(k,t)}]_{ii}, & i=j.
        \end{cases}
    \end{equation*}
    Gradient descent and proximal update:
    \begin{equation*}
        \tilde{\gamma}_{ij}^{(k,t+1)} = \hat{\gamma}_{ij}^{(k,t)} - \eta_{\boldsymbol{\Gamma},k}^{(t)} [\nabla_{\widehat{\boldsymbol{\Gamma}}^{(k,t)}} Q_{\mathrm{smooth},\boldsymbol{\Gamma}}^{(k,t)}]_{ij}, \quad
        \hat{\gamma}_{ij}^{(k,t+1)} =
        \begin{cases}
            \operatorname{Soft}(\tilde{\gamma}_{ij}^{(k,t+1)}, \eta_{\boldsymbol{\Gamma},k}^{(t)} \lambda_2), & i \neq j,\\
            \tilde{\gamma}_{ii}^{(k,t+1)}, & i=j.
        \end{cases} 
    \end{equation*}
    Update for $\widehat{\boldsymbol{\Theta}}^{(t+1)}$:
    \begin{equation*}
        \hat{\theta}_{ij}^{(t+1)} =
        \begin{cases}
            \operatorname{Soft}(\tilde{\theta}_{ij}^{(t+1)}, \eta_{\boldsymbol{\Theta}}^{(t)} \lambda_1), & i \neq j,\\
            1, & i=j,
        \end{cases}
        \quad
        \tilde{\theta}_{ij}^{(t+1)} = \hat{\theta}_{ij}^{(t)} - \eta_{\boldsymbol{\Theta}}^{(t)} [\nabla_{\widehat{\boldsymbol{\Theta}}^{(t)}} Q_{\mathrm{smooth},\boldsymbol{\Theta}}^{(t)}]_{ij},
    \end{equation*}
    \begin{equation*}
        [\nabla_{\widehat{\boldsymbol{\Theta}}^{(t)}} Q_{\mathrm{smooth},\boldsymbol{\Theta}}^{(t)}]_{ij} =
        \begin{cases}
            \sum_{k=1}^K \hat{\gamma}_{ij}^{(k,t+1)} [\nabla_{\widehat{\boldsymbol{\Sigma}}_{\boldsymbol{\Theta}}^{(k,t)}} Q_{\mathrm{smooth},\boldsymbol{\Theta}}^{(k,t)}]_{ij}, & i \neq j,\\
            0, & i=j.
        \end{cases}
    \end{equation*}

    To enforce the constraint $\lambda_{\min}(\widehat{\boldsymbol{\Sigma}}^{(k)}) \geq a_k > 0$, where $\widehat{\boldsymbol{\Sigma}}^{(k)}=\widehat{\boldsymbol{\Theta}} \odot \widehat{\boldsymbol{\Gamma}}^{(k)}$, a backtracking line search is employed to determine the step size $\eta_{\boldsymbol{\Theta}}^{(t)}$ and $\eta_{\boldsymbol{\Gamma},k}^{(t)}$ at each proximal gradient update. A candidate update is accepted only if the corresponding unprojected covariance matrices satisfy the minimum eigenvalue constraint and the penalized objective function is non-increasing; otherwise, the step size is reduced and the update is retried. In addition, to improve numerical stability, we use minimum eigenvalue clipping as a numerical safeguard when reconstructing covariance matrices and in linear algebra operations such as Cholesky factorization and log-determinant computation. 
    
    \section{Proofs for results}\label{app:proof}

    \subsection{Proof of Theorem~\ref{thm:jointedfastmcd_converge}}\label{apd:jointedfastmcd_converge}

    \begin{proof}
        First, according to the algorithm, the update of $\boldsymbol{\mu}$ corresponds to solving the maximum likelihood estimate, which minimizes the above objective function and therefore does not increase it. Specifically, fixing $\boldsymbol{H}^{(k,t)}$ and $\boldsymbol{\Sigma}^{(k,t)}$, where $\boldsymbol{\Sigma}^{(k,t)}=\boldsymbol{\Theta}^{(t)}\odot\boldsymbol{\Gamma}^{(k,t)}$, the terms related to $\boldsymbol{\mu}^{(k)}$ in the $k$-th group are
        \begin{equation*}
            \mathcal{L}_{\boldsymbol{\mu}^{(k)}} = \sum_{i=1}^{n_k} w_i^{(k,t)} (\boldsymbol{z}_i^{(k)} - \boldsymbol{\mu}^{(k)})^\top (\boldsymbol{\Sigma}^{(k,t)})^{-1} (\boldsymbol{z}_i^{(k)} - \boldsymbol{\mu}^{(k)}).
        \end{equation*}
        Taking the gradient with respect to $\boldsymbol{\mu}^{(k)}$ yields:
        \begin{equation*}
            \nabla_{\boldsymbol{\mu}^{(k)}} \mathcal{L}_{\boldsymbol{\mu}^{(k)}} = -2(\boldsymbol{\Sigma}^{(k,t)})^{-1} \sum_{i=1}^{n_k} w_i^{(k,t)} (\boldsymbol{z}_i^{(k)} - \boldsymbol{\mu}^{(k)}).
        \end{equation*}
        Setting the gradient to zero gives
        \begin{equation*}
            \sum_{i=1}^{n_k} w_i^{(k,t)} (\boldsymbol{z}_i^{(k)} - \boldsymbol{\mu}^{(k)}) = 0,
        \end{equation*}
        which implies
        \begin{equation*}
            \boldsymbol{\mu}^{(k,t+1)} = \frac{\sum_{i=1}^{n_k} w_i^{(k,t)} \boldsymbol{z}_i^{(k)}}{\sum_{i=1}^{n_k} w_i^{(k,t)}} = \frac{1}{h_k} \sum_{i=1}^{n_k} w_i^{(k,t)} \boldsymbol{z}_i^{(k)}.
        \end{equation*}
        Moreover, the Hessian matrix is
        \begin{equation*}
            \nabla_{\boldsymbol{\mu}^{(k)}}^2 \mathcal{L}_{\boldsymbol{\mu}^{(k)}} = 2 h_k (\boldsymbol{\Sigma}^{(k,t)})^{-1} \succ 0.
        \end{equation*}
        Since this is a strictly convex quadratic function, the above provides the unique global minimum. Therefore,
        \begin{equation*}
            \mathcal{L}_{\text{case}}(\boldsymbol{\mu}^{(t+1)}, \boldsymbol{\Sigma}^{(t)}, \boldsymbol{H}^{(t)}) \leq \mathcal{L}_{\text{case}}(\boldsymbol{\mu}^{(t)}, \boldsymbol{\Sigma}^{(t)}, \boldsymbol{H}^{(t)}),
        \end{equation*}
        where $\boldsymbol{\mu}^{(t+1)}=\{\boldsymbol{\mu}^{(k,t+1)}\}_{k=1}^K$, $\boldsymbol{\Sigma}^{(t)}=\{\boldsymbol{\Sigma}^{(k,t)}\}_{k=1}^K$ and $\boldsymbol{\Sigma}^{(k,t)}=\boldsymbol{\Theta}^{(t)}\odot\boldsymbol{\Gamma}^{(k,t)}$.

                For the update of $\boldsymbol{\Gamma}^{(k)}$, at iteration $t$, fixing $\boldsymbol{\mu}^{(k,t+1)}$, $\boldsymbol{H}^{(k,t)}$, and $\boldsymbol{\Theta}^{(t)}$, define the smooth part of the current subproblem as
        \begin{equation*}
            Q_{\mathrm{case},\boldsymbol{\Gamma}}^{(k,t)}(\boldsymbol{\Gamma}^{(k)})
            =
            Q_{\mathrm{case}}^{(k)}\left(\boldsymbol{\mu}^{(k,t+1)}, \boldsymbol{\Theta}^{(t)} \odot \boldsymbol{\Gamma}^{(k)}, \boldsymbol{H}^{(k,t)}\right).
        \end{equation*}
        The Hessian of this expression consists solely of matrix products involving $(\boldsymbol{\Sigma}^{(k)})^{-1}$ and the fixed sample covariance $S_{\mathrm{case}}^{(k,t)}$, given $\boldsymbol{\mu}^{(k,t+1)}$, $\boldsymbol{H}^{(k,t)}$ and $\boldsymbol{\Theta}^{(t)}$. Moreover, the relationship between $\boldsymbol{\Gamma}^{(k)}$ and $\boldsymbol{\Sigma}^{(k)}$ is linear, since $\boldsymbol{\Sigma}^{(k)} = \boldsymbol{\Theta}^{(t)} \odot \boldsymbol{\Gamma}^{(k)}$. Within the feasible region, the constraint $\lambda_{\min}(\boldsymbol{\Sigma}^{(k)}) \ge a > 0$ guarantees that the spectral norm of the inverse matrix is strictly bounded, i.e., $\|(\boldsymbol{\Sigma}^{(k)})^{-1}\|_2 \le 1/a$. Since $\|S^{(k,t)}_{\mathrm{case}}\|_2$ and the entries of $\boldsymbol{\Theta}^{(t)}$ are fixed for the current subproblem, all Hessian components are bounded locally. Therefore, there exists a finite local Lipschitz constant $L_{\boldsymbol{\Gamma}, k}^{(t)}$ such that $\nabla_{\boldsymbol{\Gamma}^{(k)}} Q_{\mathrm{case},\boldsymbol{\Gamma}}^{(k,t)}$ is Lipschitz continuous in the current subproblem (Lemma~1.2.2 in \citet{nesterov2004introductory}). Hence, for any $X, Y$ in the current feasible region,
        \begin{equation*}
            Q_{\mathrm{case},\boldsymbol{\Gamma}}^{(k,t)}(Y) \leq Q_{\mathrm{case},\boldsymbol{\Gamma}}^{(k,t)}(X) + \langle \nabla_{\boldsymbol{\Gamma}^{(k)}} Q_{\mathrm{case},\boldsymbol{\Gamma}}^{(k,t)}(X), Y - X \rangle + \frac{L_{\boldsymbol{\Gamma},k}^{(t)}}{2} \|Y - X\|_F^2.
        \end{equation*}
        Similarly, for the update of $\boldsymbol{\Theta}$, fixing $\boldsymbol{\mu}^{(k,t+1)}$, $\boldsymbol{H}^{(k,t)}$, and $\boldsymbol{\Gamma}^{(k,t+1)}$, define
        \begin{equation*}
            Q_{\mathrm{case},\boldsymbol{\Theta}}^{(t)}(\boldsymbol{\Theta})
            =
            \sum_{k=1}^{K}
            Q_{\mathrm{case}}^{(k)}\left(\boldsymbol{\mu}^{(k,t+1)}, \boldsymbol{\Theta} \odot \boldsymbol{\Gamma}^{(k,t+1)}, \boldsymbol{H}^{(k,t)}\right).
        \end{equation*}
        Since the mapping $\boldsymbol{\Theta} \mapsto \boldsymbol{\Theta} \odot \boldsymbol{\Gamma}^{(k,t+1)}$ is linear for each fixed $\boldsymbol{\Gamma}^{(k,t+1)}$, and since $\boldsymbol{\Gamma}^{(k,t+1)}$ is fixed in the current $\boldsymbol{\Theta}$-subproblem, the same argument implies that there exists a finite local Lipschitz constant $L_{\boldsymbol{\Theta}}^{(t)}$ such that $\nabla_{\boldsymbol{\Theta}} Q_{\mathrm{case},\boldsymbol{\Theta}}^{(t)}$ is Lipschitz continuous in the current subproblem. Let
        \begin{equation*}
            F_k^{(t)}(\boldsymbol{\Gamma}^{(k)}) = Q_{\mathrm{case},\boldsymbol{\Gamma}}^{(k,t)}(\boldsymbol{\Gamma}^{(k)}) + \lambda_2 \|\boldsymbol{\Gamma}^{(k)}\|_{1, \mathrm{off}}
        \end{equation*}
        and
        \begin{equation*}
            F^{(t)}(\boldsymbol{\Theta}) = Q_{\mathrm{case},\boldsymbol{\Theta}}^{(t)}(\boldsymbol{\Theta}) + \lambda_1 \|\boldsymbol{\Theta}\|_{1, \mathrm{off}}.
        \end{equation*}
        By the proximal gradient lemma \cite{beck2017first}, when $0 < \eta_{\boldsymbol{\Gamma},k}^{(t)} \le (1-\delta_{\boldsymbol{\Gamma}})/L_{\boldsymbol{\Gamma},k}^{(t)}$, applying this inequality with $X = \boldsymbol{\Gamma}^{(k,t)}, Y = \boldsymbol{\Gamma}^{(k,t+1)}$ and using the optimality of the proximal gradient update yields
        \begin{equation*}
            F_k^{(t)}(\boldsymbol{\Gamma}^{(k,t+1)}) \le F_k^{(t)}(\boldsymbol{\Gamma}^{(k,t)}) - \left( \frac{1}{2\eta_{\boldsymbol{\Gamma},k}^{(t)}} - \frac{L_{\boldsymbol{\Gamma},k}^{(t)}}{2} \right) \|\boldsymbol{\Gamma}^{(k,t+1)} - \boldsymbol{\Gamma}^{(k,t)}\|_F^2.
        \end{equation*}
        Similarly, when $0 < \eta_{\boldsymbol{\Theta}}^{(t)} \le (1-\delta_{\boldsymbol{\Theta}})/L_{\boldsymbol{\Theta}}^{(t)}$, we have
        \begin{equation*}
            F^{(t)}(\boldsymbol{\Theta}^{(t+1)}) \le F^{(t)}(\boldsymbol{\Theta}^{(t)}) - \left( \frac{1}{2\eta_{\boldsymbol{\Theta}}^{(t)}} - \frac{L_{\boldsymbol{\Theta}}^{(t)}}{2} \right) \|\boldsymbol{\Theta}^{(t+1)} - \boldsymbol{\Theta}^{(t)}\|_F^2.
        \end{equation*}
        Since the coefficients of the squared norm terms are nonnegative under the above local step-size conditions, the updates of $\boldsymbol{\Gamma}^{(k)}$ and $\boldsymbol{\Theta}$ do not increase the objective function. In summary,
        \begin{equation*}
            \mathcal{L}_{\mathrm{case}}(\boldsymbol{\Omega}^{(t+1)}, \boldsymbol{H}^{(t)}) \le \mathcal{L}_{\mathrm{case}}(\boldsymbol{\Omega}^{(t)}, \boldsymbol{H}^{(t)}).
        \end{equation*}

        Finally, the update rule for $\boldsymbol{H}$ ensures that the objective function does not increase. Given $\boldsymbol{\Omega}$, the terms related to $\boldsymbol{H}^{(k)}$ in the $k$-th group are
        \begin{equation*}
            \mathcal{L}_{\boldsymbol{H}^{(k)}} = \sum_{i=1}^{n_k} w_i^{(k)} \left[ \ln |\boldsymbol{\Sigma}^{(k)}| + q \ln(2\pi) + \mathrm{MD}^2(\boldsymbol{z}_i^{(k)}, \boldsymbol{\Omega}^{(k)}) \right],
        \end{equation*}
        where $\mathrm{MD}^2(\boldsymbol{z}_i^{(k)}, \boldsymbol{\Omega}^{(k)}) = (\boldsymbol{z}_i^{(k)} - \boldsymbol{\mu}^{(k)})^\top (\boldsymbol{\Sigma}^{(k)})^{-1} (\boldsymbol{z}_i^{(k)} - \boldsymbol{\mu}^{(k)})$. Since $\sum_{i=1}^{n_k} w_i^{(k)} = h_k$ is fixed, $\ln |\boldsymbol{\Sigma}^{(k)}| + q \ln(2\pi)$ acts as a constant. Therefore, minimizing $\mathcal{L}^{(k)}$ is equivalent to minimizing $\sum_{i=1}^{n_k} w_i^{(k)} \mathrm{MD}^2(\boldsymbol{z}_i^{(k)}, \boldsymbol{\Omega}^{(k)})$ with $w_i^{(k)} \in \{0, 1\}$ and $\sum_{i=1}^{n_k} w_i^{(k)} = h_k$. The solution to this 0-1 linear programming problem is to select the $h_k$ samples with the smallest Mahalanobis distances. Algorithm~\ref{alg:fastmcd_joint} performs exactly this. Hence,
        \begin{equation*}
            \mathcal{L}_{\text{case}}(\boldsymbol{\Omega}^{(t+1)}, \boldsymbol{H}^{(t+1)}) \leq \mathcal{L}_{\text{case}}(\boldsymbol{\Omega}^{(t+1)}, \boldsymbol{H}^{(t)}).
        \end{equation*}
        Thus,
        \begin{equation*}
            \mathcal{L}_{\mathrm{case}}(\boldsymbol{\Omega}^{(t+1)}, \boldsymbol{H}^{(t+1)}) \le \mathcal{L}_{\mathrm{case}}(\boldsymbol{\Omega}^{(t+1)}, \boldsymbol{H}^{(t)}) \le \mathcal{L}_{\mathrm{case}}(\boldsymbol{\Omega}^{(t)}, \boldsymbol{H}^{(t)}),
        \end{equation*}
        establishing the first property. Moreover, let $a = \min_{1\leq k \leq K}\{a_k\}$. For any $k$, given $\lambda_{\min}(\boldsymbol{\Sigma}^{(k)}) \ge a$, we have $|\boldsymbol{\Sigma}^{(k)}| = \prod_{j=1}^q \lambda_j(\boldsymbol{\Sigma}^{(k)}) \ge a^q$, which implies $\ln |\boldsymbol{\Sigma}^{(k)}| \ge q \ln a$. The Mahalanobis distance is non-negative:
        \begin{equation*}
            (\boldsymbol{z}_i^{(k)} - \boldsymbol{\mu}^{(k)})^\top (\boldsymbol{\Sigma}^{(k)})^{-1} (\boldsymbol{z}_i^{(k)} - \boldsymbol{\mu}^{(k)}) \ge 0.
        \end{equation*}
        The $\ell_1$ regularization term is also non-negative. Hence,
        \begin{equation*}
            \mathcal{L}_{\mathrm{case}}(\boldsymbol{\Omega}, \boldsymbol{H}) \ge \sum\limits_{k=1}^{K} Q_{\mathrm{case}}^{(k)} \ge \sum_{k=1}^K h_k \left[ q \ln a + q \ln(2\pi) \right] := L_{\min} > -\infty.
        \end{equation*}
        By the monotone convergence theorem, since the update sequence of $\mathcal{L}_{\mathrm{case}}$ is monotonically non-increasing and bounded below, it converges to a finite limit $\mathcal{L}^{*}$, establishing the second property.

        For the third property, we first show the iterative sequence has a limit point. Since
        \begin{equation*}
            \mathcal{L}_{\mathrm{case}}(\boldsymbol{\Omega}^{(t)}, \boldsymbol{H}^{(t)}) \le \mathcal{L}_{\mathrm{case}}(\boldsymbol{\Omega}^{(0)}, \boldsymbol{H}^{(0)}) := C_0,
        \end{equation*}
        all iterates are contained in the initial sublevel set. Moreover, the Mahalanobis-distance terms and the penalty term are nonnegative. Therefore, for each \(k\),
        \[
            h_k\ln|\boldsymbol{\Sigma}^{(k)}|
            \le
            C_0
            -
            h_k q\ln(2\pi)
            -
            \sum_{l \neq k} h_l \cdot q \{\ln a + \ln(2 \pi)\}
            =:C_k.
        \]
        Thus, \(|\boldsymbol{\Sigma}^{(k)}| \leq \exp(C_k/h_k)\). Since all eigenvalues of \(\boldsymbol{\Sigma}^{(k)}\) are bounded below by \(a\), we have
        \[
            |\boldsymbol{\Sigma}^{(k)}|
            =
            \prod_{j=1}^q \lambda_j(\boldsymbol{\Sigma}^{(k)})
            \ge
            a^{q-1}\lambda_{\max}(\boldsymbol{\Sigma}^{(k)}).
        \]
        Consequently, $\lambda_{\max}(\boldsymbol{\Sigma}^{(k)}) \leq a^{-(q-1)}\exp(C_k/h_k)$, which shows that \(\{\boldsymbol{\Sigma}^{(k,t)}\}_t\) is uniformly bounded for every
        group \(k\). Together with the lower eigenvalue constraint, this also implies
        that \(\{(\boldsymbol{\Sigma}^{(k,t)})^{-1}\}_t\) is uniformly bounded.

        We next claim that the location estimates are bounded. Let
        \[
            R_k=\max_{1\le i\le n_k}\|\boldsymbol{z}_i^{(k)}\|_2<\infty .
        \]
        For any selected subset \(\boldsymbol{H}^{(k,t)}\) of size \(h_k\),
        \[
        \sum_{i=1}^{n_k}w_i^{(k,t)}
        (\boldsymbol{z}_i^{(k)}-\boldsymbol{\mu}^{(k,t)})^\top
        (\boldsymbol{\Sigma}^{(k,t)})^{-1}
        (\boldsymbol{z}_i^{(k)}-\boldsymbol{\mu}^{(k,t)})
        \ge
        \frac{1}{\lambda_{\max}(\boldsymbol{\Sigma}^{(k,t)})}
        \sum_{i=1}^{n_k}w_i^{(k,t)}
        \|\boldsymbol{z}_i^{(k)}-\boldsymbol{\mu}^{(k,t)}\|_2^2 .
        \]
        Using \(\|\boldsymbol{z}_i^{(k)}-\boldsymbol{\mu}^{(k,t)}\|_2 \ge
        | \|\boldsymbol{\mu}^{(k,t)}\|_2 - R_k | \), we obtain
        \[
        \sum_{i=1}^{n_k}w_i^{(k,t)}
        (\boldsymbol{z}_i^{(k)}-\boldsymbol{\mu}^{(k,t)})^\top
        (\boldsymbol{\Sigma}^{(k,t)})^{-1}
        (\boldsymbol{z}_i^{(k)}-\boldsymbol{\mu}^{(k,t)})
        \ge
        \frac{1}{\lambda_{\max}(\boldsymbol{\Sigma}^{(k,t)})}
        (\|\boldsymbol{\mu}^{(k,t)}\|_2 - R_k)^2 .
        \]
        Since the left-hand side is bounded above on the same sublevel set and \(\lambda_{\max}(\boldsymbol{\Sigma}^{(k,t)})\) is uniformly bounded, it follows that \(\{\boldsymbol{\mu}^{(k,t)}\}_t\) is bounded.

        Finally, we prove the boundedness of the reparameterized variables. The lower bound on the likelihood part implies that the penalty term is uniformly bounded on the sublevel set:
        \[
            \lambda_1 \|\boldsymbol{\Theta}^{(t)}\|_{1,\mathrm{off}} + \lambda_2 \sum_k \|\boldsymbol{\Gamma}^{(k,t)}\|_{1,\mathrm{off}} \le C_0 - L_{\min} < \infty,
        \]
        Hence the off-diagonal entries of \(\boldsymbol{\Theta}^{(t)}\) and
        \(\boldsymbol{\Gamma}^{(k,t)}\) are bounded. The diagonal entries of \(\boldsymbol{\Theta}^{(t)}\) are fixed at one, while the diagonal entries of \(\boldsymbol{\Gamma}^{(k,t)}\) coincide with those of \(\boldsymbol{\Sigma}^{(k,t)}\) and are therefore bounded by the uniform bound on \(\lambda_{\max}(\boldsymbol{\Sigma}^{(k,t)})\). Thus
        \(\boldsymbol{\Theta}^{(t)}\) and \(\boldsymbol{\Gamma}^{(k,t)}\) are bounded. Therefore, the sequence $\{\boldsymbol{\Omega}^{(t)}\}$ is bounded. Furthermore, since $\boldsymbol{H}^{(t)}$ only takes a finite number of 0-1 combinations, by the Bolzano--Weierstrass theorem \cite{bartle2018introduction}, there exists a convergent subsequence $(\boldsymbol{\Omega}^{(t_r)}, \boldsymbol{H}^{(t_r)}) \to (\boldsymbol{\Omega}^*, \boldsymbol{H}^*)$, where $t_r$ is the subsequence index, $r \in \mathbb{N}^+$.

        Moreover, since $\{\boldsymbol{\Omega}^{(t)}\}$ is bounded and the feasible region satisfies $\lambda_{\min}(\boldsymbol{\Sigma}^{(k,t)}) \ge a_k > 0$, all iterates lie in a compact subset of the feasible region. On this compact set, the gradients of the smooth parts of the $\boldsymbol{\Gamma}^{(k)}$- and $\boldsymbol{\Theta}$-subproblems are globally Lipschitz continuous. Hence, there exist finite constants $L_{\boldsymbol{\Gamma},k}$ and $L_{\boldsymbol{\Theta}}$ such that, for all iterations $t$,
        \begin{equation*}
            L_{\boldsymbol{\Gamma},k}^{(t)} \le L_{\boldsymbol{\Gamma},k}, 
            \quad
            L_{\boldsymbol{\Theta}}^{(t)} \le L_{\boldsymbol{\Theta}}.
        \end{equation*}
        When the step sizes are chosen with a uniform margin, i.e.,
        \begin{equation*}
            0 < \eta_{\boldsymbol{\Gamma},k}^{(t)} \le \frac{1-\delta_{\boldsymbol{\Gamma}}}{L_{\boldsymbol{\Gamma},k}},
            \quad
            0 < \eta_{\boldsymbol{\Theta}}^{(t)} \le \frac{1-\delta_{\boldsymbol{\Theta}}}{L_{\boldsymbol{\Theta}}},
        \end{equation*}
        for some $\delta_{\boldsymbol{\Gamma}}, \delta_{\boldsymbol{\Theta}} \in (0,1)$, the proximal gradient descent inequalities imply the uniform sufficient decrease property. Specifically, there exist constants $c_{\boldsymbol{\Gamma},k}>0$ and $c_{\boldsymbol{\Theta}}>0$ such that
        \begin{equation*}
            F_k^{(t)}(\boldsymbol{\Gamma}^{(k,t+1)})
            \le
            F_k^{(t)}(\boldsymbol{\Gamma}^{(k,t)})
            -
            c_{\boldsymbol{\Gamma},k}
            \|\boldsymbol{\Gamma}^{(k,t+1)}-\boldsymbol{\Gamma}^{(k,t)}\|_F^2,
        \end{equation*}
        and
        \begin{equation*}
            F^{(t)}(\boldsymbol{\Theta}^{(t+1)})
            \le
            F^{(t)}(\boldsymbol{\Theta}^{(t)})
            -
            c_{\boldsymbol{\Theta}}
            \|\boldsymbol{\Theta}^{(t+1)}-\boldsymbol{\Theta}^{(t)}\|_F^2.
        \end{equation*}
        Summing these inequalities over $t$ and using the fact that $\mathcal{L}_{\mathrm{case}}$ is bounded below, we obtain 
        \begin{equation*}
            \sum_t \|\boldsymbol{\Gamma}^{(k,t+1)} - \boldsymbol{\Gamma}^{(k,t)}\|_F^2 < \infty, \quad \sum_t \|\boldsymbol{\Theta}^{(t+1)} - \boldsymbol{\Theta}^{(t)}\|_F^2 < \infty.
        \end{equation*}
                Hence,
        \begin{equation*}
            \boldsymbol{\Gamma}^{(k,t+1)} - \boldsymbol{\Gamma}^{(k,t)} \to 0, \quad \boldsymbol{\Theta}^{(t+1)} - \boldsymbol{\Theta}^{(t)} \to 0.
        \end{equation*}
        Moreover, since the step sizes are bounded away from zero, i.e.,
        \begin{equation*}
            \eta_{\boldsymbol{\Gamma},k}^{(t)} \ge \inf_{s} \eta_{\boldsymbol{\Gamma},k}^{(s)} >0, \quad \eta_{\boldsymbol{\Theta}}^{(t)} \ge \inf_{s} \eta_{\boldsymbol{\Theta}}^{(s)} >0, \quad t \geq 0.
        \end{equation*}
        it follows that
        \begin{equation*}
            \frac{1}{\eta_{\boldsymbol{\Gamma},k}^{(t)}}\left(\boldsymbol{\Gamma}^{(k,t+1)} - \boldsymbol{\Gamma}^{(k,t)}\right) \to 0,
            \quad
            \frac{1}{\eta_{\boldsymbol{\Theta}}^{(t)}}\left(\boldsymbol{\Theta}^{(t+1)} - \boldsymbol{\Theta}^{(t)}\right) \to 0.
        \end{equation*}
        For the proximal optimality condition of the $\boldsymbol{\Gamma}$-update:
        \begin{equation*}
            0 \in \nabla_{\boldsymbol{\Gamma}^{(k)}} Q_{\mathrm{case},\boldsymbol{\Gamma}}^{(k,t)}(\boldsymbol{\Gamma}^{(k,t)}) + \frac{1}{\eta_{\boldsymbol{\Gamma},k}^{(t)}} (\boldsymbol{\Gamma}^{(k,t+1)} - \boldsymbol{\Gamma}^{(k,t)}) + \lambda_2 \partial \|\boldsymbol{\Gamma}^{(k,t+1)}\|_{1,\mathrm{off}}.
        \end{equation*}
                Let $(\boldsymbol{\Omega}^{*}, \boldsymbol{H}^{*})$ be an accumulation point satisfying the interior condition
        \begin{equation*}
            \lambda_{\min}(\boldsymbol{\Theta}^{*}\odot\boldsymbol{\Gamma}^{(k)*}) > a_k,\quad k=1,2,\ldots,K.
        \end{equation*}
        By the continuity of the minimum eigenvalue, the eigenvalue constraints are inactive in a sufficiently small neighborhood of this accumulation point. Hence, along the convergent subsequence, the accepted proximal gradient updates coincide locally with the unconstrained proximal gradient updates for the corresponding penalized block subproblems.

        Taking the limit along the convergent subsequence and using the continuity of the gradient, the vanishing of the scaled difference term, and the closedness of the subdifferential of the $\ell_1$ norm yields
        \begin{equation*}
            0 \in \nabla_{\boldsymbol{\Gamma}^{(k)}} Q_{\mathrm{case}}^{(k)}(\boldsymbol{\Omega}^{*}, \boldsymbol{H}^{*}) + \lambda_2 \partial \|\boldsymbol{\Gamma}^{(k)*}\|_{1, \mathrm{off}}.
        \end{equation*}
        By the same reasoning for the $\boldsymbol{\Theta}$-update:
        \begin{equation*}
            0 \in \nabla_{\boldsymbol{\Theta}} \sum\limits_{k=1}^{K} Q_{\mathrm{case}}^{(k)}(\boldsymbol{\Omega}^{*}, \boldsymbol{H}^{*}) + \lambda_1 \partial \|\boldsymbol{\Theta}^{*}\|_{1, \mathrm{off}}.
        \end{equation*}
        Meanwhile, since the $\boldsymbol{\mu}$-update is an exact minimization:
        \begin{equation*}
            0 = \nabla_{\boldsymbol{\mu}^{(k)}} \mathcal{L}_{\mathrm{case}}(\boldsymbol{\Omega}^{*}, \boldsymbol{H}^{*}).
        \end{equation*}
        Finally, since the $\boldsymbol{H}$-update selects the $h_k$ samples with the smallest Mahalanobis distances given the current parameter estimates, $\boldsymbol{H}^{*}$ belongs to the set of minimizers of the discrete $\boldsymbol{H}$-subproblem given $\boldsymbol{\Omega}^{*}$; in the presence of ties, the conclusion is understood under the same tie-breaking rule used in the algorithm. Hence, any interior accumulation point $(\boldsymbol{\Omega}^{*}, \boldsymbol{H}^{*})$ is blockwise stationary with respect to the continuous variables, and $\boldsymbol{H}^{*}$ is optimal for the discrete block. This completes the proof of Theorem~\ref{thm:jointedfastmcd_converge}.
    \end{proof}

    \subsection{Proof of Theorem~\ref{thm:jointedcellmcd_converge}}\label{apd:jointedcellmcd_converge}

    \begin{proof}
        First, given $\boldsymbol{\Omega}$, for the $k$-th group, $i$-th row, and $j$-th column, let $o=\{\tau\neq j:w_{i\tau}^{(k)}=1\}$ denote the indices of the currently retained cells. By changing the status of cell $j$ from flagged to retained, the Schur complement yields: 
        \begin{equation*}
            |\boldsymbol{\Sigma}_{o \cup \{j\}}^{(k)}| = |\boldsymbol{\Sigma}_{oo}^{(k)}| \cdot C_{ij}^{(k)},
        \end{equation*}
        where $C_{ij}^{(k)} = \boldsymbol{\Sigma}_{jj}^{(k)} - \boldsymbol{\Sigma}_{j,o}^{(k)} (\boldsymbol{\Sigma}_{oo}^{(k)})^{-1} (\boldsymbol{\Sigma}_{j,o}^{(k)})^\top$. Meanwhile, the Mahalanobis distance can be decomposed as:
        \begin{equation*}
            \mathrm{MD}_{o \cup \{j\}}^2 = \mathrm{MD}_{o}^2 + \frac{(z_{ij}^{(k)} - \widehat{z}_{ij}^{(k)})^2}{C_{ij}^{(k)}},
        \end{equation*}
        where 
        \begin{equation*}
            \widehat{z}_{ij}^{(k)} = \boldsymbol{\mu}_j^{(k)} + \boldsymbol{\Sigma}_{j,o}^{(k)} (\boldsymbol{\Sigma}_{oo}^{(k)})^{-1} (\boldsymbol{z}_{i,o}^{(k)} - \boldsymbol{\mu}_{o}^{(k)}).
        \end{equation*}
        When $o=\varnothing$, we use the convention that $|\boldsymbol{\Sigma}_{oo}^{(k)}|=1$, $\mathrm{MD}_{o}^{2}=0$, $\widehat z_{ij}^{(k)}=\boldsymbol{\mu}_{j}^{(k)}$, and $C_{ij}^{(k)}=\boldsymbol{\Sigma}_{jj}^{(k)}$. Therefore, compared to keeping it ``flagged'', the increment in the objective function by retaining cell $(i,j)$ is exactly:
        \begin{equation*}
            \Delta_{ij}^{(k)} = \ln C_{ij}^{(k)} + \ln(2\pi) + \frac{(z_{ij}^{(k)} - \widehat{z}_{ij}^{(k)})^2}{C_{ij}^{(k)}} - b_j^{(k)}.
        \end{equation*}

        This matches the quantity defined in Algorithm~\ref{alg:cellwise-updater}. Consequently, fixing the other parameters, the optimization over the $j$-th column in the $k$-th group is equivalent to:
        \begin{equation*}
            \min_{w_{1j}^{(k)}, \dots, w_{n_k j}^{(k)} \in \{0,1\}} \sum_{i=1}^{n_k} w_{ij}^{(k)} \Delta_{ij}^{(k)} \quad \text{s.t.} \quad \sum_{i=1}^{n_k} w_{ij}^{(k)} \ge h_k.
        \end{equation*}
        This 0--1 linear programming problem admits the following solution: all cells with \(\Delta_{ij}^{(k)}<0\) should be retained, and if their number is less than \(h_k\), the remaining cells are selected in ascending order of \(\Delta_{ij}^{(k)}\) until the columnwise lower bound \(h_k\) is satisfied. The thresholding rule in Algorithm~\ref{alg:cellwise-updater} is precisely this optimal solution, where $\Delta_{(h_k)j}^{(k)}$ is the $h_k$-th smallest value in $\{\Delta_{ij}^{(k)}\}_{i=1}^{n_k}$. Thus, each columnwise weight update does not increase the objective function. Applying this argument sequentially over all columns $j=1,\dots,q$ and all groups $k=1,\dots,K$ gives
        \begin{equation*}
            \mathcal{L}_{\mathrm{cell}}(\boldsymbol{\Omega}^{(t)}, \boldsymbol{W}^{(t+1)}) \leq \mathcal{L}_{\mathrm{cell}}(\boldsymbol{\Omega}^{(t)}, \boldsymbol{W}^{(t)}).
        \end{equation*}

        Given $\boldsymbol{W}^{(t+1)}$, the update of $\boldsymbol{\Omega}$ can be viewed as a generalized EM step with penalty. We treat the flagged cells as missing variables and the retained cells as observed variables. Let $\mathcal{L}_{\mathrm{cell}}^{\mathrm{smooth}}(\boldsymbol{\Omega},\boldsymbol{W}^{(t+1)})$ denote the observed negative log-likelihood part of $\mathcal{L}_{\mathrm{cell}}(\boldsymbol{\Omega},\boldsymbol{W}^{(t+1)})$, excluding the cellwise flagging penalty and the $\ell_1$ regularization terms. For the current parameter $\boldsymbol{\Omega}^{(t)}$, the EM majorization relationship gives 
        \begin{equation*}
            \begin{aligned}
            &\quad \mathcal{L}_{\mathrm{cell}}^{\mathrm{smooth}}(\boldsymbol{\Omega},\boldsymbol{W}^{(t+1)})
            -
            \mathcal{L}_{\mathrm{cell}}^{\mathrm{smooth}}(\boldsymbol{\Omega}^{(t)},\boldsymbol{W}^{(t+1)}) \\
                \leq&
            Q_{\mathrm{smooth}}(\boldsymbol{\Omega}\mid \boldsymbol{\Omega}^{(t)},\boldsymbol{W}^{(t+1)})
            -
            Q_{\mathrm{smooth}}(\boldsymbol{\Omega}^{(t)}\mid \boldsymbol{\Omega}^{(t)},\boldsymbol{W}^{(t+1)}),
            \end{aligned}
        \end{equation*}
        where terms independent of $\boldsymbol{\Omega}$ are omitted from the surrogate. Define the penalized EM surrogate
        \begin{equation*}
            Q_{\mathrm{pen}}(\boldsymbol{\Omega}\mid \boldsymbol{\Omega}^{(t)},\boldsymbol{W}^{(t+1)})
            =
            Q_{\mathrm{smooth}}(\boldsymbol{\Omega}\mid \boldsymbol{\Omega}^{(t)},\boldsymbol{W}^{(t+1)})
            +\lambda_1\|\boldsymbol{\Theta}\|_{1,\mathrm{off}}
            +\lambda_2\sum_{k=1}^K\|\boldsymbol{\Gamma}^{(k)}\|_{1,\mathrm{off}}.
        \end{equation*}
        Since the cellwise flagging penalty is fixed when $\boldsymbol{W}^{(t+1)}$ is fixed, adding the regularization terms to both sides yields
        \begin{equation*}
            \mathcal{L}_{\mathrm{cell}}(\boldsymbol{\Omega},\boldsymbol{W}^{(t+1)})
            -
            \mathcal{L}_{\mathrm{cell}}(\boldsymbol{\Omega}^{(t)},\boldsymbol{W}^{(t+1)})
            \leq
            Q_{\mathrm{pen}}(\boldsymbol{\Omega}\mid \boldsymbol{\Omega}^{(t)},\boldsymbol{W}^{(t+1)})
            -
            Q_{\mathrm{pen}}(\boldsymbol{\Omega}^{(t)}\mid \boldsymbol{\Omega}^{(t)},\boldsymbol{W}^{(t+1)}).
        \end{equation*}
        Therefore, it is sufficient to find a feasible update $\boldsymbol{\Omega}^{(t+1)}$ satisfying
        \begin{equation*}
            Q_{\mathrm{pen}}(\boldsymbol{\Omega}^{(t+1)}\mid \boldsymbol{\Omega}^{(t)},\boldsymbol{W}^{(t+1)})
            \leq
            Q_{\mathrm{pen}}(\boldsymbol{\Omega}^{(t)}\mid \boldsymbol{\Omega}^{(t)},\boldsymbol{W}^{(t+1)}),
        \end{equation*}
        which guarantees
        \begin{equation*}
            \mathcal{L}_{\mathrm{cell}}(\boldsymbol{\Omega}^{(t+1)}, \boldsymbol{W}^{(t+1)})
            \leq
            \mathcal{L}_{\mathrm{cell}}(\boldsymbol{\Omega}^{(t)}, \boldsymbol{W}^{(t+1)}).
        \end{equation*}

        The update of $\boldsymbol{\mu}$ is the exact minimizer of the smooth EM surrogate with respect to the location parameters. Specifically, it is given by the sample mean of the conditional complete-data means. Hence, the $\boldsymbol{\mu}$-step does not increase $Q_{\mathrm{pen}}$.
        Next, fixing $\boldsymbol{\mu}^{(t+1)}$, $\boldsymbol{W}^{(t+1)}$, and $\boldsymbol{\Theta}^{(t)}$, the $\boldsymbol{\Gamma}^{(k)}$-subproblem has the form
        \begin{equation*}
            \min_{\boldsymbol{\Gamma}^{(k)}} 
            Q_{\mathrm{smooth},\boldsymbol{\Gamma}}^{(k,t)}(\boldsymbol{\Gamma}^{(k)})
            + \lambda_2 \|\boldsymbol{\Gamma}^{(k)}\|_{1,\mathrm{off}},
        \end{equation*}
        where $S_{\mathrm{case}}^{(k,t)}$ in the proof of Theorem~\ref{thm:jointedfastmcd_converge} is replaced by $S_{\mathrm{cell}}^{(k,t)}$. According to the proximal gradient update with backtracking described in \ref{app:theta_gamma_optimization}, the accepted update is feasible, satisfies $\lambda_{\min}(\boldsymbol{\Theta}^{(t)}\odot \boldsymbol{\Gamma}^{(k,t+1)})\ge a_k$ and yields
        \begin{equation*}
            Q_{\mathrm{smooth},\boldsymbol{\Gamma}}^{(k,t)}(\boldsymbol{\Gamma}^{(k,t+1)})
            + \lambda_2 \|\boldsymbol{\Gamma}^{(k,t+1)}\|_{1,\mathrm{off}}
            \leq
            Q_{\mathrm{smooth},\boldsymbol{\Gamma}}^{(k,t)}(\boldsymbol{\Gamma}^{(k,t)})
            + \lambda_2 \|\boldsymbol{\Gamma}^{(k,t)}\|_{1,\mathrm{off}}.
        \end{equation*}
        Similarly, fixing $\boldsymbol{\mu}^{(t+1)}$, $\boldsymbol{W}^{(t+1)}$, and $\{\boldsymbol{\Gamma}^{(k,t+1)}\}_{k=1}^K$, the $\boldsymbol{\Theta}$-subproblem is
        \begin{equation*}
            \min_{\boldsymbol{\Theta}}
            \sum_{k=1}^K Q_{\mathrm{smooth},\boldsymbol{\Theta}}^{(k,t)}(\boldsymbol{\Theta})
            + \lambda_1 \|\boldsymbol{\Theta}\|_{1,\mathrm{off}}.
        \end{equation*}
        The accepted proximal gradient update with backtracking is feasible, satisfies $\lambda_{\min}(\boldsymbol{\Theta}^{(t+1)}\odot \boldsymbol{\Gamma}^{(k,t+1)})\ge a_k$, $k=1,2,\ldots,K$, and gives
        \begin{equation*}
            \sum_{k=1}^K Q_{\mathrm{smooth},\boldsymbol{\Theta}}^{(k,t)}(\boldsymbol{\Theta}^{(t+1)})
            + \lambda_1 \|\boldsymbol{\Theta}^{(t+1)}\|_{1,\mathrm{off}}
            \leq
            \sum_{k=1}^K Q_{\mathrm{smooth},\boldsymbol{\Theta}}^{(k,t)}(\boldsymbol{\Theta}^{(t)})
            + \lambda_1 \|\boldsymbol{\Theta}^{(t)}\|_{1,\mathrm{off}}.
        \end{equation*}
        Combining the exact $\boldsymbol{\mu}$-update with the accepted proximal gradient updates for $\boldsymbol{\Gamma}^{(k)}$ and $\boldsymbol{\Theta}$, we obtain
        \begin{equation*}
            Q_{\mathrm{pen}}(\boldsymbol{\Omega}^{(t+1)}\mid \boldsymbol{\Omega}^{(t)},\boldsymbol{W}^{(t+1)})
            \leq
            Q_{\mathrm{pen}}(\boldsymbol{\Omega}^{(t)}\mid \boldsymbol{\Omega}^{(t)},\boldsymbol{W}^{(t+1)}).
        \end{equation*}
        Therefore,
        \begin{equation*}
            \mathcal{L}_{\mathrm{cell}}(\boldsymbol{\Omega}^{(t+1)}, \boldsymbol{W}^{(t+1)})
            \leq
            \mathcal{L}_{\mathrm{cell}}(\boldsymbol{\Omega}^{(t)}, \boldsymbol{W}^{(t+1)}).
        \end{equation*}
        Combining this with the $\boldsymbol{W}$-update from the first step establishes the monotonicity of the overall algorithm:
        \begin{equation*}
            \mathcal{L}_{\mathrm{cell}}(\boldsymbol{\Omega}^{(t+1)}, \boldsymbol{W}^{(t+1)})
            \leq
            \mathcal{L}_{\mathrm{cell}}(\boldsymbol{\Omega}^{(t)}, \boldsymbol{W}^{(t)}).
        \end{equation*}

        For any nonempty observed pattern $A \subset \{1, \dots, q\}$, $\boldsymbol{\Sigma}_A^{(k)}$ is a principal submatrix of $\boldsymbol{\Sigma}^{(k)}$. Let $a = \min_{1\leq k \leq K}\{a_k\}$. By Cauchy's interlace theorem (Theorem 4.3.28, \citep{horn2012matrix}),
        \begin{equation*}
            \lambda_{\min}(\boldsymbol{\Sigma}_A^{(k)}) \ge \lambda_{\min}(\boldsymbol{\Sigma}^{(k)}) \ge a.
        \end{equation*}
        Therefore,
        \begin{equation*}
            \ln |\boldsymbol{\Sigma}_A^{(k)}| \ge |A| \ln a > -\infty.
        \end{equation*}
        For the empty pattern $A=\varnothing$, we use the convention $\ln|\boldsymbol{\Sigma}_A^{(k)}|=0$ and $\mathrm{MD}_A^2=0$. The partial Mahalanobis distance is non-negative, the discrete penalty satisfies
        \begin{equation*}
            \sum_{k=1}^K \sum_{j=1}^q b_j^{(k)} \|\mathbf{1} - \boldsymbol{W}_{\cdot j}^{(k)}\|_0 \ge 0,
        \end{equation*}
        and the $\ell_1$ regularization terms are also non-negative. Consequently, $\mathcal{L}_{\mathrm{cell}}$ has a global lower bound. Since it decreases monotonically, the sequence $\mathcal{L}_{\mathrm{cell}}(\boldsymbol{\Omega}^{(t)},\boldsymbol{W}^{(t)})$ converges to a finite value, establishing the second property.

        We next show that the generated sequence has a convergent subsequence. Since 
        \begin{equation*}
            \mathcal{L}_{\mathrm{cell}}(\boldsymbol{\Omega}^{(t)},\boldsymbol{W}^{(t)})
            \leq
            \mathcal{L}_{\mathrm{cell}}(\boldsymbol{\Omega}^{(0)},\boldsymbol{W}^{(0)})=:C_0,
        \end{equation*}
        all iterates lie in the initial sublevel set. Fix a group $k$ and a variable $j$. The constraint $\|\boldsymbol{W}_{\cdot j}^{(k,t)}\|_0\ge h_k$ implies that at least one retained pattern contains the variable $j$. For any such nonempty pattern $A$ with $j\in A$, the corresponding log-determinant term $\ln |\boldsymbol{\Sigma}_A^{(k,t)}|$ is bounded above on the sublevel set, because all other terms in the objective are bounded below. Together with $\lambda_{\min}(\boldsymbol{\Sigma}_A^{(k,t)})\ge a$, we obtain
        \begin{equation*}
            |\boldsymbol{\Sigma}_{A}^{(k,t)}|
            =
            \prod_{l=1}^{|A|}
            \lambda_l(\boldsymbol{\Sigma}_{A}^{(k,t)})
            \ge
            a^{|A|-1}\lambda_{\max}(\boldsymbol{\Sigma}_{A}^{(k,t)}),
        \end{equation*}
        which implies that $\lambda_{\max}(\boldsymbol{\Sigma}_A^{(k,t)})$ is uniformly bounded. Hence the diagonal element $\sigma_{jj}^{(k,t)}$ is uniformly bounded. Since this holds for every $j$, all diagonal elements of $\boldsymbol{\Sigma}^{(k,t)}$ are uniformly bounded. As $\boldsymbol{\Sigma}^{(k,t)}$ is positive semidefinite, every $2\times 2$ principal submatrix is also positive semidefinite. Hence, for any $j\neq l$, $\sigma_{jj}^{(k,t)}\sigma_{ll}^{(k,t)}-(\sigma_{jl}^{(k,t)})^2\ge 0$, which implies $|\sigma_{jl}^{(k,t)}| \leq \sqrt{\sigma_{jj}^{(k,t)}\sigma_{ll}^{(k,t)}}$. Therefore, the boundedness of all diagonal entries implies the boundedness of all off-diagonal entries. Thus $\{\boldsymbol{\Sigma}^{(k,t)}\}_{t}$ is uniformly bounded for every $k$. The lower eigenvalue constraint also implies that $\{(\boldsymbol{\Sigma}^{(k,t)})^{-1}\}_{t}$ is uniformly bounded.

        We now prove that the location estimates are bounded. Let
        \begin{equation*}
            R_k=\max_{1\leq i\leq n_k}\|\boldsymbol{z}_i^{(k)}\|_2<\infty.
        \end{equation*}
        For each coordinate $j$, choose a retained pattern $A$ containing $j$. The corresponding partial Mahalanobis term satisfies
        \begin{equation*}
            (\boldsymbol{z}_{i,A}^{(k)}-\boldsymbol{\mu}_A^{(k)})^\top
            (\boldsymbol{\Sigma}_A^{(k)})^{-1}
            (\boldsymbol{z}_{i,A}^{(k)}-\boldsymbol{\mu}_A^{(k)})
            \geq
            \frac{1}{\lambda_{\max}(\boldsymbol{\Sigma}_A^{(k)})}
            \|\boldsymbol{z}_{i,A}^{(k)}-\boldsymbol{\mu}_A^{(k)}\|_2^2.
        \end{equation*}
        Since $\lambda_{\max}(\boldsymbol{\Sigma}_A^{(k)})$ is uniformly bounded and the objective is bounded above on the sublevel set, $\mu_j^{(k,t)}$ cannot diverge. Hence $\{\boldsymbol{\mu}^{(k,t)}\}_{t}$ is bounded for every $k$.

        Finally, the lower bound on the likelihood part implies that the penalty term is uniformly bounded on the sublevel set.
        Thus the off-diagonal entries of $\boldsymbol{\Theta}^{(t)}$ and $\boldsymbol{\Gamma}^{(k,t)}$ are bounded. The diagonal entries of $\boldsymbol{\Theta}^{(t)}$ are fixed at one, while the diagonal entries of $\boldsymbol{\Gamma}^{(k,t)}$ coincide with those of $\boldsymbol{\Sigma}^{(k,t)}$ and are therefore bounded. Hence $\boldsymbol{\Theta}^{(t)}$, $\boldsymbol{\Gamma}^{(k,t)}$, $\boldsymbol{\Sigma}^{(k,t)}$, and $\boldsymbol{\mu}^{(k,t)}$ are bounded.

        Since $\boldsymbol{W}^{(t)}$ only takes values from a finite set of 0-1 matrices, by the Bolzano--Weierstrass theorem there exists a convergent subsequence $(\boldsymbol{\Omega}^{(t_r)},\boldsymbol{W}^{(t_r)}) \to (\boldsymbol{\Omega}^*,\boldsymbol{W}^*)$, where $t_r \in \mathbb{N}^+$ is the index of the subsequence.  By the same sufficient-decrease argument as in the proof of Theorem~\ref{thm:jointedfastmcd_converge}, with $S_{\mathrm{case}}^{(k,t)}$ replaced by $S_{\mathrm{cell}}^{(k,t)}$, the accepted proximal gradient updates imply
        \begin{equation*}
            \sum_t \|\boldsymbol{\Gamma}^{(k,t+1)}-\boldsymbol{\Gamma}^{(k,t)}\|_F^2<\infty,
            \quad
            \sum_t \|\boldsymbol{\Theta}^{(t+1)}-\boldsymbol{\Theta}^{(t)}\|_F^2<\infty.
        \end{equation*}
        Therefore,
        \begin{equation*}
            \boldsymbol{\Gamma}^{(k,t+1)}-\boldsymbol{\Gamma}^{(k,t)}\to 0,
            \quad
            \boldsymbol{\Theta}^{(t+1)}-\boldsymbol{\Theta}^{(t)}\to 0.
        \end{equation*}
        The proximal optimality condition for the $\boldsymbol{\Gamma}^{(k)}$-update gives
        \begin{equation*}
            0 \in
            \nabla_{\boldsymbol{\Gamma}^{(k)}}Q_{\mathrm{smooth},\boldsymbol{\Gamma}}^{(k,t)}(\boldsymbol{\Gamma}^{(k,t)})
            +
            \frac{1}{\eta_{\boldsymbol{\Gamma},k}^{(t)}}
            \left(\boldsymbol{\Gamma}^{(k,t+1)}-\boldsymbol{\Gamma}^{(k,t)}\right)
            +
            \lambda_2\partial\|\boldsymbol{\Gamma}^{(k,t+1)}\|_{1,\mathrm{off}}.
        \end{equation*}
                Let $(\boldsymbol{\Omega}^{*},\boldsymbol{W}^{*})$ be an accumulation point satisfying the interior condition
        \begin{equation*}
            \lambda_{\min}(\boldsymbol{\Theta}^{*}\odot\boldsymbol{\Gamma}^{(k)*}) > a_k,\quad k=1,2,\ldots,K.
        \end{equation*}
        By the continuity of the minimum eigenvalue, the eigenvalue constraints are inactive in a sufficiently small neighborhood of this accumulation point. Hence, along the convergent subsequence, the accepted proximal gradient updates coincide locally with the unconstrained proximal gradient updates for the corresponding penalized block subproblems.

        Taking the limit along the convergent subsequence and using the continuity of the gradient, the vanishing of the scaled difference term, and the closedness of the subdifferential of the $\ell_1$ norm, we obtain
        \begin{equation*}
            0 \in
            \nabla_{\boldsymbol{\Gamma}^{(k)}}Q_{\mathrm{smooth}}^{(k)}(\boldsymbol{\Omega}^*\mid \boldsymbol{\Omega}^*,\boldsymbol{W}^*)
            +
            \lambda_2\partial\|\boldsymbol{\Gamma}^{(k)*}\|_{1,\mathrm{off}}.
        \end{equation*}
        Similarly,
        \begin{equation*}
            0 \in
            \nabla_{\boldsymbol{\Theta}}
            \sum_{k=1}^K Q_{\mathrm{smooth}}^{(k)}(\boldsymbol{\Omega}^*\mid \boldsymbol{\Omega}^*,\boldsymbol{W}^*)
            +
            \lambda_1\partial\|\boldsymbol{\Theta}^*\|_{1,\mathrm{off}}.
        \end{equation*}
        Since the $\boldsymbol{\mu}$-update is an exact minimization of the smooth EM surrogate,
        \begin{equation*}
            0=
            \nabla_{\boldsymbol{\mu}^{(k)}}Q_{\mathrm{smooth}}(\boldsymbol{\Omega}^*\mid \boldsymbol{\Omega}^*,\boldsymbol{W}^*).
        \end{equation*}
        At $\boldsymbol{\Omega}=\boldsymbol{\Omega}^*$, the EM surrogate and the observed smooth objective have the same first-order derivatives with respect to the continuous variables. Hence the above conditions are the first-order stationarity conditions of the penalized observed objective with $\boldsymbol{W}^*$ fixed.
        Moreover, the $\boldsymbol{W}$-update is coordinatewise optimal for the discrete variables. Therefore, $\boldsymbol{W}^*$ is coordinatewise optimal given $\boldsymbol{\Omega}^*$, with ties interpreted according to the same tie-breaking convention used in the algorithm. Hence every interior accumulation point $(\boldsymbol{\Omega}^*,\boldsymbol{W}^*)$ is coordinatewise optimal with respect to $\boldsymbol{W}$ and stationary with respect to the continuous variables. This completes the proof of Theorem~\ref{thm:jointedcellmcd_converge}.
    \end{proof}

    \subsection{Proof of Theorem~\ref{thm:jointcellmcd_breakdown_point}}\label{apd:jointcellmcd_breakdown_point}

    \begin{proof}
        First, we prove the existence of a clean benchmark with a finite objective value. Suppose that the number of contaminated cells in each column of each group does not exceed $n_k - h_k$. Then, in every column of group $k$, at least $h_k$ cells remain uncontaminated. Hence, for each group $k$, we can construct a feasible benchmark mask $\boldsymbol{W}_0^{(k)}$ by setting all contaminated cells to $0$ and all uncontaminated cells to $1$. This mask satisfies $\|\boldsymbol{W}_{0,\cdot j}^{(k)}\|_0 \ge h_k$, $j=1,2,\ldots,q$, $k=1,2,\ldots,K$. Choose a fixed feasible joint covariance parameter, for instance $\boldsymbol{\Theta}_0=\boldsymbol{I}_q$, $\boldsymbol{\Gamma}_0^{(k)}=c_k\boldsymbol{I}_q$, $c_k\ge a_k$, $\boldsymbol{\Sigma}_0^{(k)} = \boldsymbol{\Theta}_0\odot \boldsymbol{\Gamma}_0^{(k)} = c_k\boldsymbol{I}_q$. Also take any finite location vector $\boldsymbol{\mu}_0^{(k)}$, for example $\boldsymbol{\mu}_0^{(k)}=\boldsymbol{0}$. Since the benchmark mask retains only clean cells, the observed likelihood terms evaluated at
        $(\{\boldsymbol{\mu}_0^{(k)}\}_{k=1}^K,\boldsymbol{\Theta}_0,\{\boldsymbol{\Gamma}_0^{(k)}\}_{k=1}^K,\{\boldsymbol{W}_0^{(k)}\}_{k=1}^K)$ depend only on the original clean data and on fixed constants. Moreover, $\mathbf{P}(\boldsymbol{\Theta}_0,\{\boldsymbol{\Gamma}_0^{(k)}\}_{k=1}^{K})<\infty$ and the cellwise penalties are finite for the fixed nonnegative constants $b_j^{(k)}$. Therefore, there exists a constant $M<\infty$, independent of the magnitudes of the contaminated cells, such that the optimal joint cellwise objective value is at most $M$. This is because the objective function value corresponding to the optimal solution will not exceed the objective function value corresponding to the above feasible solutions. Thus, the required finite benchmark is provided.

        Second, we prove the scatter implosion breakdown point. Since every feasible solution of the joint cellwise estimator satisfies $\lambda_{\min}\left(\widehat{\boldsymbol{\Sigma}}^{(k)}\right)\ge a_k>0$, $k=1,2,\ldots,K$, the scatter matrix of any group cannot implode for any admissible contaminated sample.
        Hence,
        \[
            \varepsilon_{\mathrm{cell},-}(\widehat{\boldsymbol{\Omega}}, \boldsymbol{Z}^{\star}) = 1.
        \]

        Third, we prove the scatter explosion breakdown point by contradiction. Suppose that there exists a sequence of contaminated samples, with at most $n_k-h_k$ contaminated cells in each column of each group, such that
        \[
            \lambda_{\max}\left(\widehat{\boldsymbol{\Sigma}}^{(k_0)}\right)\to\infty
        \]
        for some group $k_0$. For the optimal cellwise mask in this group, define $A_i=\left\{j:\widehat w_{ij}^{(k_0)}=1\right\}$, $q_i = |A_i|$, and write $\widehat{\boldsymbol{\Sigma}}_{A_i,A_i}^{(k_0)}$ for the corresponding principal submatrix. When $A_i=\varnothing$, we use the standard convention $\ln|\widehat{\boldsymbol{\Sigma}}_{A_i,A_i}^{(k_0)}|=0$.
        Let $j_* \in \arg\max_{1\le j\le q} \hat{\sigma}_{jj}^{(k_0)}$. Since $\widehat{\boldsymbol{\Sigma}}^{(k_0)}$ is positive semidefinite,
        \[
            \lambda_{\max}\left(\widehat{\boldsymbol{\Sigma}}^{(k_0)}\right)
            \le
            q\max_{1\le j\le q}
            \hat{\sigma}_{jj}^{(k_0)}.
        \]
        Hence,
        \[
            \hat{\sigma}_{j_*j_*}^{(k_0)}
            \ge
            \frac{
            \lambda_{\max}\left(\widehat{\boldsymbol{\Sigma}}^{(k_0)}\right)
            }{q}.
        \]
        By the feasibility constraint
        $\|\widehat{\boldsymbol{W}}_{\cdot j_*}^{(k_0)}\|_0\ge h_{k_0}$,
        there exists at least one row $i_*$ such that
        $\widehat w_{i_*j_*}^{(k_0)}=1$, or equivalently $j_*\in A_{i_*}$. Therefore,
        \[
            \lambda_{\max}
            \left(
            \widehat{\boldsymbol{\Sigma}}_{A_{i_*},A_{i_*}}^{(k_0)}
            \right)
            \ge
            \hat{\sigma}_{j_*j_*}^{(k_0)}
            \ge
            \frac{
            \lambda_{\max}\left(\widehat{\boldsymbol{\Sigma}}^{(k_0)}\right)
            }{q}.
        \]
        Moreover, by the Cauchy interlacing theorem,
        \[
            \lambda_{\min}
            \left(
            \widehat{\boldsymbol{\Sigma}}_{A_i,A_i}^{(k_0)}
            \right)
            \ge
            \lambda_{\min}
            \left(
            \widehat{\boldsymbol{\Sigma}}^{(k_0)}
            \right)
            \ge a_{k_0}
        \]
        whenever $A_i\ne\varnothing$. Thus,
        \[
            \begin{aligned}
                \ln\left|
                \widehat{\boldsymbol{\Sigma}}_{A_{i_*},A_{i_*}}^{(k_0)}
                \right|
                &=
                \sum_{l=1}^{q_{i_*}}
                \ln
                \lambda_l
                \left(
                \widehat{\boldsymbol{\Sigma}}_{A_{i_*},A_{i_*}}^{(k_0)}
                \right)\\
                &\ge
                \ln
                \lambda_{\max}
                \left(
                \widehat{\boldsymbol{\Sigma}}_{A_{i_*},A_{i_*}}^{(k_0)}
                \right)
                +
                (q_{i_*}-1)\ln a_{k_0}\\
                &\ge
                \ln
                \frac{
                \lambda_{\max}
                \left(
                \widehat{\boldsymbol{\Sigma}}^{(k_0)}
                \right)
                }{q}
                +
                (q_{i_*}-1)\ln a_{k_0}.
            \end{aligned}
        \]
        For every other row $i$, we also have
        \[
            \ln\left|
            \widehat{\boldsymbol{\Sigma}}_{A_i,A_i}^{(k_0)}
            \right|
            \ge
            q_i\ln a_{k_0}
        \]
        with the above convention when $q_i=0$. Consequently, for a finite constant
        \[
            C_{k_0}
            =
            (n_{k_0}q-1)\min\{\ln a_{k_0},0\},
        \]
        depending only on $n_{k_0}$, $q$, and $a_{k_0}$, we obtain
        \[
            \sum_{i=1}^{n_{k_0}}
            \ln
            \left|
            \widehat{\boldsymbol{\Sigma}}_{A_i,A_i}^{(k_0)}
            \right|
            \ge
            \ln
            \frac{
            \lambda_{\max}
            \left(
            \widehat{\boldsymbol{\Sigma}}^{(k_0)}
            \right)
            }{q}
            +
            C_{k_0}.
        \]
        Since the partial Mahalanobis distances, the cellwise penalty, and the $\ell_1$ penalty are all nonnegative, the whole joint cellwise objective tends to $+\infty$ if
        $\lambda_{\max}(\widehat{\boldsymbol{\Sigma}}^{(k_0)})\to\infty$. This contradicts the finite benchmark $M$ established in the first step. Therefore,
        \[
            \varepsilon_{\mathrm{cell},+}(\widehat{\boldsymbol{\Omega}}, \boldsymbol{Z}^{\star}) \ge \beta_0.
        \]

        Finally, we prove the location breakdown point. Suppose again that the number of contaminated cells per column in each group does not exceed $n_k-h_k$. Consider a sequence of such contaminated samples and assume, for contradiction, that
        \[
            \left\|\widehat{\boldsymbol{\mu}}^{(k_0)}\right\|_2\to\infty
        \]
        for some group $k_0$. From the scatter explosion result proved above, along this sequence
        $\lambda_{\max}(\widehat{\boldsymbol{\Sigma}}^{(k_0)})$ cannot diverge. Hence, there exists a finite constant $C_\Sigma$ such that
        \[
            \lambda_{\max}
            \left(
            \widehat{\boldsymbol{\Sigma}}^{(k_0)}
            \right)
            \le C_\Sigma
        \]
        along the sequence. Let $M_0$ be a finite constant such that the absolute values of all clean cells in group $k_0$ are bounded by $M_0$.

        For each coordinate $j$, at least $h_{k_0}$ cells in column $j$ are clean, while every feasible mask retains at least $h_{k_0}$ cells in the same column. Since $h_{k_0}>n_{k_0}/2$, these two sets must intersect. Thus, for each $j=1,2,\ldots,q$, there exists an index $i(j)$ such that the cell $z_{i(j)j}^{(k_0)}$ is both clean and retained. Therefore,
        \[
            \left|z_{i(j)j}^{\star(k_0)}\right|\le M_0,
            \quad
            \widehat w_{i(j)j}^{(k_0)}=1.
        \]
        Using the lower bound on the partial Mahalanobis terms, we have
        \[
            \begin{aligned}
                \sum_{i=1}^{n_{k_0}}
                \mathrm{MD}^2
                \left(
                \boldsymbol{z}_{i,A_i}^{\star(k_0)},
                \widehat{\boldsymbol{\mu}}_{A_i}^{(k_0)},
                \widehat{\boldsymbol{\Sigma}}_{A_i,A_i}^{(k_0)}
                \right)
                &\ge
                \frac{1}{C_\Sigma}
                \sum_{i=1}^{n_{k_0}}
                \left\|
                \boldsymbol{z}_{i,A_i}^{\star(k_0)}
                -
                \widehat{\boldsymbol{\mu}}_{A_i}^{(k_0)}
                \right\|_2^2\\
                &\ge
                \frac{1}{C_\Sigma}
                \sum_{j=1}^{q}
                \left(
                z_{i(j)j}^{\star(k_0)}
                -
                \widehat{\mu}_{j}^{(k_0)}
                \right)^2\\
                &\ge
                \frac{1}{C_\Sigma}
                \left(
                \frac{1}{2}
                \left\|
                \widehat{\boldsymbol{\mu}}^{(k_0)}
                \right\|_2^2
                -
                qM_0^2
                \right).
            \end{aligned}
        \]
        The log-determinant terms are bounded from below by a finite constant because
        $\lambda_{\min}(\widehat{\boldsymbol{\Sigma}}^{(k_0)})\ge a_{k_0}$. Hence, if
        $\|\widehat{\boldsymbol{\mu}}^{(k_0)}\|_2\to\infty$, the joint cellwise objective must tend to $+\infty$, contradicting the finite benchmark $M$. Therefore,
        \[
            \varepsilon_{\mathrm{cell},\boldsymbol{\mu}}^*(\widehat{\boldsymbol{\Omega}}, \boldsymbol{Z}^{\star}) \ge \beta_0.
        \]

        To show that the common lower bound is sharp, consider
        \[
            k_* \in \arg\min_k \frac{n_k - h_k + 1}{n_k}.
        \]
        In this group, fix a column, say $j_*=1$, replace exactly $n_{k_*}-h_{k_*}+1$ cells in this column by a value $c$, and let $c\to\infty$, leaving all other cells unchanged. Let $\mathcal{I}$ be the set of rows whose first-column cell is contaminated. Since
        $|\mathcal{I}|=n_{k_*}-h_{k_*}+1$, any feasible mask satisfying
        $\|\boldsymbol{W}_{\cdot 1}^{(k_*)}\|_0\ge h_{k_*}$ must retain at least one contaminated $c$-cell in the first column, that is,
        \[
            \#\left(\left\{i:\widehat w_{i1}^{(k_*)}=1\right\}\cap\mathcal{I}\right)\ge 1.
        \]
        At convergence of the EM $\boldsymbol{\mu}$-block, the first coordinate satisfies
        \[
            \begin{aligned}
                \widehat{\mu}_{1}^{(k_*)}
                &=
                \frac{1}{n_{k_*}}
                \sum_{i=1}^{n_{k_*}}
                \widehat m_{i1}^{(k_*)}\\
                &=
                \frac{1}{n_{k_*}}
                \sum_{i:\widehat w_{i1}^{(k_*)}=1}
                \tilde{z}_{i1}^{(k_*)} 
                +
                \frac{1}{n_{k_*}}
                \sum_{i:\widehat w_{i1}^{(k_*)}=0}
                \left[
                \widehat{\mu}_{1}^{(k_*)}
                +
                \widehat{\boldsymbol{\Sigma}}_{1,\mathcal{O}_i}^{(k_*)}
                \left(
                \widehat{\boldsymbol{\Sigma}}_{\mathcal{O}_i,\mathcal{O}_i}^{(k_*)}
                \right)^{-1}
                \left(
                    \tilde{\boldsymbol{z}}_{i,\mathcal{O}_i}^{(k_*)}
                -
                \widehat{\boldsymbol{\mu}}_{\mathcal{O}_i}^{(k_*)}
                \right)
                \right]\\
                &=
                \frac{c}{n_{k_*}}
                \#
                \left(
                \left\{i:\widehat w_{i1}^{(k_*)}=1\right\}\cap\mathcal{I}
                \right)
                +
                \frac{1}{n_{k_*}}
                \sum_{\{i:\widehat w_{i1}^{(k_*)}=1\}\cap\mathcal{I}^c}
                z_{i1}^{\star(k_*)}\\
                &\quad+
                \frac{1}{n_{k_*}}
                \sum_{i:\widehat w_{i1}^{(k_*)}=0}
                \left[
                \widehat{\mu}_{1}^{(k_*)}
                +
                \widehat{\boldsymbol{\Sigma}}_{1,\mathcal{O}_i}^{(k_*)}
                \left(
                \widehat{\boldsymbol{\Sigma}}_{\mathcal{O}_i,\mathcal{O}_i}^{(k_*)}
                \right)^{-1}
                \left(
                \boldsymbol{z}_{i,\mathcal{O}_i}^{\star(k_*)}
                -
                \widehat{\boldsymbol{\mu}}_{\mathcal{O}_i}^{(k_*)}
                \right)
                \right],
            \end{aligned}
        \]
        where $\mathcal{O}_i=\{j:\widehat w_{ij}^{(k_*)}=1,\ j\ne 1\}$, and the usual empty-set convention is used if $\mathcal{O}_i=\varnothing$. If both
        $\widehat{\boldsymbol{\mu}}^{(k_*)}$ and
        $\widehat{\boldsymbol{\Sigma}}^{(k_*)}$ remained bounded as $c\to\infty$, then the left-hand side and all terms on the right-hand side except the first one would remain bounded. The inverse submatrices are bounded because
        $\lambda_{\min}(\widehat{\boldsymbol{\Sigma}}^{(k_*)})\ge a_{k_*}$. However, the first term on the right-hand side diverges to $+\infty$, a contradiction. Therefore, at contamination fraction
        $(n_{k_*}-h_{k_*}+1)/n_{k_*}=\beta_0$, either the location or the scatter matrix, or both, must break down. This proves that the common lower bound $\beta_0$ for scatter explosion and location breakdown is sharp.
    \end{proof}

    \subsection{Proof of Theorem~\ref{thm:jointfastmcd_breakdown_point}}\label{apd:jointfastmcd_breakdown_point}

    \begin{proof}
        The proof runs in parallel with the cellwise case, replacing the cellwise masks $\boldsymbol{W}^{(k)}$ with rowwise weights $w_i^{(k)}$. First, we prove the existence of a clean benchmark with a finite objective value. Suppose that the number of casewise contaminated rows in group $k$ does not exceed $n_k-h_k$. Then there exists a subset $\boldsymbol{H}_0^{(k)}$ of size $h_k$ consisting entirely of clean rows. Choose, for example,
        \[
            \boldsymbol{\Theta}_0=\boldsymbol{I}_q,
            \quad
            \boldsymbol{\Gamma}_0^{(k)}=c_k\boldsymbol{I}_q,
            \quad
            c_k\ge a_k,
            \quad
            \boldsymbol{\Sigma}_0^{(k)}
            =
            \boldsymbol{\Theta}_0\odot\boldsymbol{\Gamma}_0^{(k)}
            =
            c_k\boldsymbol{I}_q,
        \]
        and
        \[
            \boldsymbol{\mu}_0^{(k)}
            =
            \frac{1}{h_k}
            \sum_{i\in \boldsymbol{H}_0^{(k)}}
            \boldsymbol{z}_i^{\star(k)}.
        \]
        Evaluating the joint casewise objective at
        $(\{\boldsymbol{\mu}_0^{(k)}\}_{k=1}^K,\boldsymbol{\Theta}_0,\{\boldsymbol{\Gamma}_0^{(k)}\}_{k=1}^K,\{\boldsymbol{H}_0^{(k)}\}_{k=1}^K)$
        gives a finite value depending only on the clean data and fixed constants, not on the magnitudes of the contaminated rows. Also,
        $\mathbf{P}(\boldsymbol{\Theta}_0,\{\boldsymbol{\Gamma}_0^{(k)}\}_{k=1}^{K})<\infty$. Hence, there exists a finite constant $M<\infty$ such that, under at most $n_k-h_k$ contaminated rows in every group, the optimal joint casewise objective value is at most $M$.

        Second, the implosion breakdown value equals $1$. Indeed, every feasible solution satisfies
        \[
            \lambda_{\min}\left(\widehat{\boldsymbol{\Sigma}}^{(k)}\right)\ge a_k>0,
            \quad k=1,2,\ldots,K.
        \]
        Therefore, no scatter matrix can implode. Under the standard convention that a breakdown fraction is reported within $[0,1]$, this gives
        \[
            \delta_{\mathrm{case},-}(\widehat{\boldsymbol{\Omega}}, \boldsymbol{Z}^{\star})=1.
        \]

        Third, we prove the scatter explosion breakdown point. Suppose, for the sake of contradiction, that there exists a sequence of contaminated samples, with at most $n_k-h_k$ contaminated rows in each group, such that
        \[
            \lambda_{\max}
            \left(
            \widehat{\boldsymbol{\Sigma}}^{(k_0)}
            \right)
            \to\infty
        \]
        for some group $k_0$. Since
        $\lambda_{\min}(\widehat{\boldsymbol{\Sigma}}^{(k_0)})\ge a_{k_0}$, we have
        \[
            \begin{aligned}
                \ln
                \left|
                \widehat{\boldsymbol{\Sigma}}^{(k_0)}
                \right|
                =
                \sum_{j=1}^{q}
                \ln
                \lambda_j
                \left(
                \widehat{\boldsymbol{\Sigma}}^{(k_0)}
                \right)
                \ge
                \ln
                \lambda_{\max}
                \left(
                \widehat{\boldsymbol{\Sigma}}^{(k_0)}
                \right)
                +
                (q-1)\ln a_{k_0}.
            \end{aligned}
        \]
        Therefore, the log-determinant contribution of group $k_0$ in the joint casewise objective is bounded from below by
        \[
            h_{k_0}
            \left[
            \ln
            \lambda_{\max}
            \left(
            \widehat{\boldsymbol{\Sigma}}^{(k_0)}
            \right)
            +
            (q-1)\ln a_{k_0}
            \right].
        \]
        Since the Mahalanobis terms and the $\ell_1$ penalty are nonnegative, the joint casewise objective tends to $+\infty$ as
        $\lambda_{\max}(\widehat{\boldsymbol{\Sigma}}^{(k_0)})\to\infty$. This contradicts the finite benchmark $M$ established in the first step. Hence,
        \[
            \delta_{\mathrm{case},+}(\widehat{\boldsymbol{\Omega}}, \boldsymbol{Z}^{\star}) \ge \beta_0.
        \]

        Finally, we prove the location breakdown point. Suppose, for contradiction, that under at most $n_k-h_k$ contaminated rows in each group, there exists a sequence of contaminated samples such that
        \[
            \left\|
            \widehat{\boldsymbol{\mu}}^{(k_0)}
            \right\|_2
            \to\infty
        \]
        for some group $k_0$. From the scatter explosion result proved above,
        $\lambda_{\max}(\widehat{\boldsymbol{\Sigma}}^{(k_0)})$ cannot diverge along this sequence. Hence, there exists a finite constant $C_\Sigma$ such that
        \[
            \lambda_{\max}
            \left(
            \widehat{\boldsymbol{\Sigma}}^{(k_0)}
            \right)
            \le C_\Sigma.
        \]
        Any feasible subset of size $h_{k_0}$ must contain at least one clean row, because at most $n_{k_0}-h_{k_0}$ rows are contaminated and $h_{k_0}>n_{k_0}/2$. Let $\boldsymbol{z}_{i_0}^{\star(k_0)}$ be such a retained clean row, and let
        \[
            R_0=\max_{i}
            \left\|
            \boldsymbol{z}_{i}^{\star(k_0)}
            \right\|_2
        \]
        over the clean rows of group $k_0$. Then
        \[
            \left\|
            \boldsymbol{z}_{i_0}^{\star(k_0)}
            \right\|_2
            \le R_0.
        \]
        The Mahalanobis term corresponding to this retained clean row satisfies
        \[
            \begin{aligned}
                \left(
                \boldsymbol{z}_{i_0}^{\star(k_0)}
                -
                \widehat{\boldsymbol{\mu}}^{(k_0)}
                \right)^{\top}
                \left(
                \widehat{\boldsymbol{\Sigma}}^{(k_0)}
                \right)^{-1}
                \left(
                \boldsymbol{z}_{i_0}^{\star(k_0)}
                -
                \widehat{\boldsymbol{\mu}}^{(k_0)}
                \right)
                \ge
                \frac{
                \left(
                \left\|
                \widehat{\boldsymbol{\mu}}^{(k_0)}
                \right\|_2
                -
                R_0
                \right)^2
                }{
                C_\Sigma
                }.
            \end{aligned}
        \]
        Thus, if
        $\|\widehat{\boldsymbol{\mu}}^{(k_0)}\|_2\to\infty$, the joint casewise objective must tend to $+\infty$, contradicting the finite benchmark $M$. Therefore,
        \[
            \delta_{\mathrm{case},\boldsymbol{\mu}}^*(\widehat{\boldsymbol{\Omega}}, \boldsymbol{Z}^{\star}) \ge \beta_0.
        \]

        To show that the common lower bound is sharp, consider the most vulnerable group
        \[
            k_* \in \arg\min_k \frac{n_k-h_k+1}{n_k}.
        \]
        In this group, replace the first coordinate of exactly $n_{k_*}-h_{k_*}+1$ rows by a value $c$, and let $c\to\infty$, leaving all other entries unchanged. Let $\mathcal{I}$ be the set of contaminated rows. Since
        $|\mathcal{I}|=n_{k_*}-h_{k_*}+1$, any feasible subset of size $h_{k_*}$ must contain at least one row from $\mathcal{I}$. At any exact minimizer, the $\boldsymbol{\mu}$-block satisfies the weighted mean equation
        \[
            \widehat{\boldsymbol{\mu}}^{(k_*)}
            =
            \frac{1}{h_{k_*}}
            \sum_{i=1}^{n_{k_*}}
            \widehat w_i^{(k_*)}
            \boldsymbol{z}_i^{\star(k_*)}.
        \]
        Hence, for the first coordinate,
        \[
            \widehat{\mu}_1^{(k_*)}
            =
            \frac{c}{h_{k_*}}
            \#
            \left(
            \left\{i:\widehat w_i^{(k_*)}=1\right\}
            \cap
            \mathcal{I}
            \right)
            +
            \frac{1}{h_{k_*}}
            \sum_{\{i:\widehat w_i^{(k_*)}=1\}\cap\mathcal{I}^c}
            z_{i1}^{\star(k_*)}.
        \]
        The first term on the right-hand side diverges to $+\infty$, while the second term is bounded because it only involves clean data. Therefore, at contamination fraction
        $(n_{k_*}-h_{k_*}+1)/n_{k_*}=\beta_0$, either the location estimator diverges, or the scatter estimator has already broken down. This proves that the common lower bound $\beta_0$ is sharp.
    \end{proof}

    \section{Simulation studies}\label{app:simulation}

    We conducted a comprehensive set of simulation studies to evaluate the proposed joint estimation algorithm, and compared its performance with the non-joint MCD \citep{hubert2010minimum,rousseeuw1999fast} and cellMCD \citep{raymaekers2024cellwise} methods. For each configuration, the simulation was repeated 100 times, and the reported results are the means and standard deviations of the evaluation metrics over these 100 replications. 

    For the joint method, in each simulation replication we generated $K$ groups of $q$-dimensional observations, where the $k$-th group contained $n_k$ samples. Clean observations were generated from a multivariate normal distribution $\mathcal{N}_q(\boldsymbol{\mu}^{(k)},\boldsymbol{\Sigma}^{(k)})$. The group mean vectors were set as $\boldsymbol{\mu}^{(k)} = (k-1)\Delta \cdot \mathbf{1}_q$, where $\Delta$ controls the mean shift across groups. The covariance matrices followed $\boldsymbol{\Sigma}^{(k)} = \boldsymbol{\Theta} \odot \boldsymbol{\Gamma}^{(k)}$, where $\boldsymbol{\Theta}$ denotes the correlation structure shared by all groups, $\boldsymbol{\Gamma}^{(k)}$ is a group-specific positive definite matrix for the $k$-th group, and $\odot$ denotes the Hadamard product. The baseline correlation structures for $\boldsymbol{\Theta}$ and $\boldsymbol{\Gamma}^{(k)}$ were generated by two mechanisms: one was an AR(1)-type correlation structure $\rho^{|i-j|}$ with $\rho = 0.9$, $1\leq i,j \leq q$, referred to as setting A09; the other was a random correlation matrix with a prespecified condition number of 100, referred to as setting ALYZ \citep{agostinelli2015robust}. Additional coordinate-wise scale perturbations were then introduced into $\boldsymbol{\Gamma}^{(k)}$ to increase heterogeneity across groups.

    We considered two contamination schemes, namely casewise contamination and cellwise contamination. In the casewise setting, for each group we randomly selected a proportion $\varepsilon_{\mathrm{out}}$ of entire observations and replaced them with outliers located along the eigenvector corresponding to the smallest eigenvalue of $\boldsymbol{\Sigma}^{(k)}$, with a fixed Mahalanobis magnitude relative to $\boldsymbol{\mu}^{(k)}$. In the cellwise setting, for each variable we independently contaminated a proportion $\varepsilon_{\mathrm{out}}$ of cells, and imposed anomalous shifts along the smallest-eigenvalue direction within the corresponding subspace. During estimation, the trimming subset size for the $k$-th group was set to $h_k = \max\{\lfloor (1-\varepsilon_{\mathrm{out}})n_k \rfloor,\; q+1\}$. 

    Specifically, we set $K=5$, $q \in \{70, 20\}$, $n_k \in \{83, 100, 500\}$, $\Delta = 2$, and $\varepsilon_{\mathrm{out}} \in \{0.25,0.5\}$. The detailed combinations are shown in Table~\ref{tab:sim_results_est}. For each configuration, we summarized across independent replications the mean squared errors (MSE) of the mean vectors and the relative errors (RE) of covariance matrices, and also reported the Kullback-Leibler (KL) discrepancy for covariance estimation.
    For non-joint competitors, separate estimations were performed using \texttt{covMCD} from \texttt{robustbase} \citep{martin2026robustbase,valentin2009robustbase} for casewise contamination, and \texttt{cellMCD} from \texttt{cellWise} \citep{raymaekers2024cellwise} for cellwise contamination. Table~\ref{tab:sim_results_est} shows the proposed joint framework's advantage heavily depends on the contamination level and mode.

    \begin{table}[H]
        \centering
        \caption{Simulation results under different settings. ``-'' indicates that the metric exceeds $10^6$. The values in parentheses represent the standard deviation of the 100 replications. $\downarrow$ means the smaller the better.}
        \label{tab:sim_results_est}
        \resizebox{\textwidth}{!}{%
            \begin{tabular}{c|cccc|cccccc}
                \hline
                \multirow{2}{*}{Mode} & \multirow{2}{*}{$k$} & \multirow{2}{*}{$n_k$} & \multirow{2}{*}{$\varepsilon_{\mathrm{out}}$} & \multirow{2}{*}{$q$} & \multicolumn{2}{c}{MSE($\boldsymbol{\mu}$) $\downarrow$} & \multicolumn{2}{c}{KL($\boldsymbol{\Sigma}$) $\downarrow$} & \multicolumn{2}{c}{RE($\boldsymbol{\Sigma}$) $\downarrow$} \\
                \cline{6-7} \cline{8-9} \cline{10-11}
                                      &   &      &   &     & Joint & Non-joint & Joint & Non-joint & Joint & Non-joint \\
                                      \hline
                \multirow{15}{*}{casewise} & 1 & 500 & 0.500 & 70 & 0.067 (0.087) & \best{0.004 (0.001)} & 2053.184 (7.963) & \best{32.368 (20.637)} & 112.888 (6.122) & \best{0.567 (0.020)} \\
                                           & 2 & 500 & 0.500 & 70 & 0.067 (0.114) & \best{0.004 (0.001)} & 2053.486 (10.050) & \best{37.130 (22.755)} & 113.360 (5.760) & \best{0.569 (0.018)} \\
                                           & 3 & 500 & 0.500 & 70 & 0.076 (0.115) & \best{0.004 (0.001)} & 2052.545 (9.889) & \best{34.097 (18.656)} & 113.080 (5.571) & \best{0.569 (0.021)} \\
                                           & 4 & 500 & 0.500 & 70 & 0.058 (0.089) & \best{0.019 (0.003)} & \best{2054.223 (7.897)} & 64787.089 (6307.088) & 112.871 (6.092) & \best{1.006 (0.040)} \\
                                           & 5 & 500 & 0.500 & 70 & 0.073 (0.116) & \best{0.052 (0.080)} & \best{2052.555 (10.530)} & - & \best{112.747 (6.004)} & 129.392 (6.887) \\
                                           \cline{2-11}
                                           & 1 & 500 & 0.250 & 70 & \best{0.004 (0.001)} & 0.026 (0.009) & \best{6.933 (0.199)} & 78.172 (133.667) & \best{0.412 (0.008)} & 0.916 (0.085) \\
                                           & 2 & 500 & 0.250 & 70 & \best{0.004 (0.001)} & 0.031 (0.010) & \best{6.936 (0.210)} & 575.151 (879.586) & \best{0.413 (0.007)} & 0.991 (0.109) \\
                                           & 3 & 500 & 0.250 & 70 & \best{0.004 (0.001)} & 0.067 (0.087) & \best{6.947 (0.214)} & - & \best{0.413 (0.007)} & 75.739 (17.417) \\
                                           & 4 & 500 & 0.250 & 70 & \best{0.004 (0.001)} & 0.066 (0.090) & \best{6.886 (0.195)} & 576.000 (136.053) & \best{0.411 (0.007)} & 67.064 (3.762) \\
                                           & 5 & 100 & 0.250 & 70 & \best{0.018 (0.003)} & 0.063 (0.084) & \best{58.688 (1.642)} & 600.097 (164.398) & \best{0.936 (0.023)} & 67.766 (3.720) \\
                                           \cline{2-11}
                                           & 1 & 100 & 0.250 & 20 & \best{0.019 (0.007)} & 0.110 (0.126) & \best{3.160 (0.291)} & 123.034 (88.941) & \best{0.488 (0.030)} & 17.695 (2.431) \\
                                           & 2 & 100 & 0.250 & 20 & \best{0.018 (0.006)} & 0.102 (0.134) & \best{3.150 (0.297)} & 135.536 (119.664) & \best{0.488 (0.034)} & 17.592 (2.468) \\
                                           & 3 & 100 & 0.250 & 20 & \best{0.018 (0.006)} & 0.120 (0.134) & \best{3.114 (0.310)} & 118.094 (86.670) & \best{0.486 (0.033)} & 17.911 (2.318) \\
                                           & 4 & 100 & 0.250 & 20 & \best{0.018 (0.005)} & 0.106 (0.109) & \best{3.074 (0.298)} & 139.030 (89.511) & \best{0.483 (0.032)} & 17.442 (2.560) \\
                                           & 5 & 100 & 0.250 & 20 & \best{0.019 (0.006)} & 0.110 (0.129) & \best{3.123 (0.331)} & 121.324 (75.300) & \best{0.485 (0.033)} & 17.872 (2.467) \\
                                           \hline
                \multirow{15}{*}{cellwise} & 1 & 500 & 0.500 & 70 & \best{0.004 (0.001)} & 0.052 (0.067) & \best{35.450 (9.590)} & - & \best{0.884 (0.203)} & 129.312 (7.089) \\
                                           & 2 & 500 & 0.500 & 70 & \best{0.004 (0.001)} & 0.051 (0.086) & \best{35.181 (8.702)} & - & \best{0.863 (0.181)} & 129.760 (6.841) \\
                                           & 3 & 500 & 0.500 & 70 & \best{0.004 (0.001)} & 0.058 (0.086) & \best{35.135 (8.327)} & - & \best{0.881 (0.187)} & 129.531 (6.701) \\
                                           & 4 & 500 & 0.500 & 70 & \best{0.004 (0.001)} & 0.045 (0.068) & \best{35.586 (9.128)} & - & \best{0.879 (0.197)} & 129.240 (6.917) \\
                                           & 5 & 100 & 0.500 & 70 & \best{0.027 (0.012)} & 0.057 (0.089) & \best{102.690 (26.680)} & - & \best{2.371 (1.322)} & 129.114 (7.002) \\
                                           \cline{2-11}
                                           & 1 & 500 & 0.250 & 70 & \best{0.003 (0.001)} & 0.059 (0.069) & \best{14.332 (0.808)} & 576.208 (157.316) & \best{0.474 (0.010)} & 67.310 (3.780) \\
                                           & 2 & 500 & 0.250 & 70 & \best{0.003 (0.001)} & 0.063 (0.106) & \best{14.441 (0.764)} & 583.249 (182.543) & \best{0.474 (0.010)} & 67.110 (4.034) \\
                                           & 3 & 500 & 0.250 & 70 & \best{0.003 (0.001)} & 0.083 (0.079) & \best{14.434 (0.767)} & - & \best{0.475 (0.009)} & 109.445 (5.974) \\
                                           & 4 & 500 & 0.250 & 70 & \best{0.003 (0.001)} & 0.007 (0.006) & \best{14.367 (0.797)} & 12984.516 (26078.985) & \best{0.473 (0.010)} & 0.656 (0.177) \\
                                           & 5 & 100 & 0.250 & 70 & 0.015 (0.003) & \best{0.004 (0.001)} & 55.587 (2.579) & \best{31.899 (18.133)} & 0.733 (0.016) & \best{0.570 (0.020)} \\
                                           \cline{2-11}
                                           & 1 & 100 & 0.250 & 20 & \best{0.027 (0.009)} & 0.114 (0.124) & \best{7.399 (1.149)} & 101.185 (58.588) & \best{0.693 (0.070)} & 17.660 (2.353) \\
                                           & 2 & 100 & 0.250 & 20 & \best{0.027 (0.007)} & 0.028 (0.009) & \best{7.380 (1.197)} & 178.290 (457.589) & \best{0.690 (0.071)} & 0.935 (0.096) \\
                                           & 3 & 100 & 0.250 & 20 & \best{0.026 (0.009)} & 0.027 (0.009) & \best{7.428 (1.153)} & 66.127 (114.598) & \best{0.681 (0.065)} & 0.923 (0.084) \\
                                           & 4 & 100 & 0.250 & 20 & \best{0.025 (0.009)} & 0.027 (0.009) & \best{7.369 (0.993)} & 93.969 (180.187) & \best{0.689 (0.064)} & 0.922 (0.084) \\
                                           & 5 & 83  & 0.250 & 20 & 0.031 (0.010) & \best{0.028 (0.008)} & \best{8.967 (1.378)} & 78.029 (137.850) & \best{0.720 (0.072)} & 0.920 (0.096) \\
                                           \hline
            \end{tabular}%
        }
    \end{table}

    Under a high outlier ratio ($\varepsilon_{\mathrm{out}}=0.500$) with casewise contamination, the joint method trades some estimation accuracy for enhanced stability, preventing the severe numerical breakdown often observed in the non-joint approach. Conversely, under a moderate outlier ratio ($\varepsilon_{\mathrm{out}}=0.250$), the benefits of the joint method are evident: it consistently achieves lower estimation errors across different dimensions in casewise scenarios and improves covariance estimation stability under cellwise contamination.

\end{appendix} 

\newpage
\justifying
\bibliography{bib/bib}
\bibliographystyle{elsarticle-num-names}

\end{document}